%% file: main.tex
\documentclass[11pt,letterpaper]{article}
\usepackage[T1]{fontenc}
\usepackage[utf8]{inputenc}
\usepackage{lmodern}
\usepackage[margin=1in]{geometry}
\usepackage{microtype}
\usepackage{graphicx}
\usepackage{float}
\usepackage{needspace}
\usepackage[authoryear,round]{natbib}
\setcitestyle{citesep={;},aysep={,},yysep={;}}

\usepackage[font=small,labelfont=bf,labelsep=period,skip=6pt]{caption}
\usepackage{enumitem}
\setlist[itemize]{topsep=5pt,itemsep=3pt,parsep=0pt}
\usepackage{amsmath}
\usepackage{amssymb}
\usepackage{mathtools}
\usepackage{amsthm}
\theoremstyle{plain}
\newtheorem{theorem}{Theorem}[section]
\newtheorem{proposition}[theorem]{Proposition}
\newtheorem{lemma}[theorem]{Lemma}

\theoremstyle{definition}
\newtheorem{definition}[theorem]{Definition}

\usepackage{booktabs}
\theoremstyle{remark}

\input{math_commands.tex}

\usepackage[unicode,hidelinks,hypertexnames=false]{hyperref}
\hypersetup{
  pdftitle={Mind the Drift: Diagonal Linear Networks Under Large Learning Rates},
  pdfauthor={Aniket Sanyal, Tom Jacobs, Rebekka Burkholz},
  pdfsubject={Research preprint},
  pdfkeywords={diagonal linear networks, edge of stability, implicit bias, gain, drift}
}
\usepackage{url}

\usepackage[section]{placeins}

\title{Mind the Drift:\\ Diagonal Linear Networks Under Large Learning Rates}

\author{
Aniket Sanyal\textsuperscript{1,2,3}\qquad
Tom Jacobs\textsuperscript{1}\qquad
Rebekka Burkholz\textsuperscript{1}\\[0.7em]
\small\textsuperscript{1}CISPA Helmholtz Center for Information Security\\
\small\textsuperscript{2}Technical University of Munich\\
\small\textsuperscript{3}Konrad Zuse School of Excellence in Learning\\
\small and Intelligent Systems (ELIZA)
}
\date{}

\DeclareMathOperator{\arsinh}{arcsinh}
\DeclareMathOperator{\diag}{diag}
\newcommand{\D}{\mathcal D}
\newcommand{\loss}{\mathcal L}
\begin{document}

\maketitle

\begin{abstract}

Large learning rates can qualitatively change the trajectory of neural network training, often pushing optimization into regimes far from classical gradient-flow behavior. The Edge of Stability (EoS) 
offers a valuable lens on the dynamics such learning rates induce. We study corresponding dynamics in diagonal linear networks, where we uncover a competition between two distinct implicit biases that jointly determine the sparsity of the recovered solution in regression settings. Complementary to the Gain, which captures the average discretization error accumulated by Gradient Descent relative to Gradient Flow, we derive a closely associated but overlooked quantity: the Drift. Under large learning rates, it describes an imbalance between different discretization errors and represents a systematic shift in the optimization trajectory. While the Gain grows monotonically in certain regimes, and can bias towards denser, flatter interpolators, the impact of the Drift depends on its alignment with potential solutions, which can either counteract or reinforce the effect of the Gain. Consequently, its behavior  drives model selection, particularly during early training epochs. To validate our theoretical insights, we introduce an intervention that actively steers the Gain to recover sharper, sparser solutions. Thus, our analysis reveals that large learning rates do not universally hinder the recovery of sparse solutions. On the contrary, they can be harnessed to control the implicit bias of training.
\end{abstract}

\section{Introduction}\label{section : introduction}
Large learning rates play an important role in the success of deep learning. In overparameterized models, where many solutions fit the training data, this raises a fundamental question: How does the learning rate influence which solution is selected? For small learning rates, gradient flow provides a useful approximation for understanding the implicit regularization governing this selection. At larger learning rates, however, training can deviate substantially from gradient-flow behavior. Progressive sharpening \citep{Damian2022SelfStabilizationTI} can bring training to the Edge of Stability \citep{Cohen2021GradientDO}, where sharpness fluctuates around the stability threshold and the loss exhibits persistent oscillations. Understanding how these dynamics affect implicit regularization at large learning rates, requires accounting for the discretization effects of gradient descent relative to gradient flow, that persist throughout training.

These persistent discretization effects can be explicitly characterized in diagonal linear networks. Under gradient flow, the initialization determines the implicit regularizer, which approaches an $\ell_1$-bias at small initialization scales and an $\ell_2$-bias at large scales \citep{woodworth2020kernel}. For discrete training, \cite{Even2023SGDOD} characterize changes to the effective initialization scale through the so-called Gain. Under gradient descent, this Gain can accumulate heterogeneously across features, with larger values on support coordinates increasing their relative implicit penalty. Consequently, the resulting bias can favor denser solutions and hinder sparse recovery. Their convergence guarantees, however, require step-size restrictions that do not cover the Edge-of-Stability regime.

At large learning rates, a second effect can overturn the preference suggested by Gain alone and restore sparse recovery. We isolate this effect as the Drift, which captures the contribution of the shifted reference point in the implicit regularizer. While Gain changes the relative penalties on features, Drift introduces a directional preference among interpolating solutions. When suitably aligned, it can offset the penalty on support coordinates and favor a sharper, sparser solution over a flatter, denser one. Understanding solution selection at large learning rates therefore requires understanding the competition between Gain and Drift. This leads to the central question of this work: How do Gain and Drift interact during training, and when does their competition favor sparse recovery in selecting a final solution?

\begin{figure}[tbp]
    \centering
    \includegraphics[width=0.48\linewidth]{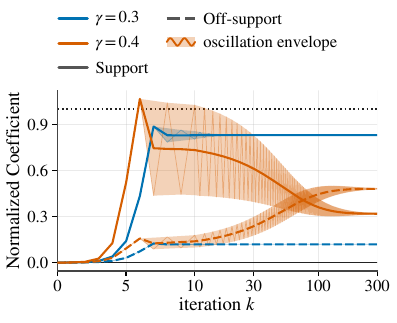}
    \caption{ A diagonal linear network with learning rate $\gamma = 0.3$ and $0.4$ to show the evolution of the predictor coefficient. The smaller step size does not reach the Edge of Stability regime, while the larger one does. We recover a sparser solution for the former.}
    \label{fig:coefficient-example-case}
    \end{figure}
Concretely, we show that their relative influence changes across phases associated with the Edge of Stability regime.
Under progressive sharpening, the Drift can accumulate significantly. In the oscillatory regime, it saturates, while the Gain can still accumulate. At sufficiently large learning rates, the Drift can compete with the Gain by shifting the preference among interpolating solutions, either promoting or hindering sparse recovery.
Note that even in a minimal sparse regression problem (Fig.~\ref{fig:coefficient-example-case}) gradient descent with large learning rates can recover sharper, sparser solutions despite the opposing bias of Gain.

This perspective also suggests a way to control solution selection. We introduce a modification of gradient descent that changes how the Gain is distributed across coordinates, using only the observed gradients. In our sparse-recovery setting, this intervention counteracts the relative penalization of support coordinates and improves recovery where ordinary gradient descent would favor a denser solution. This shows that large learning rates need not uniformly undermine sparse recovery. Their effect depends on the joint evolution and alignment of the discretization corrections.

\Needspace{6\baselineskip}
\noindent\textbf{Contributions:}
\begin{itemize}
    \item We introduce the \emph{Drift}, a post-trajectory quantity that captures the directional effect of discretization on solution selection. We characterize how the Drift and the Gain determine the preference among interpolating solutions across different training regimes.

    \item We analyze the interaction between the Gain and Drift, deriving a sparse-recovery characterization in Theorem~\ref{thm:sparse-recovery} and show how their effects change across large-learning-rate training regimes in
Lemmas~\ref{lem:drift-accumulation} and \ref{lem:drift-cancellation}. We then make this competition explicit in a tractable example in \S~\ref{sec:example-case}.

    \item To corroborate our insights, we introduce a \emph{Gain control} scheme in \S \ref{section : gain control} that actively redistributes Gain and Drift to improve sparse recovery. Our characterization of its implicit-bias and our convergence proof explain the underlying mechanism and experiments validate the improved support recovery in large-learning-rate and edge-of-stability regimes where standard (S)GD fails.
\end{itemize}
\textbf{Related work.}
Implicit regularization in linear networks connects solution selection to parameterization, initialization, and optimization dynamics \citep{gunasekar2017implicit,vaškevičius2019implicitregularizationoptimalsparse,woodworth2020kernel}. For diagonal networks, initialization controls the transition between $\ell_1$- and $\ell_2$-type biases under gradient flow \citep{woodworth2020kernel}. Most closely related, the implicit bias of discrete GD and SGD has been characterized through a trajectory-dependent effective initialization and a shifted Bregman reference point \citep{Even2023SGDOD}. We build on this characterization by isolating the directional contribution of the shifted reference point as Drift and quantifying how its interaction with Gain determines sparse recovery at large learning rates.

Studies of the Edge of Stability document progressive sharpening and persistent oscillations near the stability threshold \citep{Cohen2021GradientDO}, explain self-stabilization \citep{Damian2022SelfStabilizationTI}, and develop continuous approximations of oscillatory dynamics \citep{Cohen2024UnderstandingOI,Regis2026RodFA}. Complementary work establishes decreasing gradient-flow endpoint sharpness along GD trajectories in scalar networks \citep{Kreisler2023GradientDM} and identifies competing norm and sharpness biases in diagonal networks \citep{Fojtk2025ConflictingBA}. Our analysis offers a complementary account of these dynamics and competing preferences to the evolution of Gain and Drift, showing how their interaction can favor sharper, sparser interpolators. Our gain-control intervention further demonstrates how these discretization effects can be steered to improve sparse recovery. An extended discussion on the related works is presented in Appendix ~\ref{app:extended-related-works}.

\subsection{Preliminaries}
We consider the overparameterized regression problem class $X\beta = y$ for data $X \in \mathbb{R}^{n \times d}$ and outputs $y \in \mathbb{R}^n$, where $d>n$. There exist infinitely many solutions $\beta \in \mathbb{R}^d$ to this system, to which we refer as interpolators, and their collection as the interpolation manifold $\mathcal{M} = \{ \beta^* \in \mathbb{R}^d:  X \beta^* = y\}$. We assume $\mathcal{M}$ to be non-empty.
\paragraph{Diagonal linear networks.} Diagonal linear networks provide a tractable way to analyze the training dynamics in terms of the network weights.
Although the problem is linear in $x$, the parameterization of $\beta$ influences the selected solution.
We consider the two-layer network
$
x\mapsto \langle u,\sigma(\operatorname{diag}(v)x)\rangle
=\langle u\odot v,x\rangle,
$
where $u,v\in\mathbb{R}^d$, $u$ represents the output weights, $\operatorname{diag}(v)$ is the hidden-layer weight matrix, and $\sigma$ is the Identity activation. We refer to $\beta = u \odot v$ as the predictor.
Throughout, we initialize $u_0=\sqrt{2}\alpha\mathbf{1}$ and
$v_0=0$, where $\alpha>0$, so that $\beta_0=0$.  We use the standard quadratic loss $L(\beta)=\frac{1}{2n}\|X\beta-y\|_2^2 $ for training. For simplicity of our analysis, we consider a fixed learning rate $\gamma$ for all optimization steps. \paragraph{Sparse Solution.}
We fix a $s$-sparse reference interpolator $\beta^\star\in\mathcal M$ and let
$S:=\operatorname{supp}(\beta^\star)
=\{j\in[d]:\beta_j^\star\neq0\}$, with $|S|=s\ll d$.
We refer to the coordinates in $S$ as the \emph{support coordinates} and those in
$S^c=[d]\setminus S$  as \emph{off-support coordinates}.
For any $\beta$, the vector $\beta_{S^c}$ records coefficients
outside the target support. We call $\beta$ denser than $\beta^\star$ when
$\|\beta\|_0  =|\{j\in[d]:\beta_j\neq 0\}|>s$.

\paragraph{Gradient Flow.} We first consider Gradient Flow (GF), the continuous-time limit of Gradient Descent (GD). Let $g_t:=\nabla L(\beta_t)=\frac1nX^\top(X\beta_t-y)$ be the gradient of the loss. GF evolves the weights as follows:
\begin{equation}
\dot u_t=-v_t\odot g_t,\qquad
\dot v_t=-u_t\odot g_t.
\end{equation}
Consequently, the predictor evolves as
\(\dot\beta_t=-(u_t^{\odot 2}+v_t^{\odot 2})\odot g_t\), where $u^{\odot 2}$ denotes the elementwise square of $u$.
Moreover, the ``balance'' is conserved,
$
u_t^{\odot2}-v_t^{\odot2}=u_0^{\odot2}-v_0^{\odot2},
$
which induces the hyperbolic entropy mirror geometry governing the implicit bias of GF. 

\begin{definition}[Hyperbolic entropy]
\label{def:hypentropy}
For $a\in\mathbb{R}_{>0}^d$, define
\begin{equation}
\label{eq:entropy-func}
    \psi_a(\beta)
    =
    \frac12\sum_{j=1}^d
    \left[
        \beta_j\arsinh\!\left(\frac{\beta_j}{a_j}\right)
        -\sqrt{\beta_j^2+a_j^2}+a_j
    \right].
\end{equation}
Its associated Bregman divergence is
$
    \mathcal{D}_{\psi_a}(\beta,\hat\beta)
    =
    \psi_a(\beta)-\psi_a(\hat\beta)
    -\langle\nabla\psi_a(\hat\beta),\beta-\hat\beta\rangle.
$
\end{definition}
Under vanishing initialization $\alpha \to 0$, $\psi$ in Definition \ref{def:hypentropy}  resembles the $l_1$ norm and characterizes the minimization problem in Eq.~(\ref{eq:implicit-bias-GD}) leading to sparse recovery guarantees \citep{vanishinginit}.
In the implicit-bias literature, $\alpha \to 0$ corresponds to the sparse ``rich'' regime, whereas $\alpha \to \infty$ approaches an $\ell_2$ geometry associated with the ``kernel'' regime \citep{woodworth2020kernel}.

\paragraph{Implicit bias at finite step size.}
GD with $g_k=\nabla L(\beta_k)$ and
$z_k=\gamma g_k$ satisfies
\begin{equation*}
    u_{k+1}=u_k-z_k\odot v_k,
    \qquad
    v_{k+1}=v_k-z_k\odot u_k.
\end{equation*}
Unlike GF, these updates do not preserve the balance underlying the fixed mirror geometry:
\begin{equation}
\label{eq:gd-scale-change}
    u_{k+1}^{\odot2}-v_{k+1}^{\odot2}
    =
    (\mathbf{1}-z_k^{\odot2})
    \odot
    (u_k^{\odot2}-v_k^{\odot2}).
\end{equation}
Consequently, the implicit bias depends on the training trajectory
through an \emph{effective} initialization.
This change is analyzed using a post-trajectory quantity named the Gain.

\begin{definition}[Gain (\citep{Even2023SGDOD})]
\label{def:gain}
For $z\neq\pm1$, let $q(z)=-\log|1-z^2|$.
For a GD trajectory avoiding $z_{k,j}=\pm1$, define
\begin{equation}
\label{eq:scale}
    Q_k=\sum_{\ell=0}^{k-1}q(z_\ell),
    \qquad
    a_k=\alpha^2\exp(-Q_k).
\end{equation}
We call $Q_k$ the Gain and $a_k$ the effective initialization or the scale.
When $Q_k$ has a finite limit, we write
$Q_\infty=\lim_{k\to\infty}Q_k$ and
$a_\infty=\alpha^2\exp(-Q_\infty)$. The Gain vector is defined as $\text{Gain}_\gamma := \ln(\frac{\alpha^2 \mathbf{1}}{a_k}) = \sum_k q(\gamma \nabla_{\mathcal{B}_k}\loss(\beta_k)) \in \mathbb{R}^d$.
\end{definition}

With this notation in place, we can restate the implicit bias characterization of GD next.

\begin{theorem}[Implicit bias of GD {\citep{Even2023SGDOD}}]
\label{thm:gd_bias}
Suppose GD initialized at $(u_0,v_0)=(\sqrt{2}\alpha\mathbf{1},\mathbf{0})$
satisfies $\beta_k=u_k\odot v_k\to\beta_\infty\in\mathcal{M}$,
with $\sum_{k\ge0}q_\pm(z_k)$ finite coordinatewise. Then
\begin{equation}
\label{eq:evens-implicit-bias}
\beta_\infty
=\arg\min_{\beta\in\mathcal{M}}
\mathcal{D}_{\psi_{a_\infty}}(\beta,\widetilde\beta_0),
\qquad
\widetilde\beta_0=\tfrac12(a_{+,\infty}-a_{-,\infty}),
\end{equation}
where $q_\pm(z)=\mp2z-2\log|1\mp z|$ and
$a_{\pm,\infty}=\alpha^2\exp(-\sum_{k\ge0}q_\pm(z_k))$.
\end{theorem}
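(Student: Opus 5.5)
Following Even et al., we rewrite the GD iterates in the rotated coordinates $w_\pm := u \pm v$, which decouple the multiplicative updates, and then read off a discrete mirror-descent structure. Since $u_{k+1}\pm v_{k+1} = (\mathbf{1}\mp z_k)\odot(u_k\pm v_k)$, we get for every coordinate
\begin{equation*}
w_{\pm,k} = \sqrt{2}\alpha\prod_{\ell<k}(1\mp z_{\ell}), \qquad \beta_k = \tfrac14\big(w_{+,k}^{\odot2}-w_{-,k}^{\odot2}\big).
\end{equation*}
Because the trajectory avoids $z_{\ell,j}=\pm1$, each factor is nonzero. Hence $w_{\pm,k}^{\odot 2} = 2\alpha^2\exp\big(\sum_{\ell<k}\log(1\mp z_\ell)^2\big)$. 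We then split $2\log|1\mp z| = \mp 2z - q_\pm(z)$, using the definition $q_\pm(z)=\mp 2z-2\log|1\mp z|$. This gives
\begin{equation*}
w_{\pm,k}^{\odot 2} = 2\alpha^2 e^{-\sum_{\ell<k}q_\pm(z_\ell)}\, e^{\mp 2\gamma\sum_{\ell<k} g_\ell} = 2a_{\pm,k}\, e^{\mp 2\gamma\sum_{\ell<k}g_\ell}.
\end{equation*}
Writing $G_k := \gamma\sum_{\ell<k} g_\ell$, we note that $G_k \in \operatorname{range}(X^\top)$, since each $g_\ell = \frac1n X^\top(X\beta_\ell - y)$. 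We therefore obtain the closed form
\begin{equation*}
\beta_k = \tfrac12\big(a_{+,k}e^{-2G_k} - a_{-,k}e^{2G_k}\big).
\end{equation*}
Note also that $q_+(z)+q_-(z) = -2\log|1-z^2| = 2q(z)$. Setting $a_k := \sqrt{a_{+,k}a_{-,k}}$ therefore recovers exactly $a_k=\alpha^2 e^{-Q_k}$ from Definition~\ref{def:gain}.

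Next, we convert the closed form into a mirror map. Define $c_k := \tfrac12\log(a_{+,k}/a_{-,k})$, so that $a_{\pm,k}=a_k e^{\pm c_k}$. Then $\beta_k = a_k\odot \sinh(c_k - 2G_k)$, and hence
\begin{equation*}
\arsinh(\beta_k/a_k) = c_k - 2G_k.
\end{equation*}
For the hyperbolic entropy, $\nabla\psi_a(\beta) = \tfrac12\arsinh(\beta/a)$. Define the reference point $\widetilde\beta_{0}$ by $\arsinh(\widetilde\beta_0/a_\infty) = c_\infty$, i.e.\ $\widetilde\beta_0 = a_\infty\sinh(c_\infty) = \tfrac12(a_{+,\infty}-a_{-,\infty})$. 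With this choice, $\nabla\psi_{a_\infty}(\widetilde\beta_0) = \tfrac12 c_\infty$.

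We then pass to the limit. The summability hypothesis gives $a_{\pm,k}\to a_{\pm,\infty}\in\mathbb{R}_{>0}^d$, hence $a_k\to a_\infty$ and $c_k\to c_\infty$. Since $\beta_k\to\beta_\infty$, the identity above forces $G_k$ to converge to some $G_\infty$. Because $\operatorname{range}(X^\top)$ is closed, $G_\infty\in\operatorname{range}(X^\top)$. Combining these limits,
\begin{equation*}
\nabla\psi_{a_\infty}(\beta_\infty)-\nabla\psi_{a_\infty}(\widetilde\beta_0) = -G_\infty \in \operatorname{range}(X^\top).
\end{equation*}

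Finally, we check the optimality conditions. The objective $\beta\mapsto \mathcal D_{\psi_{a_\infty}}(\beta,\widetilde\beta_0)$ is strictly convex and smooth, and its gradient is $\nabla\psi_{a_\infty}(\beta)-\nabla\psi_{a_\infty}(\widetilde\beta_0)$. The constraint set $\mathcal M$ is affine with normal space $\operatorname{range}(X^\top)$. The displayed relation is therefore exactly the KKT condition. Since $\beta_\infty\in\mathcal M$ by hypothesis, $\beta_\infty$ is the unique minimizer.

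The main obstacle is the limit step: we must ensure that $a_\infty$ is strictly positive and finite, so that $\psi_{a_\infty}$ is well defined, and that $G_k$ actually converges. Both follow from the coordinatewise finiteness of $\sum_k q_\pm(z_k)$ together with continuity of $\arsinh$. Beyond that, the only real care needed is the sign bookkeeping in the split $2\log|1\mp z|=\mp2z-q_\pm(z)$, which is what matches the factor conventions in $\psi_a$.
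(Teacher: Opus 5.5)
Your proof is correct. The paper imports this result from \citet{Even2023SGDOD} without proof, but your argument follows exactly the route the paper uses for the analogous Gain Control statement (Lemma~\ref{lem:gain-drift-control}, Proposition~\ref{prop:gain-control-scheme-implicit-bias}): branch factorization through $u\pm v$, the identity $\arsinh(\beta_k/a_k)=c_k-2G_k$ with $G_k\in\operatorname{range}(X^\top)$, closedness of the range in the limit, and KKT plus strict convexity. Since your $c_\infty=-D_\infty$, your reference-point form and the paper's linear-tilt form are the two sides of Proposition~\ref{prop:gain_drift}, and working with squared branches stays valid even when some $|z_{k,j}|>1$ flips the sign of $u_k^2-v_k^2$.
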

The corrections $q_\pm$ capture GD's discretization effects relative to GF, producing a trajectory-dependent scale $a_\infty$ and a shifted Bregman reference point $\widetilde\beta_0$. 
Under conservative step-size bounds, \citet{Even2023SGDOD} establish convergence to $\beta_\infty$ and characterize the Gain through $a_\infty$, but do not further analyze the shift in $\widetilde\beta_0$.

\section{Theoretical analysis of drift-gain decomposition}\label{section : theoretical analysis of drift and gain}
The implicit-bias formulation in Eq.~(\ref{eq:evens-implicit-bias}) reveals two
trajectory-dependent quantities: the scale $a_{\pm,k}$ and the shifted
initialization $\widetilde{\beta}_0$. While the Gain captures the former, the
effect of the shifted initialization is less explicit. To isolate this
effect, we introduce the \emph{Drift}, a post-trajectory statistic analogous
to the Gain that captures the asymmetry of the GD discretization across the
coupled layer dynamics. We provide rigorous proofs for all lemmas, propositions and theorems in Appendix \ref{app:proofs}.

\begin{definition}[Drift increment]
\label{def:drift}
For $z\neq\pm1$, define
\begin{equation}
\label{eq:drift}
    d(z)
    :=
    \frac{q_+(z)-q_-(z)}{2}
    =
    \log\left|\frac{1+z}{1-z}\right|-2z, \qquad d(z) \xrightarrow{z \to \pm 1} \pm\infty.
\end{equation}
\end{definition}
\begin{definition}[Drift]
Let $z = \gamma\,\nabla_{\mathcal B}\loss$, and let
\[
q_\pm(z)=\mp2z-2\log|1\mp z|,
\qquad
 a_{\pm,\infty}=\alpha^2\exp\!\left(-\sum_{k=0}^{\infty}q_\pm(z_k)\right).
\]
 For $k \in \{0, \dots, K-1\}$ (S)GD steps, we define the accumulated drift as
    \begin{equation} 
         D_K := \sum_{k=0}^{K-1} d(\gamma \nabla_{\mathcal{B}^k} \loss (\beta_k)) = \frac{1}{2} \ln \frac{a_{-,K}}{a_{+,K}} 
    \end{equation}
where $d(z)$ is the drift increment Eq.~(\ref{eq:drift}), and $|\mathcal{B}^k|$ is the batchsize. For a converging sequence, 
$D_\infty=\lim_{k\to\infty}D_k$.
\end{definition} 

To make the contribution of the drift explicit, we  rewrite the Bregman
objective in Theorem~\ref{thm:gd_bias} as a hyperbolic-entropy
potential with a linear correction to highlight the origins of the drift. 

\begin{proposition}[Gain--Drift decomposition]
\label{prop:gain_drift}
Under the assumptions of and with quantities defined as in Theorem.~\ref{thm:gd_bias},
\begin{equation}
\label{eq:implicit-bias-GD}
    \beta_\infty
    =
    \arg\min_{\beta\in\mathcal{M}}
    \left\{
        \psi_{a_\infty}(\beta)
        +\frac12\langle D_\infty,\beta\rangle
    \right\}.
\end{equation}
Moreover, $\Tilde{\psi}(\beta, a_{\infty}, D_{\infty}):= \psi_{a_\infty}(\beta)
        +\frac12\langle D_\infty,\beta\rangle$ attains its global minimum at $\beta = a_{\infty} \ \text{sinh}(-D_{\infty}) \neq 0$ when $D_{\infty} \neq 0$.
\end{proposition}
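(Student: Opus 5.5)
The plan is to expand the Bregman divergence in Theorem~\ref{thm:gd_bias} and identify the linear term with the Drift. Write $\mathcal D_{\psi_{a}}(\beta,\widetilde\beta_0)=\psi_a(\beta)-\langle\nabla\psi_a(\widetilde\beta_0),\beta\rangle+C$, where $C$ does not depend on $\beta$. Minimizing over $\mathcal M$ is then the same as minimizing $\psi_a(\beta)-\langle\nabla\psi_a(\widetilde\beta_0),\beta\rangle$. So it suffices to show that $\nabla\psi_{a_\infty}(\widetilde\beta_0)=-\tfrac12 D_\infty$, taken coordinatewise. Differentiating Definition~\ref{def:hypentropy} gives
\[
\partial_{\beta_j}\bigl[\beta_j\arsinh(\beta_j/a_j)-\sqrt{\beta_j^2+a_j^2}\bigr]=\arsinh(\beta_j/a_j),
\]
so $\nabla\psi_a(\beta)=\tfrac12\arsinh(\beta/a)$ coordinatewise.

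The key step is to relate $a_\infty$ to $a_{\pm,\infty}$. From the definitions, $q_+(z)+q_-(z)=-2\log|1-z|-2\log|1+z|=-2\log|1-z^2|=2q(z)$. Hence $a_{+,\infty}a_{-,\infty}=\alpha^4\exp(-2Q_\infty)=a_\infty^2$, that is, $a_\infty=\sqrt{a_{+,\infty}a_{-,\infty}}$. Similarly, $q_+-q_-=2d$, which gives $D_\infty=\tfrac12\ln(a_{-,\infty}/a_{+,\infty})$, consistent with the definition of the Drift. Putting these together,
\[
\frac{\widetilde\beta_0}{a_\infty}=\frac{a_{+,\infty}-a_{-,\infty}}{2\sqrt{a_{+,\infty}a_{-,\infty}}}=\sinh\!\Bigl(\tfrac12\ln\tfrac{a_{+,\infty}}{a_{-,\infty}}\Bigr)=\sinh(-D_\infty).
\]
Therefore $\nabla\psi_{a_\infty}(\widetilde\beta_0)=\tfrac12\arsinh(\sinh(-D_\infty))=-\tfrac12D_\infty$. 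This yields Eq.~(\ref{eq:implicit-bias-GD}). The finiteness assumptions of Theorem~\ref{thm:gd_bias} guarantee that all of these limits exist and are positive.

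For the second claim, note that $\tilde\psi$ is separable and strictly convex, because $\partial^2_{\beta_j}\psi_a=\tfrac{1}{2\sqrt{\beta_j^2+a_j^2}}>0$. Its unique unconstrained critical point therefore solves $\tfrac12\arsinh(\beta_j/a_{\infty,j})+\tfrac12D_{\infty,j}=0$, which gives $\beta=a_\infty\odot\sinh(-D_\infty)$. Since $\sinh$ vanishes only at $0$, this minimizer is nonzero in exactly those coordinates where $D_{\infty,j}\neq0$; in particular it is nonzero whenever $D_\infty\neq0$.

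The main obstacle is the bookkeeping of the identity $a_\infty^2=a_{+,\infty}a_{-,\infty}$ together with the sign convention in $D_\infty$. A sign slip at that point would flip the sign of the linear term, so I would check these signs carefully. Everything else is direct differentiation.
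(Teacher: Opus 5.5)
Your proof is correct and follows the paper's argument. Both proofs use $q_\pm = q \pm d$ to rewrite the branch scales: you write this as $a_\infty=\sqrt{a_{+,\infty}a_{-,\infty}}$, the paper as $a_{\pm,\infty}=a_\infty\odot e^{\mp D_\infty}$. Both then obtain $\widetilde\beta_0=a_\infty\odot\sinh(-D_\infty)$, hence $\nabla\psi_{a_\infty}(\widetilde\beta_0)=-\tfrac12 D_\infty$, and absorb the rest of the Bregman expansion into a constant.

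Your strict-convexity argument for the second claim fills in a step the paper leaves implicit. That step is simply that the tilted potential has zero gradient exactly at $\widetilde\beta_0$.
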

This reformulation separates two effects: The Gain changes the curvature of the potential, while the Drift shifts its global minimum. Hence, the Drift introduces a directional bias distinct from the Gain. See Fig.~\ref{fig:d-importance}.

\subsection{Sparse Recovery}
In the small-initialization regime, \cite{Even2023SGDOD} show that a
heterogeneous accumulation of the Gain across support and off-support coordinates
can lead to a loss of sparsity. We extend this picture by deriving a new
\emph{sparse recovery theorem} that quantifies the joint influence of the Gain
and the Drift on the recovered predictor, making their competition across
different training regimes explicit.
\begin{theorem}[Sparse Recovery Theorem]
\label{thm:sparse-recovery}
    Let $X \in \mathbb{R}^{n\times d}$ with $d > n$ and $X\beta^* = y$, where $\beta^* \in \mathbb{R}^d$ has non-empty finite support $S \subsetneq \{ 1,\dots, d\}$. Let $S^c$ be the set of the off-support coordinates. Suppose that for some finite $c \geq0$,
    \begin{equation}
    \label{eq:NSP} 
    \tag{NSP}
        \|h_S\|_1 \leq c \|h_{S^c}\|_1 \qquad \text{for every $h \in \ker X$}.
    \end{equation}
    Let $\psi$ be the potential function in Eq.~(\ref{eq:entropy-func}) and $\hat{\beta}$ the minimizer of the implicit bias in Eq.~(\ref{eq:implicit-bias-GD}). Suppose that GD converges to $\widehat{\beta}$, $Q$ and $D$ are the final Gain and Drift vectors and $a$ is the effective initialization in Eq.~(\ref{eq:scale}). Let $A_{S^c} := \sum_{j \in S^c} a_j = \alpha^2 \sum_{j \in S^c} e^{-Q_j}$ be the accumulated scale of the off-support coordinates and let $
R_S := \left\| \operatorname{arsinh}\left(\frac{\beta_S^\star}{a_S}\right) + D_S \right\|_\infty.$ Then we have that
\begin{equation*}
    \|\widehat{\beta} - \beta^\star\|_1 \le (1+c) A_{S^c} \sinh\!\left(c R_S + \|D_{S^c}\|_\infty\right).
\end{equation*}
\end{theorem}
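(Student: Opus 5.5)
The plan is to work directly with the first-order optimality conditions of the Gain--Drift objective in Proposition~\ref{prop:gain_drift}, and then use (NSP) to transfer everything onto the off-support error. Write $F(\beta)=\psi_{a}(\beta)+\frac12\langle D,\beta\rangle$, with $a=a_\infty$ and $D=D_\infty$. Differentiating Eq.~(\ref{eq:entropy-func}) coordinatewise gives $\nabla\psi_a(\beta)_j=\frac12\arsinh(\beta_j/a_j)$, because the derivative of $-\sqrt{\beta_j^2+a_j^2}$ cancels the extra term from the product rule. Since $\widehat\beta$ minimizes the strictly convex $F$ over the affine set $\mathcal M$, we have $\nabla F(\widehat\beta)\perp\ker X$. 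Set $h:=\widehat\beta-\beta^\star\in\ker X$. Then
\[
0=\sum_{j\in S}\bigl(\arsinh(\widehat\beta_j/a_j)+D_j\bigr)h_j+\sum_{j\in S^c}\bigl(\arsinh(h_j/a_j)+D_j\bigr)h_j ,
\]
where on $S^c$ we used $\widehat\beta_j=h_j$ because $\beta^\star_j=0$ there.

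\textbf{Support block.} Since $\arsinh$ is increasing, $(\arsinh(\widehat\beta_j/a_j)-\arsinh(\beta^\star_j/a_j))h_j\ge0$. Hence the $S$-sum is at least $\sum_{j\in S}(\arsinh(\beta^\star_j/a_j)+D_j)h_j\ge -R_S\|h_S\|_1$, and by (NSP) this is at least $-cR_S\|h_{S^c}\|_1$.

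\textbf{Off-support block.} Bound $\sum_{S^c}D_jh_j\ge-\|D_{S^c}\|_\infty\|h_{S^c}\|_1$, and use that $\arsinh$ is odd, so $\arsinh(h_j/a_j)h_j=|h_j|\arsinh(|h_j|/a_j)$. Combining both blocks with the identity above gives
\[
\sum_{j\in S^c}|h_j|\arsinh\!\left(\frac{|h_j|}{a_j}\right)\le T\,H,\qquad T:=cR_S+\|D_{S^c}\|_\infty,\quad H:=\|h_{S^c}\|_1 .
\]

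\textbf{Main step.} The remaining task is to turn this into $H\le A_{S^c}\sinh(T)$. If $H=0$ there is nothing to prove. Otherwise put $p_j=|h_j|/H$, which lies in the simplex on $S^c$. The inequality then reads $\sum_j p_j\arsinh(Hp_j/a_j)\le T$. It therefore suffices to show that the left side is at least $\arsinh(H/A_{S^c})$.

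I would argue this by convexity. Each map $p\mapsto p\arsinh(Hp/a_j)$ is convex, being the product of increasing nonnegative functions on $p\ge0$. Minimizing the sum over the simplex, the Lagrange/KKT condition forces $Hp_j/a_j$ to be constant, so $p_j=a_j/A_{S^c}$, and the minimum value is $\arsinh(H/A_{S^c})$. Alternatively, this is Jensen's inequality for the convex perspective $x\,\phi(y/x)$ of $\phi(t)=t\arsinh(t)$.

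Since $\arsinh$ is increasing, this yields $H\le A_{S^c}\sinh(T)$. Finally, (NSP) gives $\|h\|_1=\|h_S\|_1+\|h_{S^c}\|_1\le(1+c)H$, which is the claimed bound.

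I expect the convexity/Jensen step to be the only delicate point. It needs a careful argument that the simplex minimum is attained at the interior KKT point, which holds because the objective is convex with vanishing derivative at $p=0$. The rest is bookkeeping of the optimality conditions.
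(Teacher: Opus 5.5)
Your proposal is correct and follows the paper's proof essentially step for step. Both arguments test the first-order condition against $h\in\ker X$, use monotonicity of $\arsinh$ on $S$, apply H\"older plus (NSP), and then use convexity of $\varphi(r)=r\arsinh(r)$ to get $\sum_{j\in S^c}|h_j|\arsinh(|h_j|/a_j)\ge H\arsinh(H/A_{S^c})$. Your ``perspective'' version is exactly the paper's Jensen step with weights $a_j/A_{S^c}$ and points $|h_j|/a_j$, and the KKT-over-the-simplex argument is an equivalent reformulation.

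One correction: convexity of $p\mapsto p\arsinh(Hp/a_j)$ does not follow from its being a product of increasing nonnegative functions. That principle is false in general, e.g.\ $\sqrt{p}\cdot p^{1/4}=p^{3/4}$ is concave, and here the factor $\arsinh$ is itself concave on $[0,\infty)$. Verify convexity directly via $\varphi''(r)=(r^2+2)/(1+r^2)^{3/2}>0$, as the paper does.
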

\emph{Proof Sketch.}
 For $h=\widehat\beta-\beta^\star\in\ker X$, \ref{eq:NSP} gives the bound $\|h\|_1\le(1+c)\|h_{S^c}\|_1$. We split the optimality condition into support and off-support terms, bounding the off-support contribution using the Gain and Drift. Finally, Jensen’s inequality yields $\|h_{S^c}\|_1\le A_{S^c}\sinh(cR_S+\|D_{S^c}\|_\infty)$, giving the stated recovery bound. See App. \ref{app:sparse-recovery} for a detailed proof.

The Null-Space Property (\ref{eq:NSP}) \citep{NSP}, a standard sufficient condition
for exact and stable sparse recovery in compressed sensing, links changes on the
support to the off-support mass that any competing interpolator must introduce. In particular,
when $c<1$, the sparse target $\beta^*$ is the unique minimum-$\ell_1$
interpolator. This provides a natural way to interpret the theorem in terms of the
support $S$ and off-support $S^c$ coordinates. If more Gain accumulates on the support $S$,
the sparse target becomes relatively more expensive. The Drift instead acts
through its alignment with feasible directions in $\ker X$. It can either
reinforce this effect or counteract it and favor the sparse solution.
Theorem~\ref{thm:sparse-recovery} makes this competition explicit. See App. \ref{app:extended-discussion} for a summary.

We have now established the relationship between the accumulation of Gain and Drift to the sparsity of the recovered solution. However, these quantities do not evolve in the same way during training. During progressive sharpening, gradients can remain sign-coherent for many steps. So both quantities accumulate. Once training enters the oscillatory regime near the Edge of Stability, consecutive gradients approximately reverse the sign. The Drift then largely cancels, while the Gain can continue to accumulate.
When support coordinates also carry larger gradients than off-support
coordinates, the continued Gain accumulation becomes heterogeneous and can
increasingly penalize the support. We formalize these two regimes in
Lemmas~\ref{lem:drift-accumulation} and~\ref{lem:drift-cancellation}.

\subsection{Drift at the Edge of Stability}

\paragraph{Progressive Sharpening Phase.}
The progressive sharpening phase \citep{Damian2022SelfStabilizationTI} is the early training regime under large learning rates, where the sharpness $\bar{S}_k = \lambda_{\max} ({\nabla^2_{(u_k,v_k)}\mathcal{L}(\beta_k)})$ increases towards the stability threshold $2/\gamma$ (where $\lambda_{\max}$ is the largest eigenvalue).
Let $K_{\text{edge}}$ be the first GD iteration for which $\gamma \bar{S} > 2$, i.e.
$
K_{\text{edge}} := \inf \,\{ k \geq 0: \gamma \bar{S}_k \geq 2\}
$. 
Thus $\bar{S}_k < \frac{2}{\gamma}$ for all $k<K_{\text{edge}}$.
Additionally, we assume that the predictor gradients maintain a consistent sign in each coordinate. Then the following lemma quantifies the resulting accumulation of the Drift.

\begin{lemma}[Drift accumulation]
\label{lem:drift-accumulation}
Let $k \in \{0,\dots, K_{\text{edge}}-1\}$. Let $D_{k+1}-D_k=d(z_k)$ be the drift increment as in Definition~\ref{def:drift}. Suppose that, for some
$0<\rho<1$ and $\sigma\in\{-1,1\}^d$,
\begin{equation}
\label{eq:drift-assumptions}
    \|z_k\|_\infty\le\rho,
    \qquad
    \sigma_j z_{k,j}\ge0
    \quad\text{for every coordinate }j
    \text{ and step } 0\le k< K.
\end{equation}
We thus assume bounded gradients and sign-coherence between the gradients for GD with learning rate $\gamma > 0$. Then, for any finite $1 \leq K \leq K_{\text{edge}}$ coordinatewise,
\begin{equation}
\label{eq:drift-bounds}
    \frac23\sum_{k=0}^{K-1}|z_{k,j}|^3
    \le
    \sigma_j D_{K,j} = |D_{K,j}|
    \le
    \frac{2}{3(1-\rho^2)}
       \sum_{k=0}^{K-1}|z_{k,j}|^3.
\end{equation}
Thus for $z_k \ne 0$, each coordinate accumulates drift in a fixed direction.
\end{lemma}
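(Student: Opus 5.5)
The argument is purely scalar and coordinatewise; the sharpness threshold $K_{\text{edge}}$ enters only to fix the range of steps on which \eqref{eq:drift-assumptions} is assumed. Fix a coordinate $j$. Since $D_{K,j}=\sum_{k=0}^{K-1} d(z_{k,j})$, it suffices to prove the following one-variable inequality for the drift increment $d(z)=\log\frac{1+z}{1-z}-2z$ on $|z|\le\rho<1$:
\begin{equation*}
\tfrac23|z|^3\le \operatorname{sign}(z)\,d(z)\le \frac{2}{3(1-\rho^2)}|z|^3 .
\end{equation*}
We then sum over $k$.

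First we record the basic properties of $d$. On $(-1,1)$ the absolute value inside the logarithm is not needed, and $d$ is odd. Differentiating gives
\begin{equation*}
d'(z)=\frac{2}{1-z^2}-2=\frac{2z^2}{1-z^2}\ge 0 .
\end{equation*}
Hence $d$ is increasing with $d(0)=0$, so $\operatorname{sign}(d(z))=\operatorname{sign}(z)$.

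Next we derive the two-sided bound. By oddness it is enough to take $0\le z\le\rho$, where $d(z)=\int_0^z \frac{2t^2}{1-t^2}\,dt$. On this interval $1\le \frac{1}{1-t^2}\le \frac{1}{1-\rho^2}$, so integrating $2t^2$ against these bounds gives
\begin{equation*}
\tfrac23 z^3\le d(z)\le \frac{2}{3(1-\rho^2)}z^3 .
\end{equation*}
An alternative route uses the series $d(z)=2\sum_{m\ge1} z^{2m+1}/(2m+1)$ and bounds the tail geometrically by $z^3/(1-z^2)$.

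Finally we sum over steps. The sign-coherence assumption $\sigma_j z_{k,j}\ge0$ for all $k<K$, combined with the oddness and monotonicity of $d$, gives $\sigma_j d(z_{k,j})=|d(z_{k,j})|$. This holds for every step, with steps where $z_{k,j}=0$ contributing zero. Therefore
\begin{equation*}
\sigma_j D_{K,j}=\sum_{k=0}^{K-1}|d(z_{k,j})|=|D_{K,j}| .
\end{equation*}
No cancellation occurs because all summands have the same sign. Applying the scalar bound termwise yields \eqref{eq:drift-bounds}. If some $z_{k,j}\neq0$, the lower bound is strictly positive, so $D_{K,j}$ is nonzero with sign $\sigma_j$; this is the concluding remark of the lemma.

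The only point requiring care is the equality $\sigma_j D_{K,j}=|D_{K,j}|$ in the degenerate case where every $z_{k,j}=0$. There $D_{K,j}=0$, so the equality holds trivially.
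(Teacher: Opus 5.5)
Your proposal is correct and matches the paper's proof essentially step for step. The representation $d(s)=\int_0^s \frac{2t^2}{1-t^2}\,dt$ with $1\le(1-t^2)^{-1}\le(1-\rho^2)^{-1}$ gives the scalar cubic bounds, and oddness of $d$ plus sign-coherence gives $\sigma_j d(z_{k,j})=d(|z_{k,j}|)\ge 0$, so the sum has no cancellation and $\sigma_j D_{K,j}=|D_{K,j}|$. In your series alternative, you need $\frac{1}{2m+1}\le\frac13$ for $m\ge1$ to recover the stated constant, giving $d(z)\le\frac{2z^3}{3(1-z^2)}$ for $0\le z<1$.
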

In this phase, both the Gain and the Drift can accumulate based on the gradient history and the learning rate. The Drift additionally depends on the coherence between the signs of consecutive gradients. Gain increments are quadratic in the scaled gradients \citep{Even2023SGDOD} and therefore invariant to their signs. In particular, at large learning rates, both Gain and Drift accumulate rapidly in
the early epochs, then slow down as the model fits the data and the gradients shrink.

\paragraph{Oscillatory Phase.}
Upon reaching the Edge of Stability threshold,
GD iterates oscillate, which alters the accumulation of Drift. We consider intervals in which consecutive scaled predictor gradients are approximately opposite ($z_{k+1}\approx-z_k$).
\begin{lemma}[Drift cancellation at EoS]
\label{lem:drift-cancellation}
Let $d(z)$ be the drift-increment in Eq.~(\ref{eq:drift}) and suppose
$\|z_k\|_\infty\le\rho<1$ for all $k_0\le k<k_0+2m$. For $\ell=0,\dots,m-1$, let
$e_\ell=z_{k_0+2\ell+1}+z_{k_0+2\ell}$ denote the deviation of the $\ell$-th pair
of gradient steps from exact sign reversal. Then
\begin{equation}\label{eq:drift-cancellation}
  \|D_{k_0+2m}-D_{k_0}\|_\infty
  \le\frac{2\rho^2}{1-\rho^2}\sum_{\ell=0}^{m-1}\|e_\ell\|_\infty .
\end{equation}
In particular, if every pair is exactly sign-reversed (i.e. $e_\ell=0$), then
$D_{k_0+2m}=D_{k_0}$.
\end{lemma}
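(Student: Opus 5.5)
The plan is to work one coordinate and one sign-reversed pair at a time, using only that the drift increment $d$ is odd and has a bounded derivative on $[-\rho,\rho]$. Since $D_{k_0+2m}-D_{k_0}=\sum_{k=k_0}^{k_0+2m-1} d(z_k)$ coordinatewise, I would group the sum into the $m$ consecutive pairs $(z_{k_0+2\ell},z_{k_0+2\ell+1})$. By the triangle inequality it then suffices to show, for each coordinate $j$ and each pair,
\[
|d(z_{k_0+2\ell,j})+d(z_{k_0+2\ell+1,j})|\le \frac{2\rho^2}{1-\rho^2}\,|e_{\ell,j}|.
\]
Summing over $\ell$ and taking the maximum over $j$ gives Eq.~(\ref{eq:drift-cancellation}), because $|e_{\ell,j}|\le\|e_\ell\|_\infty$.

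\textbf{Oddness.} From Definition~\ref{def:drift}, $d(-z)=\log\left|\frac{1-z}{1+z}\right|+2z=-d(z)$ for $|z|<1$. Writing $x=z_{k_0+2\ell,j}$ and $w=z_{k_0+2\ell+1,j}$, we get
\[
d(x)+d(w)=d(w)-d(-x).
\]
Both $w$ and $-x$ lie in $[-\rho,\rho]$ by the assumption $\|z_k\|_\infty\le\rho$. Their difference is $w-(-x)=w+x=e_{\ell,j}$.

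\textbf{Mean value theorem.} Next I would compute the derivative. For $|z|<1$,
\[
d'(z)=\frac{1}{1+z}+\frac{1}{1-z}-2=\frac{2}{1-z^2}-2=\frac{2z^2}{1-z^2}.
\]
This is even and increasing in $|z|$, so on $[-\rho,\rho]$ we have $0\le d'(z)\le \frac{2\rho^2}{1-\rho^2}$. The interval is convex and contains both $w$ and $-x$. The mean value theorem then gives $|d(w)-d(-x)|\le \frac{2\rho^2}{1-\rho^2}|e_{\ell,j}|$, which is exactly the per-pair bound. The case $e_\ell=0$ gives zero increment for every pair, hence $D_{k_0+2m}=D_{k_0}$.

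The only point needing care, and the closest thing to an obstacle, is to stay in the domain where $d$ is smooth and the absolute values inside the logarithm can be dropped. The hypothesis $\rho<1$ guarantees $1\pm z>0$ for every relevant argument, so $d$ is $C^1$ on $[-\rho,\rho]$ and the mean value theorem applies with no sign issues. Everything else is a direct computation.
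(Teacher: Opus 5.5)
Your proposal is correct and follows the paper's proof step for step. Oddness of $d$ turns each pair's contribution into $d(z_{k+1})-d(-z_k)$, the mean value theorem with $|d'(z)|=\frac{2z^2}{1-z^2}\le\frac{2\rho^2}{1-\rho^2}$ on $[-\rho,\rho]$ bounds it by $\|e_\ell\|_\infty$, and telescoping over the $m$ pairs with the triangle inequality gives the result.
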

In this phase,  GD enters a period-two regime (i.e. $z_{k+1}\approx-z_k$),  in which consecutive gradients nearly reverse sign. Since the drift increment in Eq.~(\ref{eq:drift}) is odd, each such pair of steps contributes minimally to the Drift. Hence, it starts to saturate. See Fig.~\ref{fig:drift-cancellation}.

The accumulated Drift in the progressive sharpening phase (Lemma \ref{lem:drift-accumulation}) influences the predictor selection in the early epochs. However, above the EoS, Lemma \ref{lem:drift-cancellation} shows that the additional Drift arising from the oscillatory phase approximately cancels in contrast to the Gain, which keeps accumulating. In the absence of a well-aligned Drift, this additional Gain is able to regularize the support coordinates and ruin the sparsity of the predictor.  
Fig.~\ref{fig:bias} illustrates the competition between the Gain and the Drift with an example case, where we can analytically track the bias induced by them.

\begin{figure}[tbp]
    \centering
\includegraphics[width=\linewidth]{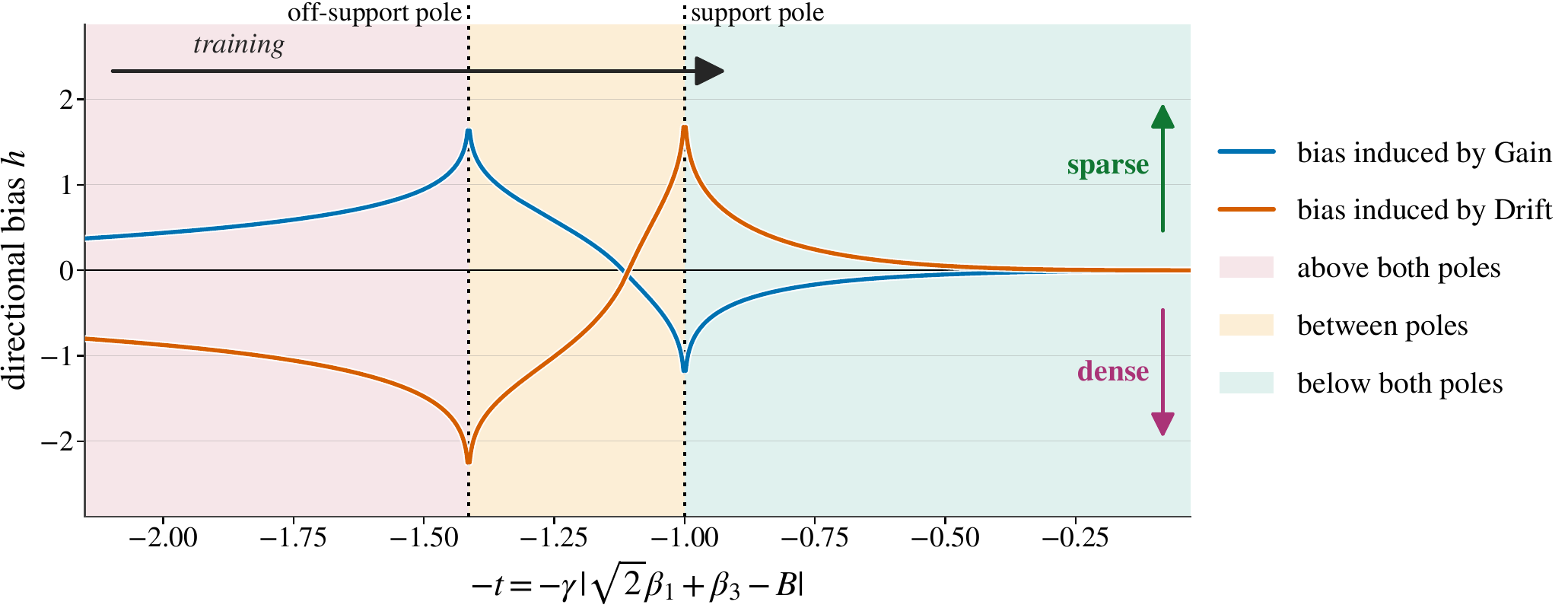}
    \caption{Alternating behavior of the Gain and the Drift. At initialization, the training starts from the left and proceeds to the right as the gradients shrink. The upper half of the plot indicates a sparsity promoting behavior, while the bottom half represents bias towards denser predictors. The two dotted lines represent the support and offsupport poles. See Table \ref{table:summary-example-case} for a summary.}
    \label{fig:bias}
    \end{figure}
\subsection{Example case of Gain--Drift interaction}
\label{sec:example-case}
We consider a linear problem $X\beta = y$, where 
\begin{equation*}
X
=
\begin{pmatrix}
\sqrt{2} & 0 & 1\\
0 & \sqrt{2} & 1
\end{pmatrix}
\in\mathbb{R}^{2\times 3},
\qquad
y
=
B\begin{pmatrix}
1\\
1
\end{pmatrix},
\quad B\in \mathbb{R}_+.
\end{equation*}

The system admits a sharper, sparser solution $\beta^*_s = B\, (0,0, 1)^\top$ and a flatter, denser solution $\beta_d^* = B\, (\frac{1}{\sqrt2}, \frac{1}{\sqrt2}, 0)$. We refer to the third feature column (i.e. $\beta^*_{s}$) as the ``support" and the first two feature columns as ``off-support". Let $\mathcal{S} = \{ \beta(t):t\in  [0,1] \}$ be a line segment , where 
\begin{align*}
    \beta(t) = \beta^*_s + Bth = B\begin{pmatrix}
        \dfrac{t}{\sqrt2} ,
        \dfrac{t}{\sqrt2} ,
        1-t
    \end{pmatrix}^\top \qquad h = \begin{pmatrix}
        \frac{1}{\sqrt2}, \frac{1}{\sqrt2}, -1
    \end{pmatrix}\in \ker X.  
\end{align*}
Here, $h$ satisfies Eq.~(\ref{eq:NSP}). At $t=0$, we recover the sparse solution $\beta^*_s$, while $t=1$ produces the dense counterpart $\beta^*_d$. For $0\leq t< 1$, the predictor $\ell_1$ norm $\|\beta(t)\|_1 = B\big(1+ (\sqrt2 -1)t\big)$ increases while the minimum sharpness $\bar{S}_\infty(t) = B(2-(2-\sqrt2)t)$ decreases. Along the direction $h$, the coefficients of $\beta(t)$ at the two off-support coordinates increases and trades sparsity for flatness. The gradient of the loss is $\nabla \mathcal{L} = x^\top r_k$, where the residual $r_k$ after $k$ steps of GD is $r_k =\frac{1}{\sqrt2}( \beta_{1,k} + \beta_{2,k}) +\beta_{3,k} -B$. 
Since the Gain and the Drift are singular at $|\gamma \, \nabla \mathcal{L}|=1$ so that $\gamma |r k| |x| = 1$, they 
have 2 poles: an off-support pole at $\sqrt2$ and a support pole at $1$.
\vspace{1em}

\noindent\textbf{Alternating behavior of Gain and Drift}. To inspect the competition between the Gain and Drift in early training, we train a diagonal linear network for 500 iterations with setup $(\gamma, B, \alpha) = (0.1, 16, 0.04)$. Table~\ref{table:summary-example-case} and Fig.~\ref{fig:bias} summarize the changing directional biases of Gain and Drift across the three residual regimes. Derivations are in App.~\ref{app:below-pole}--\ref{app:between-pole}.
\paragraph{Sparse Recovery.} Figs.~\ref{fig:coefficient-example-case} and~\ref{fig:example-case} illustrate how this competition shapes solution selection for $B=3.1$, comparing GD trajectories with $\gamma=0.3$ and $0.4$ over 500 iterations.
For $\gamma=0.3$, Drift substantially offsets the Gain-induced preference for $\beta_d^*$, yielding a predictor closer to the sparse target $\beta_s^*$.
For $\gamma=0.4$, the trajectory enters the oscillatory EoS regime, where Gain continues to accumulate while Drift largely saturates (Lemma \ref{lem:drift-cancellation}).
The accumulated Drift provides less compensation, and the resulting predictor assigns more mass to off-support coordinates.

\begin{figure}[tbp]
    \centering
    \includegraphics[width=\linewidth]{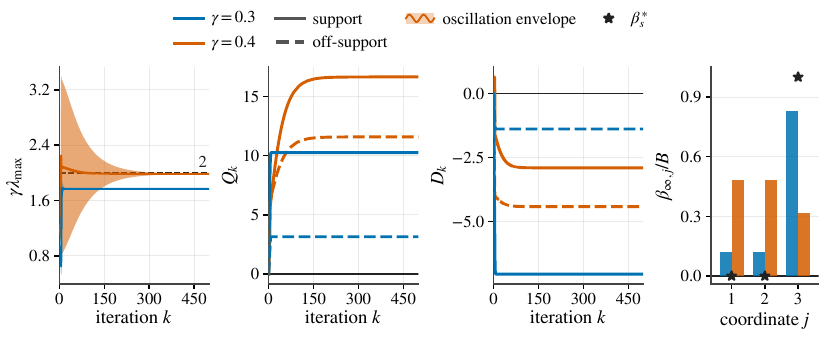}
    \caption{A diagonal linear network example case with control parameter $B=3.1$ and learning rates $\gamma = 0.3$ and $0.4$. (\emph{Left}) We plot the product of sharpness and step size. The smaller step size run converges below the EoS regime while larger learning rate oscillates. (\emph{Middle}) In both cases, the Gain and the Drift accumulate comparatively (Lemma \ref{lem:drift-accumulation} and \ref{lem:drift-cancellation}). (\emph{Right}) We plot the predictor coefficient, recovering a sparser solution for the former run and denser for the latter.}
    \label{fig:example-case}
\end{figure}
\section{Rebalancing the Gain}\label{section : gain control}
Building on Proposition \ref{prop:gain_drift}, we introduce an intervention that reverses the Gain imbalance induced by GD. Through the layer-imbalance dynamics in Eq.~(\ref{eq:gd-scale-change}), larger gradient coordinates undergo stronger contraction of the effective initialization scale in Eq.~(\ref{eq:scale}) and hence receive a larger implicit penalty (Theorem.~\ref{thm:sparse-recovery}). In our example, this penalizes the support coordinate. Our modification reverses this ordering to promote sparse recovery.
\begin{definition}[Gain Control (GC) scheme]
\label{prop:gain-boost}
Let $\beta = u \odot v$, where $u, v \in \mathbb{R}^d$ are the layer weights, and learning rate $\gamma>0$. Let $z_k = \gamma\nabla \loss (\beta_k)$ and $M_k=\max_j (1+z_{k,j}^2+z_{k,j}^4)$. We define
    \begin{align}\label{equation : GC scheme}
u_{k+1,j}
=
\frac{(1+z_{k,j}^2)u_{k,j}-z_{k,j}v_{k,j}}{\sqrt{M_k}},
\quad \text{ and } \quad v_{k+1,j}
=
\frac{(1+z_{k,j}^2)v_{k,j}-z_{k,j}u_{k,j}}{\sqrt{M_k}}.
\end{align}
\end{definition}
The addition of the quadratic term in the iterative scheme in Eq.~(\ref{equation : GC scheme}) reinforces the existing weights. The normalization introduces a common contribution to the gain across all coordinates. The quadratic correction offsets this contribution more strongly on coordinates with larger gradients. Together, they concentrate gain on coordinates with smaller gradients—corresponding to the off-support coordinates in our example—and thereby strengthen their associated implicit penalty. 
We highlight the net effect on the Gain:
\begin{align*}
\Delta Q_{k,j}
=
\underbrace{\log M_k}_{\text{shared gain}}
-
\underbrace{\log(1+z_{k,j}^2+z_{k,j}^4)}_{\text{larger gradients $\Rightarrow$ less gain}}.
\end{align*}

As discussed in Thm.~\ref{thm:sparse-recovery}, GD induces larger Gain on the support through its larger gradients, thereby increasing the implicit penalty on the support and favoring a denser solution.
The update in Definition~\ref{prop:gain-boost} reverses this effect. It concentrates Gain on coordinates with smaller gradients, which in our setting are the off-support coordinates. This strengthens their implicit penalty and promotes sparse recovery without requiring prior knowledge of the support.
In contrast to SGD, which tends to homogenize the Gain across coordinates, our scheme instead deliberately redistributes the Gain toward the off-support coordinates. The following lemma makes this modification explicit.

\begin{lemma}
\label{lem:gain-drift-control}
Let $(u_k,v_k)_{k\geq 0}$ be the iterates generated by the update
in Definition~\ref{prop:gain-boost}.
  For every $K\geq 1$ and $i\in\{1,\ldots,d\}$, the accumulated
Gain and Drift are as follows:
    \begin{equation*}
        Q_{K,i}^{\mathrm{GC}} = \sum_{k=0}^{K-1} \log\left( \frac{M_k}{1 + z_{k,i}^2 + z_{k,i}^4} \right) \quad \text{ and } \quad    D_{K,i}^{\mathrm{GC}} = \sum_{k=0}^{K-1} \left[ \log\left( \frac{1 + z_{k,i} + z_{k,i}^2}{1 - z_{k,i} + z_{k,i}^2} \right) - 2 z_{k,i} \right].
    \end{equation*}
\end{lemma}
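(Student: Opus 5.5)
The plan is to redo, for the scheme in Definition~\ref{prop:gain-boost}, the same change of variables that underlies Theorem~\ref{thm:gd_bias}. We write the dynamics in the ``sum/difference'' coordinates $w_{\pm,k}:=u_k\pm v_k$. Then $\beta_k=\tfrac14(w_{+,k}^{\odot2}-w_{-,k}^{\odot2})$ and $u_k^{\odot2}-v_k^{\odot2}=w_{+,k}\odot w_{-,k}$. For plain GD these coordinates decouple: $w_{\pm,k+1}=(\mathbf 1\mp z_k)\odot w_{\pm,k}$. In \citet{Even2023SGDOD} the per-step correction $q_\pm(z)=\mp2z-2\log|1\mp z|$ is exactly minus the log of the ratio between the GD factor $(1\mp z)^2$ for $w_\pm^{\odot2}$ and the gradient-flow factor $e^{\mp2z}$. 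We adopt the same reading for GC: $q_\pm^{\mathrm{GC}}$ is defined as minus the log of the ratio between the actual per-step multiplicative factor of $w_\pm^{\odot 2}$ and $e^{\mp 2z}$. We then set $a^{\mathrm{GC}}_{\pm,K}=\alpha^2\exp(-\sum_k q^{\mathrm{GC}}_\pm(z_k))$, and $Q^{\mathrm{GC}}$, $D^{\mathrm{GC}}$ as in Definitions~\ref{def:gain} and the Drift definition, i.e. as half-sum and half-difference of the accumulated $q_\pm$.

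\textbf{Step 1 (decoupling).} Adding and subtracting the two lines of Eq.~(\ref{equation : GC scheme}) coordinatewise gives
\begin{equation*}
w_{+,k+1,j}=\frac{1-z_{k,j}+z_{k,j}^2}{\sqrt{M_k}}\,w_{+,k,j},\qquad
w_{-,k+1,j}=\frac{1+z_{k,j}+z_{k,j}^2}{\sqrt{M_k}}\,w_{-,k,j}.
\end{equation*}
Since $1\pm z+z^2\ge \tfrac34>0$, these factors never vanish. This is worth noting: unlike GD, no restriction $z\neq\pm1$ is needed, and the absolute values disappear.

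\textbf{Step 2 (Gain).} Multiplying the two recursions shows that the balance evolves as
\begin{equation*}
u_{k+1}^{\odot2}-v_{k+1}^{\odot2}=\frac{(1+z_k^{\odot2})^{\odot2}-z_k^{\odot2}}{M_k}\odot(u_k^{\odot2}-v_k^{\odot2}).
\end{equation*}
Here $(1+z^2)^2-z^2=1+z^2+z^4$. This is the GC analogue of Eq.~(\ref{eq:gd-scale-change}), so the Gain increment is $-\log$ of this factor, namely $\log M_k-\log(1+z_{k,i}^2+z_{k,i}^4)$. Telescoping from $k=0$ to $K-1$ gives $Q^{\mathrm{GC}}_{K,i}$. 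Equivalently, it is $\tfrac12(q_+^{\mathrm{GC}}+q_-^{\mathrm{GC}})$ with
\begin{equation*}
q_\pm^{\mathrm{GC}}(z)=\mp2z-2\log(1\mp z+z^2)+\log M,
\end{equation*}
which serves as a consistency check.

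\textbf{Step 3 (Drift).} Using these $q^{\mathrm{GC}}_\pm$, the drift increment is $d^{\mathrm{GC}}=\tfrac12(q_+^{\mathrm{GC}}-q_-^{\mathrm{GC}})$. In this difference the shared normalization $\log M_k$ cancels, leaving
\begin{equation*}
d^{\mathrm{GC}}(z)=\log\frac{1+z+z^2}{1-z+z^2}-2z.
\end{equation*}
Summing over $k$ gives $D^{\mathrm{GC}}_{K,i}=\tfrac12\ln(a^{\mathrm{GC}}_{-,K,i}/a^{\mathrm{GC}}_{+,K,i})$, which is the claimed formula.

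The computations are elementary. The main obstacle is conceptual: justifying that $q_\pm^{\mathrm{GC}}$ so defined are the correct objects, i.e. that they play the same role in an implicit-bias statement as in Theorem~\ref{thm:gd_bias}. I would check this by rewriting $w_{\pm,K}^{\odot2}=w_{\pm,0}^{\odot2}\odot\exp\bigl(\mp2\sum_k z_k-\sum_k q^{\mathrm{GC}}_\pm(z_k)\bigr)$. Then $\beta_K$ is expressed through $\sum_k z_k=\gamma X^\top\sum_k r_k/n\in\operatorname{range}(X^\top)$ exactly as in Even et al.'s mirror-descent argument. This guarantees that the Gain/Drift bookkeeping, and hence Proposition~\ref{prop:gain_drift}, transfers verbatim with the GC quantities.
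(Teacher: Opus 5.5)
Your proof is correct and follows essentially the paper's route. You use the same $w_\pm$ change of variables, which decouples the GC update into the factors $(1\mp z_{k}+z_{k}^2)/\sqrt{M_k}$; the product then gives $Q^{\mathrm{GC}}$ and the ratio gives $D^{\mathrm{GC}}$. The only difference is bookkeeping: you read off $D^{\mathrm{GC}}$ as the half-difference of the branch corrections $q_\pm^{\mathrm{GC}}$ (where $\log M_k$ cancels), while the paper splits the increment of $\theta_k=\ln(w_{+,k}/w_{-,k})$ into $-2z_k$ plus a remainder, and the two agree because $\theta_K=-2\sum_{k<K} z_k-\tfrac12\ln(a_{-,K}/a_{+,K})$.
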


The implicit bias of the resulting dynamics however retains the same structure as GD.

\begin{proposition}[Implicit bias of the Gain Control scheme]
\label{prop:gain-control-scheme-implicit-bias}
    Suppose GC converges to an interpolator $\beta_\infty \in \mathcal{M}$ such that $X\beta_\infty = y$ and the Gain and Drift converge to finite $Q_\infty$ and $D_\infty$. Then, for a scale $a_{\infty,i} = \alpha^2 e^{-Q_{\infty,i}}$ in  Eq.~(\ref{eq:scale}) and potential function $\psi_a(\beta)$ as defined in Eq.~(\ref{eq:entropy-func}), we have 
    \begin{align*}
 \beta_\infty
    =
    \arg\min_{\beta\in\mathcal{M}}
    \left\{
        \psi_{a_\infty}(\beta)
        +\frac12\langle D_\infty,\beta\rangle
    \right\}.
    \end{align*}
\end{proposition}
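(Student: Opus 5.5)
We follow the mirror-descent argument behind Theorem~\ref{thm:gd_bias}, applied to the GC iterates of Definition~\ref{prop:gain-boost}. The standard change of variables is $w_\pm := u\pm v$. The GC update then acts diagonally on these:
\[
w_{\pm,k+1,j}=\frac{(1+z_{k,j}^2\mp z_{k,j})\,w_{\pm,k,j}}{\sqrt{M_k}} .
\]
Iterating gives closed forms:
\[
w_{\pm,K,j}=w_{\pm,0,j}\prod_{k<K}\frac{1\mp z_{k,j}+z_{k,j}^2}{\sqrt{M_k}},
\qquad w_{\pm,0}=\sqrt2\alpha\mathbf 1 .
\]
We use $\beta=\tfrac14(w_+^2-w_-^2)$ and $u^2-v^2=w_+w_-$. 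The first step is to write $w_{\pm,K}^2=2\alpha^2\exp(-\sum_k q^{GC}_\pm(z_k))\exp(\mp 2\sum_k \gamma g_{k})$. This defines the GC analogues $q^{GC}_\pm(z)=\mp2z-2\log(1\mp z+z^2)+\log M_k$, where the $\mp 2z$ is added and subtracted so that the exponent is linear in the gradients. From these we read off $a_{\pm,K}$ and define $a_K=\sqrt{a_{+,K}a_{-,K}}$. Then $\log(\alpha^2/a_K)$ reproduces $Q^{GC}_K$, using $(1+z+z^2)(1-z+z^2)=1+z^2+z^4$. Likewise $\tfrac12\log(a_{-,K}/a_{+,K})$ reproduces $D^{GC}_K$. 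This is exactly Lemma~\ref{lem:gain-drift-control}, which we may invoke.

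Next we express $\beta_K$ through these quantities. Let $\theta_K:=-\gamma\sum_{k<K}g_k=-\tfrac{\gamma}{n}X^\top\sum_{k<K}r_k\in\operatorname{range}(X^\top)$. Then
\[
\beta_{K}=\tfrac12\bigl(a_{+,K}e^{2\theta_K}-a_{-,K}e^{-2\theta_K}\bigr)
= a_K\sinh\!\bigl(2\theta_K - D_K\bigr),
\]
after factoring out $\sqrt{a_+a_-}$ with $e^{\pm D}=\sqrt{a_\mp/a_\pm}$. The bookkeeping of signs and the factor $2$ must be matched to the normalization of $\psi_a$ in Definition~\ref{def:hypentropy}. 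Inverting gives $\arsinh(\beta_K/a_K)+D_K = 2\theta_K$, and by the definition of $\psi_a$ this yields $2\nabla\psi_{a_K}(\beta_K)+D_K\in\operatorname{range}(X^\top)$. Passing to the limit uses the assumed convergence of $\beta_K\to\beta_\infty$, $Q_K\to Q_\infty$ and $D_K\to D_\infty$, together with continuity of $\arsinh$ and closedness of the subspace $\operatorname{range}(X^\top)$. The limit gives $\nabla\psi_{a_\infty}(\beta_\infty)+\tfrac12D_\infty\in\operatorname{range}(X^\top)$. Note that no summability of $\theta_K$ itself is needed: $\theta_K$ is recovered from $\beta_K$.

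Finally, the objective $\psi_{a_\infty}(\beta)+\tfrac12\langle D_\infty,\beta\rangle$ is strictly convex, since $\psi_a$ has Hessian $\tfrac12\operatorname{diag}((\beta^2+a^2)^{-1/2})\succ0$. It is minimized over the affine set $\mathcal M=\beta_\infty+\ker X$. The KKT condition for this problem is exactly that its gradient lies in $(\ker X)^\perp=\operatorname{range}(X^\top)$, which is what the previous step established. Since also $X\beta_\infty=y$, $\beta_\infty$ is the unique minimizer. The shape of this argument mirrors Proposition~\ref{prop:gain_drift}: GD and GC differ only in the per-step multipliers, which enter solely through $a_\infty$ and $D_\infty$. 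The main obstacle is the middle step: the constant factors and signs must be tracked so that the $\tfrac12$ in front of $D_\infty$ and the argument of $\arsinh$ match $\psi_a$ exactly. We must also check that the normalization $\sqrt{M_k}$ and the multipliers $1\mp z+z^2$ never vanish; they are always positive, so unlike GD there are no poles. This step therefore depends only on the stated convergence hypotheses.
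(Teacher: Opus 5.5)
Your proposal is correct and follows essentially the same route as the paper. The branch variables $u\pm v$ diagonalize the GC update, and their product and ratio yield $a_K=\alpha^2e^{-Q_K}$ and $\arsinh(\beta_K/a_K)+D_K=-2\sum_{k<K}z_k\in\operatorname{range}(X^\top)$; closedness of $\operatorname{range}(X^\top)$ and strict convexity with KKT then finish. The only cosmetic difference is that you go through GC analogues $q_\pm^{GC}$ and branch scales $a_{\pm,K}$ in the style of Theorem~\ref{thm:gd_bias}, while the paper works directly with $a_K=w_+w_-$ and $\theta_K=\log(w_+/w_-)$.
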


Thus, GC preserves the implicit-bias formulation of GD in Eq.~(\ref{eq:implicit-bias-GD}) but with the modified Gain and Drift from Lemma~\ref{lem:gain-drift-control}. The preceding characterization is conditional on convergence. We therefore show next that, for sufficiently small constant step size, the full-batch GC dynamics indeed converge to an interpolating solution and admit finite limiting Gain and Drift.
\begin{theorem}[Convergence of the Gain Control scheme]
\label{thm:gc-convergence}
Assume $X\neq0$, $y\in\operatorname{range}(X)$, and $u_{0,i}>|v_{0,i}|$ for all $i$. For a fixed $b$ satisfying $Xb=y$, set $a_{0,i}=(u_{0,i}^2-v_{0,i}^2)/2$, and define
$
    \kappa=\frac1n\max_{1\leq i\leq d}\|X_{:,i}\|_2^2, \
    R=4\mathcal D_{a_0}(b,\beta_0)+2\|a_0\|_1
    +2\|b\|_1\left[
        1+\log\left(1+\frac{4\|b\|_1}{\min_i a_{0,i}}\right)
    \right],
$
where $\mathcal D_a$ is the Bregman divergence generated by $\psi_a$. For every constant step size
$0<\gamma\leq (32\kappa R)^{-1}$,
the full-batch iterates of the Gain Control scheme in Eq.~(\ref{equation : GC scheme}) satisfy $
    (u_k,v_k)\longrightarrow(u_\infty,v_\infty) 
$ and $ X(u_\infty\odot v_\infty)=y$, while
the accumulated Gain and Drift converge to finite limits $Q_\infty^{\mathrm{GC}},D_\infty^{\mathrm{GC}}\in\mathbb R^d$.
\end{theorem}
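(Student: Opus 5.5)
The plan is to treat the GC scheme as mirror descent on the hyperbolic-entropy geometry with a time-varying, nonincreasing scale, in the spirit of \citet{Even2023SGDOD}.

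\textbf{Step 1: change of variables.} Pass to $w_{\pm,k}:=u_k\pm v_k$. Eq.~(\ref{equation : GC scheme}) then diagonalizes as
\[
w_{+,k+1}=\frac{1-z_k+z_k^{\odot2}}{\sqrt{M_k}}\odot w_{+,k},\qquad
w_{-,k+1}=\frac{1+z_k+z_k^{\odot2}}{\sqrt{M_k}}\odot w_{-,k}.
\]
Both multipliers are strictly positive, so $u_k>|v_k|$ is preserved. We have $\beta_k=\tfrac14(w_+^{\odot2}-w_-^{\odot2})$ and $a_k=\tfrac12 w_+\odot w_-$. The scale evolves as
\[
a_{k+1}=a_k\odot\frac{1+z_k^{\odot2}+z_k^{\odot4}}{M_k}\le a_k,
\]
which recovers the Gain formula of Lemma~\ref{lem:gain-drift-control}. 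Since $\arsinh(\beta/a)=\log(w_+/w_-)$, the mirror coordinate satisfies
\[
\nabla\psi_{a_{k+1}}(\beta_{k+1})=\nabla\psi_{a_k}(\beta_k)-\gamma g_k+\varepsilon_k,
\qquad |\varepsilon_{k,i}|\lesssim|z_{k,i}|^3,
\]
with $\varepsilon_k$ exactly the Drift increment $d^{\mathrm{GC}}$ of Lemma~\ref{lem:gain-drift-control}. So GC is a perturbed mirror descent with potentials $\psi_{a_k}$.

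\textbf{Step 2: Lyapunov inequality.} Fix the interpolator $b$. I will expand $\mathcal D_{a_{k+1}}(b,\beta_{k+1})-\mathcal D_{a_k}(b,\beta_k)$ with the three-point identity and obtain three kinds of terms. The first is $-\gamma\langle g_k,\beta_k-b\rangle=-2\gamma L(\beta_k)$. The second is a second-order term controlled by the local smoothness of $\psi_a^*$. Its Hessian is $\operatorname{diag}(\sqrt{\beta^2+a^2})\preceq |\beta|+a$, so this term is at most $\gamma^2\kappa(\|\beta_k\|_1+\|a_k\|_1)\cdot 2L(\beta_k)$. The third collects the scale change $\psi_{a_{k+1}}-\psi_{a_k}$ and the cubic error $\langle\varepsilon_k,\beta_k-b\rangle$. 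For the scale change I use that $\partial_a\psi_a$ is bounded by a function of $|\beta|/a$, producing the $\log(1+4\|b\|_1/\min a_0)$ factor. For the cubic error I use $|z|^3\le\gamma^2\kappa\,2L\cdot|z|$. With an a priori bound $\|\beta_k\|_1\le R$, obtained by induction since $\mathcal D_{a}(b,\beta)$ controls $\|\beta\|_1$ up to $\|a\|_1,\|b\|_1$ terms (this is how $R$ arises), the condition $\gamma\le(32\kappa R)^{-1}$ lets these terms absorb into half of the descent term. The result is
\[
\mathcal D_{a_{k+1}}(b,\beta_{k+1})\le\mathcal D_{a_k}(b,\beta_k)-\gamma L(\beta_k)+(\text{scale-change term}).
\]
The scale-change term telescopes, because $a_k$ is monotone and bounded by $a_0$. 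This step is the main obstacle, since the normalization $M_k$ makes all coordinates shrink together and the interplay of shrinking $a$ with the bound on $\|\beta\|_1$ must be closed by induction. I would first try proving $\mathcal D_{a_k}(b,\beta_k)\le R/4$ for all $k$ jointly with the descent inequality.

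\textbf{Step 3: summability.} Summing the Lyapunov inequality gives $\sum_k L(\beta_k)<\infty$. Since $\|z_k\|_\infty^2\le 2\gamma^2\kappa L(\beta_k)$, we get $\sum_k\|z_k\|_\infty^2<\infty$. Then $\log M_k\le C\|z_k\|_\infty^2$ and $\log\tfrac{M_k}{1+z^2+z^4}\ge0$, so each Gain increment is nonnegative and summable, and $Q^{\mathrm{GC}}_k\to Q^{\mathrm{GC}}_\infty$ is finite. Likewise $|d^{\mathrm{GC}}(z)|=O(|z|^3)$ gives absolutely convergent Drift, so $D^{\mathrm{GC}}_\infty$ is finite. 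Consequently $a_k\to a_\infty>0$ componentwise.

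\textbf{Step 4: convergence of iterates.} Since $L(\beta_k)\to0$ and $(\beta_k)$ is bounded, every limit point $\bar\beta$ interpolates. Rerunning Step 2 with $b=\bar\beta$ shows that $\mathcal D_{a_k}(\bar\beta,\beta_k)$ is, up to a summable perturbation, nonincreasing. Along the subsequence converging to $\bar\beta$ it tends to $0$, because $a_k\to a_\infty>0$ makes the divergences jointly continuous. Hence $\beta_k\to\bar\beta$. Finally, $w_\pm$ are recovered continuously from $(\beta,a)$ via $w_\pm=\sqrt{2a}\,e^{\pm\frac12\arsinh(\beta/a)}$. This yields $(u_k,v_k)\to(u_\infty,v_\infty)$ with $X(u_\infty\odot v_\infty)=y$.
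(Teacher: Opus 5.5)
Your architecture matches the paper's: branch variables with an exact mirror recursion, a one-step Bregman descent closed by induction through coercivity, summability, positive limiting scales, a Fej\'er-type argument at a subsequential limit, and recovery of $(u,v)$ from $(\beta,a)$. The gap is in Step 2, in how you handle the scale-change term. In the three-point expansion it appears as $\delta\psi_k(b)-\delta\psi_k(\beta_k)$, with $\delta\psi_k=\psi_{a_{k+1}}-\psi_{a_k}\ge 0$. The second piece does not telescope, but it is nonpositive and can be dropped. The first telescopes to $\psi_{a_K}(b)-\psi_{a_0}(b)$. Since $\psi_a(b)$ decreases in $a$ and grows like $\tfrac12\sum_j|b_j|\log(1/a_j)$ as $a\downarrow 0$, knowing that $a_k$ is monotone and bounded by $a_0$ points the wrong way. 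Bounding this sum needs a lower bound on $a_K$, i.e.\ finite Gain, which you only obtain in Step 3 from the summed inequality, so as written the argument is circular. The missing observation, which you already state in Step 3, is that the scale change is itself second order and proportional to the loss. Indeed $0\le\log(a_{k,j}/a_{k+1,j})\le\log M_k\le 2s_k^2$, with $s_k=\|z_k\|_\infty$ and $s_k^2\le 2\kappa\gamma^2 L(\beta_k)$. Together with $0\le\frac{d}{dt}\psi_{ae^{-t}}(x)\le |x|/2$, this gives $0\le\delta\psi_k(b)\le s_k^2\|b\|_1$. This is absorbed into $-2\gamma L(\beta_k)$ along with the cubic and second-order errors, giving the clean inequality $\mathcal D_{a_{k+1}}(b,\beta_{k+1})\le\mathcal D_{a_k}(b,\beta_k)-\gamma L(\beta_k)$ with nothing left over. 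No $\log(1+4\|b\|_1/\min_i a_{0,i})$ factor arises from this term.

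This also bears on the induction you flag as the main obstacle. Your proposed hypothesis $\mathcal D_{a_k}(b,\beta_k)\le R/4$ would not close: coercivity would then only give $\|\beta_k\|_1\le R+2\|a_0\|_1+2\|b\|_1\log(\cdots)$, which exceeds $R$. You need $\mathcal D_{a_k}(b,\beta_k)\le\mathcal D_{a_0}(b,\beta_0)$, plus a coercivity bound that is uniform in $k$ even though $a_k$ shrinks. The paper gets uniformity from monotonicity of the divergence in the scale. Because $\nabla^2\psi_a=\operatorname{diag}(1/(2\sqrt{x^2+a^2}))$ increases as $a$ decreases, $\mathcal D_{a_k}(b,\beta)\ge\mathcal D_{a_0}(b,\beta)$. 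Coercivity at scale $a_0$ then yields $\|\beta\|_1\le 4\mathcal D_{a_k}(b,\beta)+2\|a_0\|_1+2\|b\|_1\log(1+4\|b\|_1/\min_i a_{0,i})$, and this is where the log term in $R$ comes from. Alternatively, inside the induction you have $\log(a_{0,j}/a_{k,j})\le 4\kappa\gamma^2\sum_{\ell<k}L(\beta_\ell)\le 4\kappa\gamma V_0\le 1/32$, and the $+1$ in $R$ absorbs the resulting constant. Two smaller points. First, the second-order term $\mathcal D_{a_{k+1}}(\beta_k,\beta_{k+1})$ is evaluated along the segment to $\beta_{k+1}$, so you need an a priori bound on $\|\beta_{k+1}\|_1$; the paper shows it is at most $3R$. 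Second, $\varepsilon_k=z_k-\tfrac12\log\frac{1+z_k+z_k^2}{1-z_k+z_k^2}$ is $-\tfrac12$ times the GC drift increment, not the increment itself.
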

\section{Observation in practice}
\label{section:observation}
\begin{figure}[tbp]
    \centering
    \includegraphics[width=\linewidth]{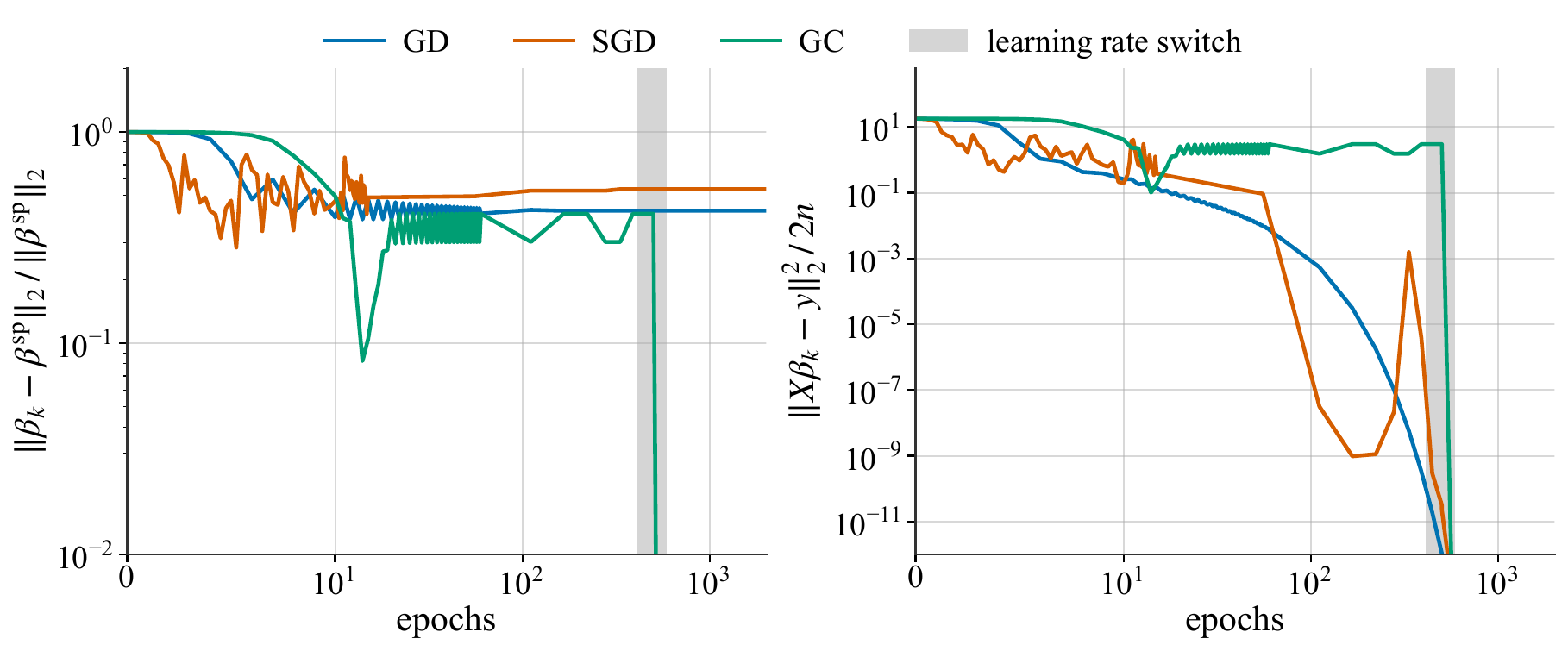}
    \caption{Sparse recovery for GD, SGD and GC on the noiseless sparse regression problem. (\emph{Left}) Distance to the sparse teacher. Due to its implicit bias (Prop. \ref{prop:gain-control-scheme-implicit-bias}), GC recovers a predictor substantially closer to the sparse teacher. (\emph{Right}) Loss curve for the three methods. The learning rate switch is to allow GC to converge.}
    \label{fig:noiseless-conv}
\end{figure}
\paragraph{Experiments.}
While our Gain Control scheme converges provably for smaller learning rates (Thm.~\ref{thm:gc-convergence}), we wish to test sparse recovery for larger learning rates experimentally.  
First, we consider noiseless sparse regression with (n=48) observations and (d=768) features, using a normalized Gaussian design matrix $X$ with teacher generated columns and labels $(y=X\beta^{\mathrm{sp}})$ 
where $|\text{supp}(\beta^{\mathrm{sp}})| = 4$. 
We train a 2-layer diagonal linear networks initialized with $\alpha=0.03$ for 10,000 epochs, comparing full-batch GD, SGD with batch size 12, and our Gain Control scheme (GC). We use a learning rate $\gamma=0.35$ for the first 500 epochs, then decrease it to $\gamma=0.05$ to ensure convergence. All the runs reach the EoS and oscillate around the stability threshold. Fig. \ref{fig:noiseless-conv} shows that GC recovers the sparse teacher much better than (S)GD. The learning rate switch helps only convergence, but has no effect on the performance of (S)GD, whose Gain and Drift prevents accurate recovery. We provide further experiments in App.~\ref{app:noiseless-sr}.

\paragraph{Exploratory analysis of transformer query-key pairs.}
As an exploratory test of whether the Gain/Drift competition extends beyond
diagonal linear networks, we consider transformers with query-key product $\mathbf{Q}\mathbf{K}^\top$,
whose singular values govern the effective rank of the attention logits, and define a
per-coordinate Gain and Drift on its singular values in a frame fixed at
initialization, in analogy with Definitions \ref{def:gain} and \ref{def:drift}.\footnote{This ignores rotation of the singular subspaces and the fact that Muon is not coordinatewise GD. So we treat the result as suggestive only, not a validation of the theory.} Sparsity of this now corresponds to low rank.
We pretrain a DeiT-Small  \citep{Touvron2020TrainingDI} on ImageNet \citep{imagenet} for $300$ epochs with Muon \citep{jordan2024muon}, storing snapshots every $10$ epochs and computing per-head Gain and Drift in this fixed frame (Fig.~\ref{fig: deit query key}). Consistent with the diagonal-network picture, Drift accumulates significantly earlier. 
It then starts plateauing while Gain keeps growing. Across our three configurations (Table~\ref{tab:imagenet-effective-rank}), the runs with the largest Drift at epoch $300$ also attain the lowest effective rank, hinting that the analyzed competition may shape solution sparsity also in this setting. 

\begin{figure}[tbp]
    \centering
    \includegraphics[width=\linewidth]{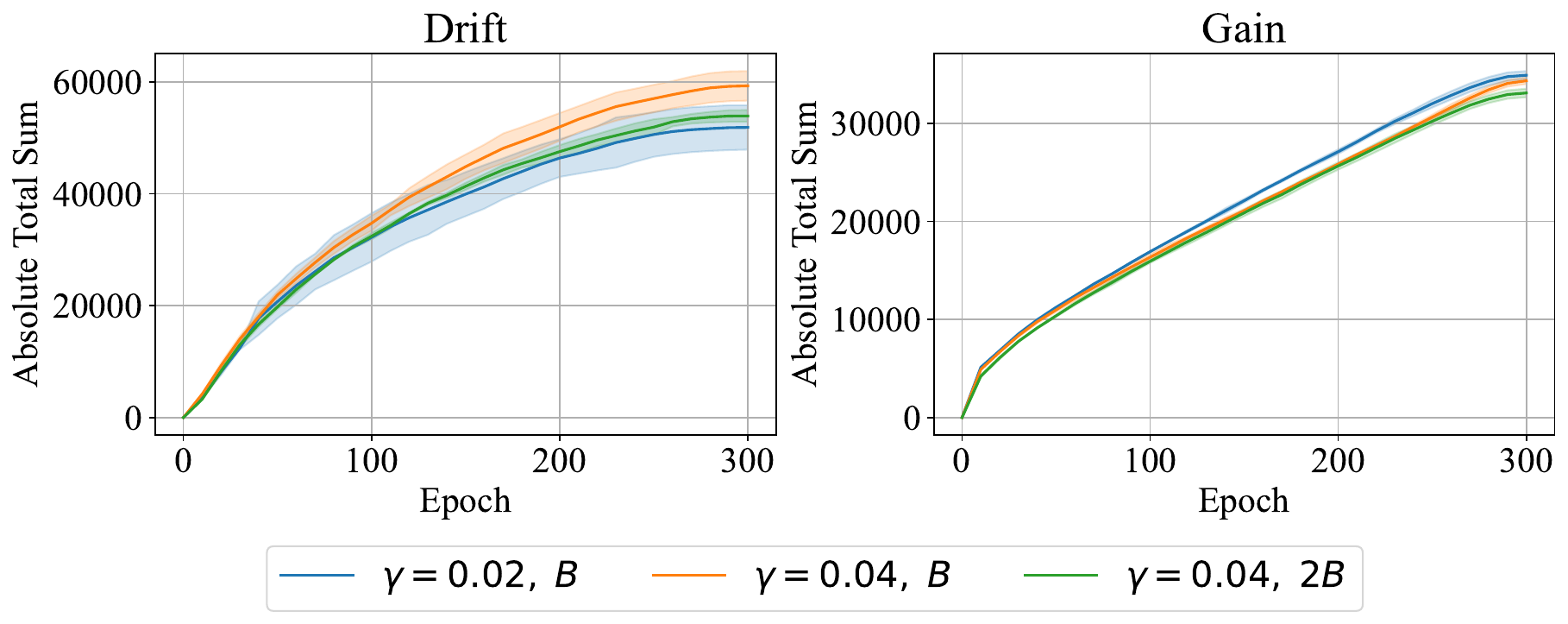}
    \caption{Accumulated Drift and Gain of all Query-Key pairs for a DeiT-Small trained on ImageNet with Muon, across three (batch size, learning rate) configurations. Both quantities grow substantially early in training, but Drift starts plateauing while Gain continues to accumulate throughout training.}
    \label{fig: deit query key}
\end{figure}
\section{Conclusion} This work connects training dynamics at large learning rates with the sparsity of solutions in linear regression settings. Central to this connection is a hidden driver, which we term the Drift, that implicitly competes with the Gain to shape the early stages of training, which is consequential for model selection and thus generalization. Using diagonal linear networks as an analytically tractable framework, we characterize the role of the Drift and demonstrate experimentally that these concepts could extend to modern architectures and optimization algorithms. 
Our findings further highlight how understanding trajectory-dependent quantities such as the Drift and Gain can inform the development of methods that exploit their behavior. 
As a first step in this direction, we introduce a Gain Control scheme that biases the optimization trajectory toward sparser solutions, even in the EoS regime. 
Extending this scheme and evaluating its applicability to modern networks and architectures are promising directions for future work.

\subsection*{Acknowledgments}
AS thanks the  Konrad Zuse School of Excellence in Learning and Intelligent Systems for his Masters' scholarship. 
RB and TJ gratefully acknowledge funding 
from the European Research Council (ERC) under the Horizon Europe Framework Programme
(HORIZON) for proposal number 101116395 SPARSE-ML.

\clearpage
\bibliographystyle{abbrvnat}
\bibliography{refs}
\clearpage
\appendix
\section{Experiments}
\subsection{Noiseless sparse regression}\label{app:noiseless-sr}
In this section, we show that GC converges on the noiseless regression problem $y = X\beta$ for moderate learning rates. We use a training setup of ($n, d, \alpha) = (48, 768, 0.03)$. We train a diagonal linear network using GD with learning rate $\gamma = 0.1$ for 10,000 epochs. We use a batch size of 4 for SGD. Fig \ref{fig:noiseless-lowrate} shows that GC recovers the sparse teacher much better than (S)GD, although converging slower. 

\begin{figure}[H]
    \centering
    \includegraphics[width=0.88\linewidth]{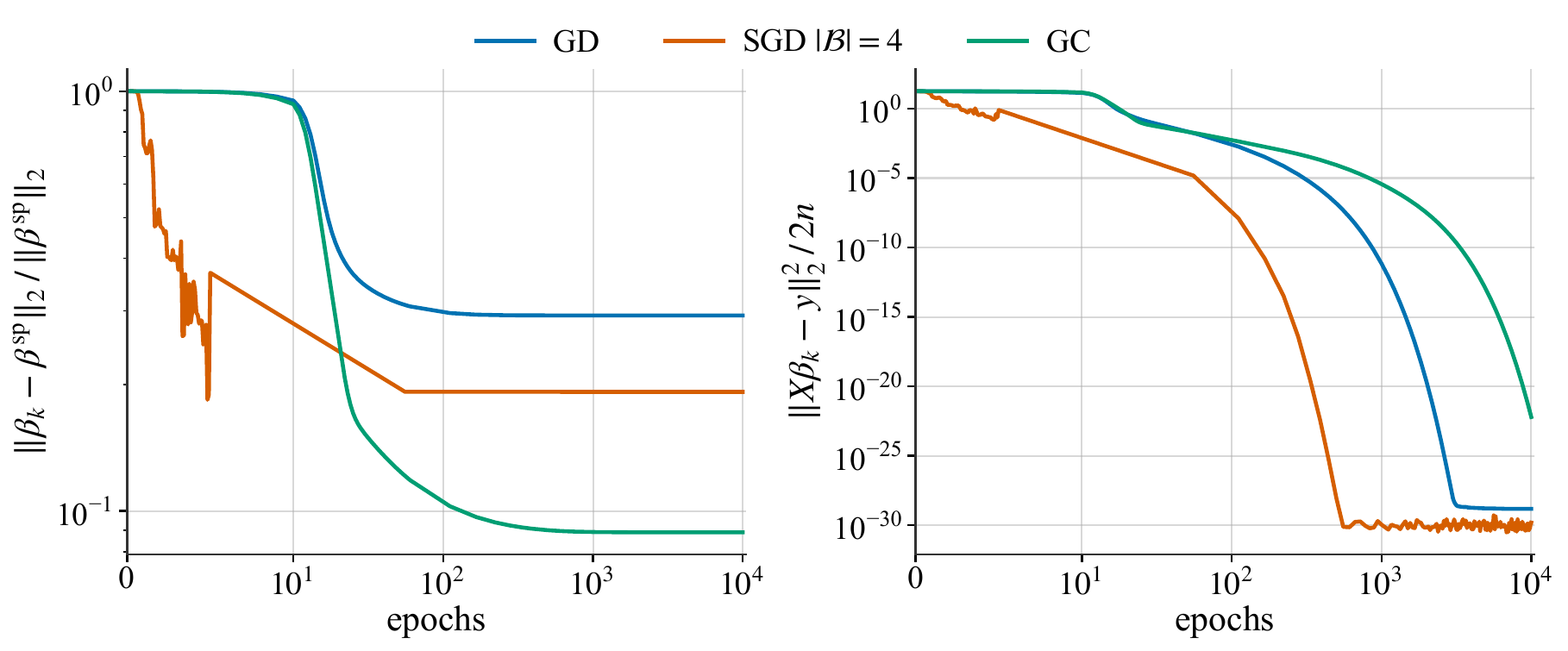}
    \caption{Sparse recovery in noiseless regression with  $\gamma=0.1$. Left: normalized recovery error. Right: normalized training loss. GC achieves substantially lower recovery error although it converges slower compared to (S)GD.}
    \label{fig:noiseless-lowrate}
\end{figure}

We show in Fig \ref{fig:noiseless-conv} in Sec \S\ref{section:observation} that GC recovers the sparse teacher for larger learning rates ($\gamma = 0.35$), which is lowered to $\gamma = 0.05$ for convergence. Next, we further experiment with cosine annealing as a learning rate schedule, where \begin{equation*}
    \gamma_k = \gamma_{\min} + (\gamma_{\max}-\gamma_{\min})\cdot\frac{1+\cos(\pi k/K)}{2}
\end{equation*}
with $(\gamma_{\min},\ \gamma_{\max},\ K) = (0.05,\ 0.35,\ 10000)$. Fig. \ref{fig:noiseless-cosine} shows that the recovery advantage of GC also holds under cosine annealing in noiseless settings.
\begin{figure}[H]
    \centering
    \includegraphics[width=0.88\linewidth]{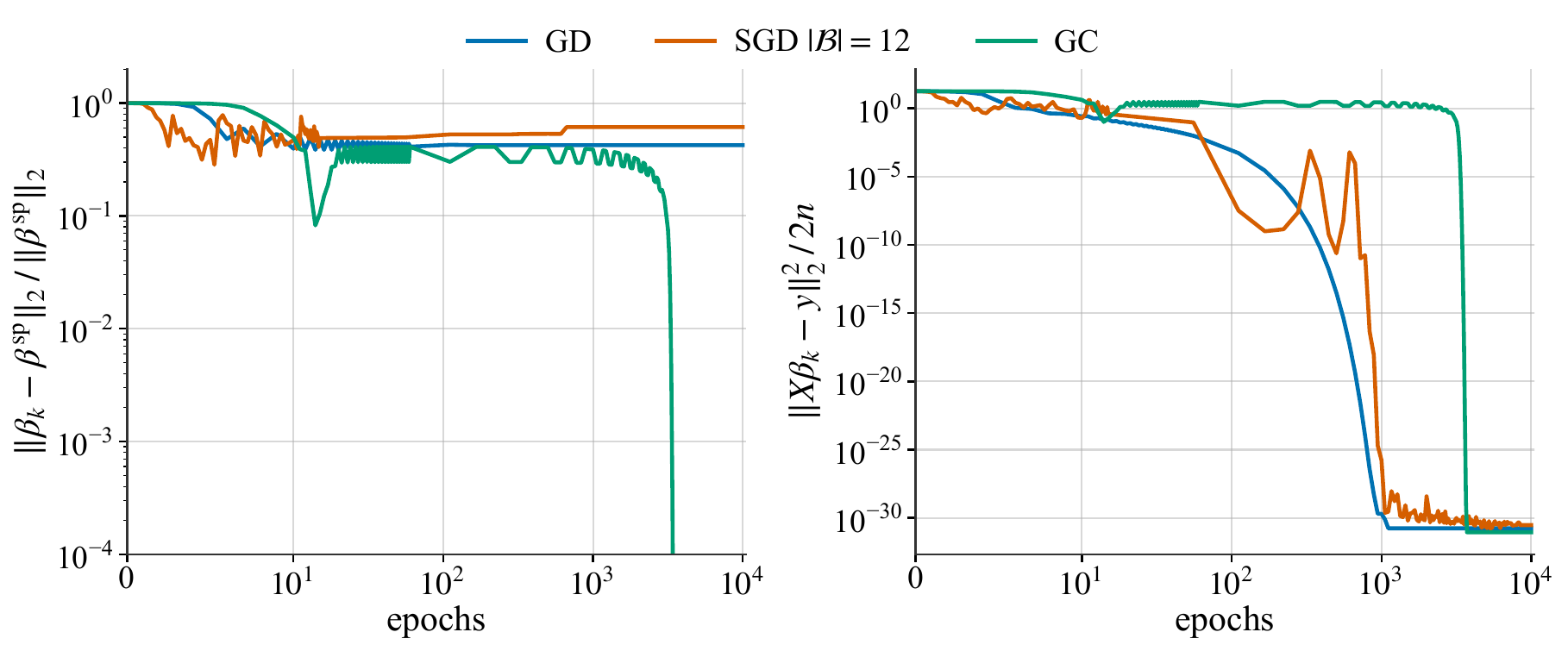}
    \caption{Sparse recovery in noiseless regression with cosine annealing from $\gamma_{\max}=0.35$ to $\gamma_{\min}=0.05$. Left: normalized distance to the sparse teacher. Right: normalized training loss; GC recovers the sparse teacher substantially better than (S)GD.}
    \label{fig:noiseless-cosine}
\end{figure}

In the same setup as that of the noiseless regression (App.~\ref{app:noiseless-sr}) with $(n,d,\alpha)=(48,768,0.03)$ and a $4$-sparse teacher
$\beta^{\mathrm{sp}}$, trained with GD at $\gamma=0.35$ upto 500 iterations then dropped to $\gamma = 0.05$. Fig. \ref{fig:d-importance} shows that ignoring the drift in the potential recovers an incorrect, denser solution and is needed to approximate the GD solution accurately. We also show the accumulated Gain and Drift during the oscillatory phase of EoS to highlight Lemma~\ref{lem:drift-cancellation} in Fig.~\ref{fig:drift-cancellation}.
\begin{figure}[H]
    \centering

    \includegraphics[width=0.88\linewidth]{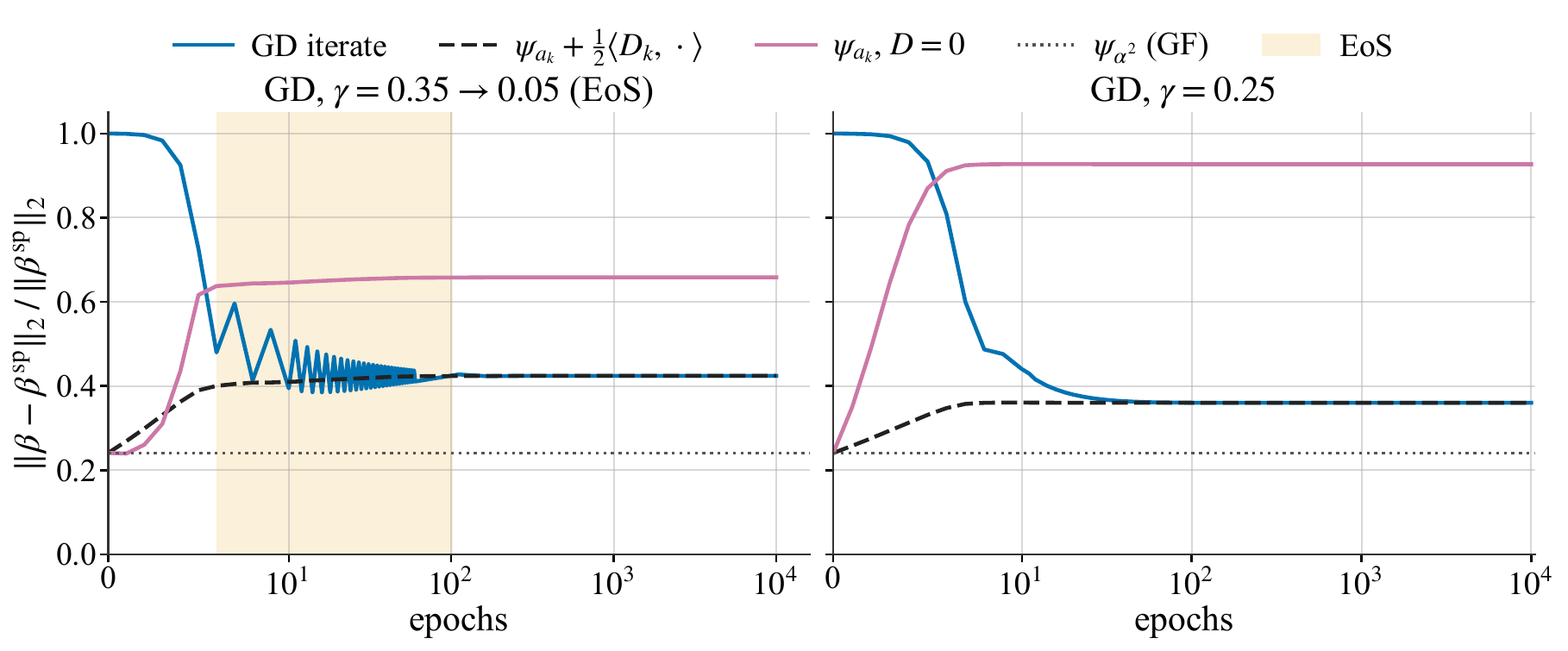}
    \caption{The Drift is necessary for sparse recovery.  At every checkpoint $k$ we minimise the implicit-bias
potential over the interpolation manifold $\mathcal{M}$, using the Gain $Q_k$ and Drift
$D_k$ accumulated so far, and plot the distance of that minimizer to the sparse teacher.
With the Drift, $\psi_{a_k}+\tfrac12\langle D_k,\cdot\rangle$ (dashed) converges exactly to
the GD iterate (blue), as predicted by Prop.~\ref{prop:gain_drift}. Keeping the same Gain but setting $D=0$
(purple) selects a much denser interpolator. The gradient-flow potential $\psi_{\alpha^2}$ (dotted, $Q=D=0$)
is shown for reference. GF recovers the sparsest solution. The shaded region
marks the oscillatory phase ($\gamma_k S_k \ge 2$)}
\label{fig:d-importance}
\end{figure}
\begin{figure}[H]
    \centering
\includegraphics[width=0.88\linewidth]{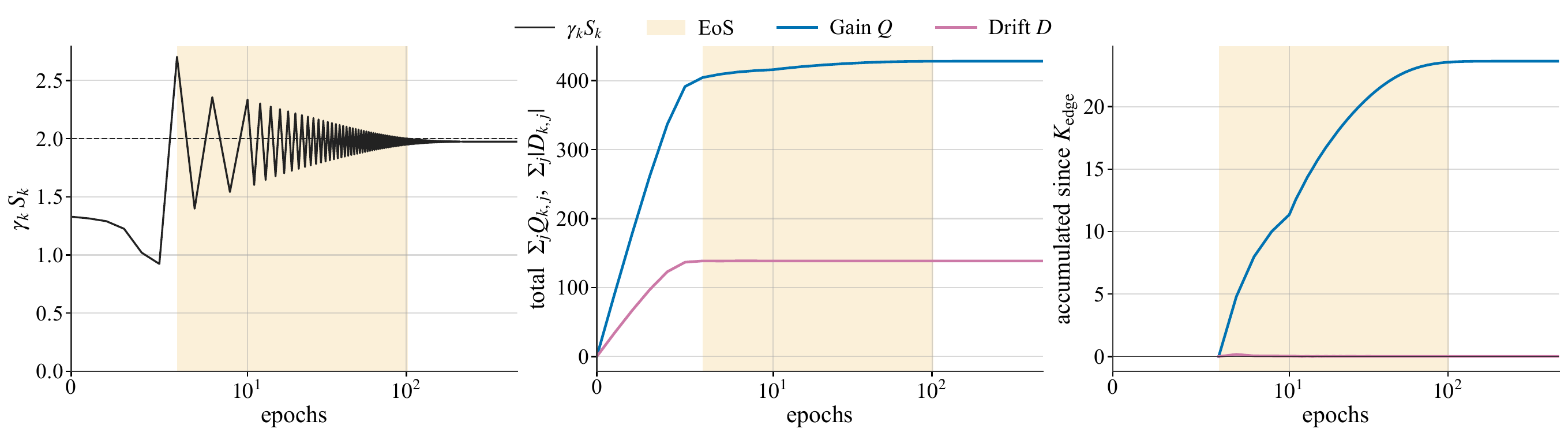}
    \caption{Following the onset of EoS oscillations, Gain continues to accumulate while the net change in Drift remains small. Measuring both quantities relative to $K_{\mathrm{edge}}$ makes this separation explicit, supporting the observed saturation of Drift during the oscillatory phase.}
    \label{fig:drift-cancellation}
\end{figure}
\section{Extended Related Works}
\label{app:extended-related-works}
\paragraph{Implicit regularization.}
Optimization algorithms can favor particular interpolators even without explicit regularization. In linear networks, this phenomenon has been studied through matrix factorization, sparse recovery, and the transition between kernel and feature-learning regimes
\citep{gunasekar2017implicit,arora2019implicitregularizationdeepmatrix,vaškevičius2019implicitregularizationoptimalsparse,woodworth2020kernel}.
Reparameterization, normalization, and optimization geometry provide further ways to influence solution selection
\citep{Chou2021MoreIL,Chou2023RobustIR,chou2024induce,gunasekar2020characterizing,jacobs2024maskmirrorimplicitsparsification,jacobs2025mirror,Jacobs2026ImplicitBO,jacobs2026never, liang2025implicit, Liang2026ImplicitBO, kolb2025deep, Ziyin2022spredSL}.
Our work studies how discrete optimization modifies the implicit bias of diagonal linear networks, producing competing tendencies toward sparse and dense interpolators.

\paragraph{Stochasticity and discretization.}
Stochasticity and large learning rates can substantially alter implicit regularization and convergence
\citep{pesme2021implicit,Andriushchenko2022SGDWL,Liang2026ImplicitBO, galli2026flatlandadventuresgradientdescent, Shalova2024SingularlimitAO}, while small-initialization analyses reveal saddle-to-saddle feature learning \citep{pesme2023saddletosaddle}.
Most closely related, \citet{Even2023SGDOD} characterize the implicit bias of discrete GD and SGD in diagonal networks through an effective initialization and a shifted Bregman reference point.
We build on this trajectory-dependent characterization by analyzing the competition between Gain and Drift across progressive sharpening and oscillatory training.
Backward error analysis also connects discretization to implicit regularization \citep{barrett2021implicit}, while studies of sharpness-aware minimization illustrate how modified updates affect solution selection \citep{JMLR:v25:23-1285}.
Our Gain Control algorithm uses the identified mechanism to actively steer the recovered interpolator.

\paragraph{Dynamics at the Edge of Stability.}
\citet{Cohen2021GradientDO} document progressive sharpening followed by persistent oscillations near the stability threshold.
Subsequent analyses explain curvature-dependent dynamics and self-stabilization
\citep{Arora2022UnderstandingGD,Damian2022SelfStabilizationTI}, with extensions to adaptive methods and non-Euclidean geometries
\citep{Cohen2022AdaptiveGM,Islamov2026NonEuclideanGD}.
Central flow and Rod Flow approximate the evolution of oscillatory GD trajectories
\citep{Cohen2024UnderstandingOI,Regis2026RodFA}.
Related work connects large learning rates to balancing and catapult dynamics \citep{JMLR:v26:23-1691}, and examines chaos, stochastic loss spikes, and nonlinear stability
\citep{Liang2025GradientDW,Gess2026LargeSI,Mulayoff2026OnTS}.
Our analysis connects these training phases to the accumulated quantities determining sparsity: Gain grows monotonically and favors density in our setting, whereas Drift can counteract this tendency before the oscillatory phase begins.

\paragraph{Sharpness and competing biases.}
\citet{Kreisler2023GradientDM} show that GD monotonically reduces the sharpness of the corresponding gradient-flow solution in scalar networks, providing a connection between discrete updates and the sharpness of the eventual solution.
Complementarily, \citet{Fojtk2025ConflictingBA} demonstrate that norm and sharpness biases can compete in diagonal linear networks, with neither alone explaining generalization.
We identify how Gain and Drift govern this competition in our setting, using a minimal network with explicit sparse and dense interpolators to determine where each effect dominates.
We further show that controlling Drift can recover sharper sparse solutions.
Related work studies generalization through long-term EoS dynamics \citep{Tuci2026GeneralizationAT} and below EoS \citep{Liang2025GeneralizationBT}; a broader overview of convergence, implicit bias, and EoS appears in \citet{11561020}.

\section{Omitted Proofs}
\label{app:proofs}
\subsection{Proofs of Section \ref{section : theoretical analysis of drift and gain}}\label{appendix : proofs of section 4}

\begin{proof}[Proof of Proposition \ref{prop:gain_drift}]
Since $q_\pm=q\pm d$, the effective branch scales satisfy
 $    a_{\pm,\infty}
    =
    a_\infty\odot\exp(\mp D_\infty).
$
Consequently, the perturbed reference point is
\[
    \widetilde\beta_0
    =
    \frac12(a_{+,\infty}-a_{-,\infty})
    =
    -a_\infty\odot\sinh(D_\infty).
\]
Since 
$\nabla\psi_a(\beta)
=\frac12\operatorname{arsinh}(\beta/a)$,
we get 
$
    \nabla\psi_{a_\infty}(\widetilde\beta_0)
    =-\frac12D_\infty.
$
Expanding the Bregman divergence therefore gives
\[
    \mathcal D_{\psi_{a_\infty}}
        (\beta,\widetilde\beta_0)
    =
    \psi_{a_\infty}(\beta)
    +\frac12\langle D_\infty,\beta\rangle+C,
\]
where $C$ is independent of $\beta$.
Dropping this constant proves the claim.
\end{proof}

\begin{proof}[Proof of Lemma \ref{lem:drift-accumulation}]
Let the accumulated drift in the progressive sharpening phase be $D_{K_{\mathrm{ps}}}$.
By \eqref{eq:drift-assumptions}, every coordinate of every $z_k$ lies in
$[-\rho,\rho]\subset(-1,1)$, so $D_{K_{\mathrm{ps}}}$ is well defined.

The function $d$ is smooth and odd on $(-1,1)$, with $d(0)=0$ and
\[
  d'(z)=\frac{1}{1+z}+\frac{1}{1-z}-2=\frac{2z^2}{1-z^2},
  \qquad |z|<1 .
\]
Hence, by the fundamental theorem of calculus, for every $s\in[0,\rho]$,
\[
  d(s)=2\int_0^s\frac{t^2}{1-t^2}\,dt .
\]
For $t\in[0,s]\subseteq[0,\rho]$ we have $1\le(1-t^2)^{-1}\le(1-\rho^2)^{-1}$.
Integrating these bounds against $2t^2$ over $[0,s]$ yields
\begin{equation}\label{eq:d-cubic}
  \frac{2}{3}\,s^3
  \;\le\; d(s) \;\le\;
  \frac{2}{3(1-\rho^2)}\,s^3,
  \qquad s\in[0,\rho].
\end{equation}

Fix $j\in\{1,\ldots,d\}$ and $0\le k<K_{\mathrm{edge}}$. Since $\sigma_j\in\{-1,1\}$
and $\sigma_j z_{k,j}\ge 0$, we have $\sigma_j z_{k,j}=|z_{k,j}|$. Oddness of $d$
gives $d(\sigma_j z)=\sigma_j\,d(z)$, and therefore
\[
  \sigma_j\,d(z_{k,j})
  =d(\sigma_j z_{k,j})
  =d\bigl(|z_{k,j}|\bigr)
  \ge 0,
\]
where nonnegativity follows from \eqref{eq:d-cubic}. Summing over $k$,
\[
  \sigma_j D_{K_{\mathrm{edge}},j}
  =\sum_{k=0}^{K_{\mathrm{edge}}-1}d\bigl(|z_{k,j}|\bigr)
  \;\ge\; 0 ,
\]
and since $|\sigma_j|=1$, it follows that
$|D_{K_{\mathrm{edge}},j}|=|\sigma_j D_{K_{\mathrm{edge}},j}|=\sigma_j D_{K_{\mathrm{edge}},j}$.

Applying \eqref{eq:d-cubic} to each summand with $s=|z_{k,j}|\in[0,\rho]$, and
using $|z_{k,j}|=\bigl|\gamma_k\,\partial_j L(\beta_k)\bigr|$, gives
\eqref{eq:drift-bounds}.
\end{proof}

\begin{proof}[Proof of Lemma \ref{lem:drift-cancellation}]\label{app:drift-cancellation}
Fix $\ell$ and set $k=k_0+2\ell$. Since $d$ is odd,
\[
  D_{k+2}-D_k=d(z_{k+1})+d(z_k)=d(z_{k+1})-d(-z_k),
\]
which vanishes when $z_{k+1}=-z_k$. In general, $z_{k+1,j}$ and $-z_{k,j}$ both
lie in $[-\rho,\rho]$, on which
\[
  |d'(z)|=\frac{2z^2}{1-z^2}\le\frac{2\rho^2}{1-\rho^2}.
\]
The mean value theorem, applied coordinatewise, therefore gives
\begin{equation}\label{eq:pair-bound}
  \|D_{k+2}-D_k\|_\infty
  \le\frac{2\rho^2}{1-\rho^2}\,\|z_{k+1}+z_k\|_\infty
  =\frac{2\rho^2}{1-\rho^2}\,\|e_\ell\|_\infty .
\end{equation}
Telescoping over the $m$ pairs and applying the triangle inequality,
\begin{align*}
  \|D_{k_0+2m}-D_{k_0}\|_\infty
  &=\Bigl\|\sum_{\ell=0}^{m-1}\bigl(D_{k_0+2\ell+2}-D_{k_0+2\ell}\bigr)\Bigr\|_\infty\\
  &\le\sum_{\ell=0}^{m-1}\bigl\|D_{k_0+2\ell+2}-D_{k_0+2\ell}\bigr\|_\infty\\
  &\le\frac{2\rho^2}{1-\rho^2}\sum_{\ell=0}^{m-1}\|e_\ell\|_\infty,
\end{align*}
where the last inequality applies \eqref{eq:pair-bound} to each pair. This is
\eqref{eq:drift-cancellation}.
\end{proof}

Proof of Theorem \ref{thm:sparse-recovery}.
\label{app:sparse-recovery}
\begin{proof}
Let $h = \hat{\beta} - \beta^* $ and $t = \|h\|_{S^C}$. Let $\mathcal{M}$ be the solution manifold i.e. $\mathcal{M} = \{\beta\in \mathbb{R}^d : X\beta = y \}$. Since both $\beta^*$ and $\hat{\beta}$ satisfy regression $X\beta =y$, this implies $X(\beta^* - \hat{\beta}) = Xh = 0$ i.e. $h \in \ker X$.
From the NSP assumption Eq.~(\ref{eq:NSP}), it follows that
\begin{equation*}
    \|h\|_1 = \|h_S\|_1 + \|h\|_{S^C} \leq (1+c) t
\end{equation*}

For $t=0$, the assumption holds $h = 0$ i.e. $\beta^* = \hat{\beta}$ and the theorem holds trivially. We assume $t>0$.

Since $h$ is a direction along which the interpolator can move without leaving the solution manifold $\mathcal{M}$, by first order optimality 
\begin{equation}
\label{eq:first-order-optimality}
    \langle \nabla \psi + \frac12 D, h \rangle = 0 \implies  \langle\arsinh\big(\frac{\hat{\beta}}{a} \big) + D, h\rangle  = 0 
\end{equation}
For every $j \in S$, by monotonicity of $\arsinh$,
\begin{equation}
\label{eq:monotone-gap}
    m_j := \left[ \arsinh\left(\frac{\hat{\beta}_j}{a_j}\right)
          - \arsinh\left(\frac{\beta_j^\star}{a_j}\right) \right] h_j \ge 0,
\end{equation}
since $\text{sign}\, h_j = \text{sign}\, (\hat\beta_j - \beta^\star_j) = \text{sign} \,\left[ \arsinh\left(\frac{\hat{\beta}_j}{a_j}\right)
          - \arsinh\left(\frac{\beta_j^\star}{a_j}\right) \right]  $.
For $j \in S^C$, $\beta^*_j =0$ hence $h_j = \hat{\beta}_j$. Using this, from \eqref{eq:first-order-optimality}, we get
\begin{equation}
\label{eq:off-support-term}
    \arsinh\left(\frac{\hat\beta_j}{a_j}\right) h_j
    = \arsinh\left(\frac{h_j}{a_j}\right) h_j
    = |h_j|\arsinh\left(\frac{|h_j|}{a_j}\right).
\end{equation}
Splitting the inner product in \eqref{eq:first-order-optimality} over $S$ and $S^c$,
adding and subtracting $\arsinh(\beta^\star_j/a_j)\,h_j$ for each $j\in S$, and using
\eqref{eq:monotone-gap} and \eqref{eq:off-support-term}, we obtain
\begin{align}
    0 &= \sum_{j\in S}\arsinh\left(\frac{\hat\beta_j}{a_j}\right)h_j
       + \langle D_S, h_S\rangle
       + \sum_{j\in S^c}\arsinh\left(\frac{\hat\beta_j}{a_j}\right)h_j
       + \langle D_{S^c}, h_{S^c}\rangle \nonumber\\
      &= \sum_{j\in S} m_j
       + \left\langle \arsinh\left(\frac{\beta_S^\star}{a_S}\right) + D_S,\, h_S \right\rangle
       \nonumber\\
      &\qquad + \sum_{j\in S^c}|h_j|\arsinh\left(\frac{|h_j|}{a_j}\right)
       + \langle D_{S^c}, h_{S^c}\rangle . \label{eq:expanded-foc}
\end{align}
Solving \eqref{eq:expanded-foc} for the $S^c$ sum and dropping
$-\sum_{j\in S} m_j \le 0$ gives
\begin{align}
       \label{eq:key-inequality}
    \sum_{j \in S^c} |h_j| \arsinh\left(\frac{|h_j|}{a_j}\right)
    &= -\sum_{j\in S} m_j
       - \left\langle \arsinh\left(\frac{\beta_S^\star}{a_S}\right) + D_S,\, h_S \right\rangle
       - \langle D_{S^c},\, h_{S^c} \rangle \nonumber\\
    &\le -\left\langle \arsinh\left(\frac{\beta_S^\star}{a_S}\right) + D_S,\, h_S \right\rangle
       - \langle D_{S^c},\, h_{S^c} \rangle . \\
       &\overset{\text{Hölder}}{\le} R_S \|h_S\|_1 + \|D_{S^c}\|_\infty \|h_{S^c}\|_1 \\
       \label{eq:holder-bound}
       &\leq (cR_S + \|D_{S^C}\|_\infty) \, t
\end{align}
 Here, Hölder's inequality is applied separately on $S$ and $S^C$.
 \begin{equation*}
     |\langle u, v \rangle| \le \|u\|_\infty \|v\|_1.
 \end{equation*}
 We now wish to find a lower bound on $\sum_{j \in S^c} |h_j| \arsinh\left(\frac{|h_j|}{a_j}\right)$ in terms of $t$.
 Consider
\begin{equation*}
    \varphi(r) = r\arsinh(r), \qquad r\ge 0.
\end{equation*}
This function is convex: since $\varphi'(r) = \arsinh(r) + r/\sqrt{1+r^2}$,
\begin{equation*}
    \varphi''(r) = \frac{1}{\sqrt{1+r^2}} + \frac{1}{(1+r^2)^{3/2}}
    = \frac{r^2+2}{(1+r^2)^{3/2}} > 0 .
\end{equation*}
Since $S\subsetneq\{1,\dots,d\}$, the set $S^c$ is nonempty, and each $a_j>0$, so
$A_{S^c}>0$. Hence the weights $w_j := a_j/A_{S^c}$, $j\in S^c$, are positive and sum to
one, and the weighted mean of the points $r_j := |h_j|/a_j$ is
\begin{equation*}
    \sum_{j\in S^c} w_j r_j
    = \sum_{j\in S^c}\frac{a_j}{A_{S^c}}\cdot\frac{|h_j|}{a_j}
    = \frac{1}{A_{S^c}}\sum_{j\in S^c}|h_j|
    = \frac{t}{A_{S^c}} .
\end{equation*}
Jensen's inequality with these weights therefore gives
\begin{align}
    \sum_{j\in S^c} |h_j| \arsinh\left(\frac{|h_j|}{a_j}\right)
    &= \sum_{j\in S^c} a_j\,\varphi\left(\frac{|h_j|}{a_j}\right)
     = A_{S^c}\sum_{j\in S^c}\frac{a_j}{A_{S^c}}\,\varphi\left(\frac{|h_j|}{a_j}\right)
     \nonumber\\
    &\ge A_{S^c}\,\varphi\left(\frac{t}{A_{S^c}}\right) \nonumber\\
    &= A_{S^c}\cdot\frac{t}{A_{S^c}}\arsinh\left(\frac{t}{A_{S^c}}\right)
     = t\arsinh\left(\frac{t}{A_{S^c}}\right). \label{eq:jensen-lower}
\end{align}
From \eqref{eq:jensen-lower}, \eqref{eq:key-inequality} and \eqref{eq:holder-bound}, we get
\begin{equation*}
    t\arsinh\left(\frac{t}{A_{S^c}}\right)
    \le \sum_{j\in S^c} |h_j| \arsinh\left(\frac{|h_j|}{a_j}\right)
    \le \big(c\,R_S + \|D_{S^c}\|_\infty\big)\, t .
\end{equation*}
Dividing by $t>0$, we obtain
\begin{equation}
\label{eq:arsinh-bound}
    \arsinh\left(\frac{t}{A_{S^c}}\right) \le c\,R_S + \|D_{S^c}\|_\infty .
\end{equation}
Since $\sinh$ is strictly increasing on $\mathbb{R}$ ($\sinh' = \cosh > 0$) and
$\sinh(\arsinh(x)) = x$ for every $x \in \mathbb{R}$, applying $\sinh$ to both sides of
\eqref{eq:arsinh-bound} preserves the inequality:
\begin{equation*}
    \frac{t}{A_{S^c}}
    = \sinh\left(\arsinh\left(\frac{t}{A_{S^c}}\right)\right)
    \le \sinh\big(c\,R_S + \|D_{S^c}\|_\infty\big).
\end{equation*}
Multiplying both sides by $A_{S^c} > 0$ gives
\begin{equation}
\label{eq:t-bound}
    t \le A_{S^c}\,\sinh\big(c\,R_S + \|D_{S^c}\|_\infty\big).
\end{equation}
Since $\beta^\star_{S^c} = 0$, we have $h_{S^c} = \hat\beta_{S^c}$, so by the definition of
$t$ and \eqref{eq:t-bound},
\begin{equation*}
    \|\hat\beta_{S^c}\|_1
    = \|h_{S^c}\|_1
    = t
    \le A_{S^c}\,\sinh\big(c\,R_S + \|D_{S^c}\|_\infty\big),
\end{equation*}
Finally, since
$h = \hat\beta - \beta^\star$, combining $\|h\|_1 \le (1+c)\,t$ with \eqref{eq:t-bound} gives
\begin{equation*}
    \|\hat\beta - \beta^\star\|_1
    = \|h\|_1
    \le (1+c)\,t
    \le (1+c)\,A_{S^c}\,\sinh\big(c\,R_S + \|D_{S^c}\|_\infty\big),
\end{equation*}
which is the bound of Theorem~\ref{thm:sparse-recovery}. 
\end{proof}

\paragraph{Extended Discussion}
\label{app:extended-discussion}
We summarize the factors that influence the sparsity of the solution
\begin{enumerate}
   \item We define the set of feasible directions as $\mathcal{S}(\beta^*) = \{ h \in \ker X: \text{sign}(\beta^*_j)h_j 
   \leq 0 \, \forall j \in \text{supp}(\beta^*) \}$,  such that moving along any $h \in \mathcal{S}$  reduces coefficient in the support coordinates and increases it in the offsupport coordinate. The alignment of the Drift with these set of directions determines its behavior. 
   \item For small initialization $\alpha$, the entropy function resembles the following 
 \begin{align}
\widetilde\psi(\beta,a,D)
&:= \psi_a(\beta) + \frac12\langle D,\beta\rangle \\
&= \log\left(\frac1\alpha\right)\|\beta\|_1
+ \frac12\sum_{i=1}^{d}\operatorname{Gain}_\gamma(i)|\beta_i|
+ \frac12\sum_{i=1}^{d}\operatorname{Drift}_\gamma(i)\beta_i
+ O_\beta(1).
\end{align}Therefore, the magnitude of the well-aligned Drift in the support coordinates is also important so as to be able to appropriately offset the effects of the Gain.
\end{enumerate}

\clearpage

\subsection{Example case for Drift and Gain}\label{appendix : example gain and drift}

In this appendix, we provide detailed calculations for the example case in \S\ref{sec:example-case}. We restate the regression problem here for completeness
\begin{equation}
X
=
\begin{pmatrix}
\sqrt{2} & 0 & 1\\
0 & \sqrt{2} & 1
\end{pmatrix}
\in\mathbb{R}^{2\times 3},
\qquad
y
=
B\begin{pmatrix}
1\\
1
\end{pmatrix},
\quad B\in \mathbb{R}_+
\end{equation}
where the two solutions of interest are the sparse $\beta^*_s = B\, (0,0, 1)^\top$ and the dense solution $\beta_d^* = B\, (\frac{1}{\sqrt2}, \frac{1}{\sqrt2}, 0)$. We consider the space of interpolators that lie on the line joining these two solutions as $\mathcal{S} = \{\beta (t) : t \in [0,1] \}$, where 
\begin{align}
    \beta(t) = (1-t)\beta^*_s + t\beta_d^* = B\begin{pmatrix}
        \dfrac{t}{\sqrt2} ,
        \dfrac{t}{\sqrt2} ,
        1-t
    \end{pmatrix}^\top \qquad h = \begin{pmatrix}
        \frac{1}{\sqrt2}, \frac{1}{\sqrt2}, -1
    \end{pmatrix}. 
\end{align}
Equivalently, $\beta(t) = \beta_s^* + Bth$. The vector $h \in \text{ker} X$ since
\[
Xh = \begin{pmatrix}
    \sqrt2 && 0 && 1 \\
    0 && \sqrt2 && 1
\end{pmatrix} \begin{pmatrix}
    \frac{1}{\sqrt2}\\
    \frac{1}{\sqrt2}\\
    -1
\end{pmatrix} = \begin{pmatrix}
    0\\
    0
\end{pmatrix}
\]
\paragraph{Norm} The norm of the interpolator along the line is $$\|\beta(t)\|_1 = B\big(\dfrac{t}{\sqrt2} + \dfrac{t}{\sqrt2} + 1 -t\big) = B \big(1+ (\sqrt2 -1)t\big)$$
Therefore, along the line the norm increases, with the sparser $\|\beta_s^*\| = B$ and the denser $\|\beta_d^*\| = \sqrt2 B $.
\paragraph{Sharpness}
Under the squared loss function $\mathcal{L} = \frac{1}{4} \|X\beta -y\|^2$, the Hessian is as follows
\begin{align}
    H = X \, \text{diag}(\beta(t)) X^\top = B \begin{pmatrix}
        1-t + \sqrt 2t && 1-t \\
        1-t && 1-t + \sqrt2 t
    \end{pmatrix}.
\end{align}
The eigenvalues of the Hessian are
\[
\lambda_1(t) = B \big(2(1-t) + \sqrt2 t \big), \qquad
\lambda_2 (t) = \sqrt2 B
\]
Since sharpness $\bar{S} = \lambda_{max} ( H)$, and $\lambda_1(t) \geq \lambda_2(t)$ on $t \in [0,1]$, the minimum sharpness is 
\[
\bar{S}_\infty(t) = B \big(2(1-t) + \sqrt2 t \big) = 2B (1-t) + \sqrt2 Bt
\]
Since $\bar{S}'(t) = B(\sqrt2 -2) < 0$, sharpness decreases linearly from the sparse solution ($\bar{S}(0) = 2B$ to the dense solution $\bar{S}(1) = \sqrt2B$.

In the following sections, we consider the standard initialization of $\beta = u \odot v$, with $u_ 0 = \sqrt 2\mathbf{\alpha}$ and $v = \mathbf{0}$. Hence the residual $r_0 = \sqrt2 \beta_{0,1} + \beta_{0,3} - B = -B < 0$. Let $z_{k,j} = \gamma \partial _{\beta_j} \mathcal{L}(\beta_k)$. It follows that for the off-support coordinates $z_{k,o} = 
\frac1{\sqrt2} \gamma r_k$, while for the support coordinates $z_{k,s} = \gamma r_k$.
\paragraph{Alignment induced by Gain} We measure the alignment of the Gain by considering the direction towards which the change in the potential induced by the gain points at. 
Taking the derivative of the potential function $\psi_a(\beta)$
\[
\partial_{\beta_j} \psi_a (\beta) = \frac{1}{2} \text{arcsinh} \big(\frac{\beta_j}{a_j} 
\big) 
\]

From Eq.~\ref{eq:implicit-bias-GD}, an additional Gain $\Delta Q$ diverges the branch initialization as follows
\[
a_j' = a_j e^{-\Delta Q_j}
\]
The change in the potential gradient is
\[
C_j := \partial_{\beta_j} ( \psi_{a'} - \psi_a) = \frac{1}{2} \big[
\text{arcsinh} \big(\frac{\beta_j e^{\Delta Q_j} }{a_j}\big) - \text{arcsinh}\big( \frac{\beta_j}{a_j}\big) \big]
\].
The inductive bias points towards decreasing potential. Consider moving along $+h$ by a factor of $\varepsilon > 0$. The induced change in the potential is 
\[
\Delta\psi_Q(\beta + \varepsilon h) - \Delta\psi_Q(\beta) = \varepsilon \langle h, C \rangle + O(\varepsilon^2) = -\varepsilon A_Q + O(\varepsilon^2)
\]
Hence the alignment $A_Q$ is 
\[
A_Q = - \langle h, C \, \rangle = C_s - \sqrt2 C_o
\]
Thus $A_Q > 0$ means the bias induced by  $Q$ points towards the dense solution, while $A_Q <0$ means the bias is towards the sparse solution. 

\paragraph{Alignment induced by the Drift}
The alignment of the Drift is simpler compared to the Gain, since it linearly affects the inductive bias formulation (Prop \ref{prop:gain_drift}). Hence the alignment of the bias induced by the Drift is
\[
A_D = - \frac12 \langle h, \Delta D \rangle =\frac12 ( \Delta D_s  - \sqrt2 \Delta D_o) 
\]
Since the drift increment  Eq.~(\ref{eq:drift}) is odd and the residual $r < 0$ in the progressive sharpening phase,
\[
A_d (r) = -\frac12 (\sqrt2 d(\frac{\gamma r}{\sqrt2} ) - d(\gamma r))
\]
Similar to the Gain alignment, $A_D > 0$ indicates that the Drift biases towards the dense solution and vice versa. In Fig.~\ref{fig:bias}, we plot $-A_Q$ and $-A_D$ across the different residual regimes. 
\begin{table}[tbp]

\centering
\small
\setlength{\tabcolsep}{5pt}
\begin{tabular}{lcccc}
\toprule
& \multicolumn{2}{c}{\textbf{Gain Bias}}
& \multicolumn{2}{c}{\textbf{Drift Bias}} \\
\cmidrule(lr){2-3}\cmidrule(lr){4-5}
\textbf{Regime} & Direction & Solution & Direction & Solution \\
\midrule
Above both poles
& $-h$ & $\beta_s^*$ & $+h$ & $\beta_d^*$ \\
Between the poles
& $-h \rightarrow +h$ &- & $+h \rightarrow -h$ & - \\
Below both poles
& $+h$ & $\beta_d^*$ & $-h$ & $\beta_s^*$ \\
\bottomrule
\end{tabular}
\caption{Directional biases of Gain and Drift toward the sparse solution $\beta_s^*$ and dense solution $\beta_d^*$. Between the poles, arrows indicate alignment reversals as the residual magnitude decreases.}
\label{table:summary-example-case}
\end{table}
\subsubsection{Below both poles: \texorpdfstring{$\gamma |r k| < 1$}{gamma |r k| < 1}}
\label{app:below-pole}
\paragraph{Proof of Gain biasing towards the dense solution}
Let 
\[
\psi^*_0 = \operatorname*{argmin}_{t \in [0,1]} \psi_{\alpha} (\beta(t)), \qquad \psi_Q^* = \operatorname*{argmin}_{t \in [0,1]} \psi_{a} (\beta(t))
\]
where $a = \alpha^2 e^{-Q}$ from Eq.~ \ref{eq:implicit-bias-GD}.
Then $\psi_Q^*\ge\psi_0^*$, with strict inequality whenever
$\gamma_k r_k\ne0$ for at least one $k<K$.
Thus, the accumulated gain shifts the minimizer towards the dense
interpolator, without requiring the residual to retain its sign.

\begin{proof}
Under the averaged loss, the support and off-support gradients are
$r_k$ and $r_k/\sqrt{2}$, respectively. Hence,
\begin{align}
    Q_s-2Q_o
    &=
    \sum_{k=0}^{K-1}
    \log\left(
        \frac{(1-\gamma_k^2r_k^2/2)^2}
             {1-\gamma_k^2r_k^2}
    \right)
    \ge0.
\end{align}
If $\gamma_k r_k=0$ for every $k<K$, both gains vanish and the
claim is immediate. Otherwise, $Q_s\ge2Q_o>0$.

Both objectives are strictly convex along the segment, with negative
derivative at $t=0$ and positive derivative at $t=1$.
Their minimizers are therefore unique and interior.
Using
$\partial_{\beta_j}\psi_{\alpha}(\beta)
=\frac12\operatorname{arsinh}(\beta_j/\alpha)$,
stationarity of the initial potential gives
\begin{equation}
    \operatorname{arcsinh}\left(
        \frac{\beta_s(\psi_0^*)}{\alpha^2}
    \right)
    =
    \sqrt{2}\,
    \operatorname{arcsinh}\left(
        \frac{\beta_o(\psi_0^*)}{\alpha^2}
    \right).
\end{equation}
In particular,
$\beta_s(\psi_0^*)>\beta_o(\psi_0^*)>0$.

For $b,q\ge0$, define the gain-induced change in the potential gradient
\begin{equation}
    F(b,q)
    :=
    \frac12\left[
        \operatorname{arcsinh}\left(\frac{be^q}{\alpha^2}\right)
        -
        \operatorname{arcsinh}\left(\frac{b}{\alpha^2}\right)
    \right].
\end{equation}
Direct differentiation shows that $F$ is nondecreasing in both
arguments and convex in $q$. Since $F(b,0)=0$, we obtain
\begin{align}
    F(\beta_s(\psi_0^*),Q_s)
    &\ge F(\beta_o(\psi_0^*),2Q_o)\\
    &\ge 2F(\beta_o(\psi_0^*),Q_o).
\end{align}
Using stationarity of the initial potential, the derivative of the
final potential at its initial minimizer satisfies
\begin{align}
    \left.
    \frac{d}{dt}\psi_a(\beta(t))
    \right|_{t=\psi_0^*}
    &=
    B\left[
        \sqrt{2}F(\beta_o(\psi_0^*),Q_o)
        -
        F(\beta_s(\psi_0^*),Q_s)
    \right]\\
    &\le
    B(\sqrt{2}-2)F(\beta_o(\psi_0^*),Q_o)
    <0.
\end{align}
Strict convexity now implies $\psi_Q^*>\psi_0^*$.
\end{proof}
\paragraph{Proof of Drift biasing towards the sparse solution}
Let $D=(D_o,D_o,D_s)^\top$ and define
\begin{equation}
    \psi_0^*
    :=\operatorname*{argmin}_{t\in[0,1]}
      \psi_{\alpha}(\beta(t)),
    \qquad
    \psi_D^*
    :=\operatorname*{argmin}_{t\in[0,1]}
      \left\{
          \psi_{\alpha}(\beta(t))
          +
          \frac12\langle D,\beta(t)\rangle
      \right\}.
\end{equation}
Then $\psi_D^*\le\psi_0^*$, with strict inequality whenever
$\gamma_k r_k<0$ for at least one $k<K$.
Thus, the accumulated drift shifts the minimizer towards the sparse
interpolator.

\begin{proof}
For $|z|<1$, the drift admits the expansion
\begin{equation}
    d(z)=2\sum_{m=1}^{\infty}\frac{z^{2m+1}}{2m+1}.
\end{equation}
Consequently, for $-1<z\le0$,
\begin{align}
    \sqrt{2}\,d(z/\sqrt{2})-d(z)
    &=
    2\sum_{m=1}^{\infty}
    \frac{(2^{-m}-1)z^{2m+1}}{2m+1}
    \ge0,
\end{align}
with strict inequality for $z<0$.
Summing over the phase gives
\begin{equation}
    \sqrt{2}D_o-D_s\ge0,
\end{equation}
strictly whenever at least one update has $\gamma_k r_k<0$.

Along the interpolation segment, the drift contributes the affine term
\begin{equation}
    \langle D,\beta(t)\rangle
    =
    B\left[D_s+t(\sqrt{2}D_o-D_s)\right].
\end{equation}
If all updates have zero residual, then $D=0$ and the minimizers
coincide. Otherwise, this term is strictly increasing in $t$.

The initial potential is strictly convex along the segment and has
an interior minimizer $\psi_0^*$. By its stationarity,
\begin{equation}
    \left.
    \frac{d}{dt}
    \left[
        \psi_{\alpha}(\beta(t))
        +\frac12\langle D,\beta(t)\rangle
    \right]
    \right|_{t=\psi_0^*}
    =
    B(\sqrt{2}D_o-D_s)>0.
\end{equation}
Adding an affine term preserves strict convexity, so the unique
minimizer satisfies $\psi_D^*<\psi_0^*$, possibly attaining the
boundary $t=0$.
\end{proof}

The same argument applies with $\psi_{\alpha}$ replaced by
$\psi_a$ for any fixed positive effective scale $a$.
Thus, the drift also shifts the minimizer sparseward when the
accumulated gain is held fixed.
For the above-pole and between-pole regimes, use the real-valued
extensions
\begin{equation}
    q(z):=-\log|1-z^2|,
    \qquad
    d(z):=\log\left|\frac{1+z}{1-z}\right|-2z,
    \qquad z\ne\pm1.
\end{equation}
We retain the definitions of $\psi_0^*$, $\psi_Q^*$ and $\psi_D^*$
above, with $a=\alpha e^{-Q}$.
\subsubsection{Above both poles: \texorpdfstring{$\gamma |r k| > \sqrt2$}{gamma |r k| > sqrt(2)}}
\label{app:above-pole}
\paragraph{Gain biases the interpolator towards the sparse solution
above the poles.}
Suppose that $\sqrt{2}<|\gamma_k r_k|\le2$ for every $k<K$, where
$K\ge1$. Then $\psi_Q^*<\psi_0^*$.

\begin{proof}
Throughout this interval,
\begin{equation}
    q(\gamma_k r_k)<0,
    \qquad
    q(\gamma_k r_k/\sqrt{2})\ge0.
\end{equation}
Consequently, $Q_s<0$ and $Q_o\ge0$, so the effective scales satisfy
\begin{equation}
    a_s=\alpha^2 e^{-Q_s}>\alpha^2,
    \qquad
    a_o=\alpha^2 e^{-Q_o}\le\alpha^2.
\end{equation}
Using positivity of the coordinates at $\psi_0^*$ and stationarity
of the initial potential, we obtain
\begin{align}
    \left.
    \frac{d}{dt}\psi_a(\beta(t))
    \right|_{t=\psi_0^*}
    &=
    \frac{B}{2}\left[
        \sqrt{2}\,
        \operatorname{arcsinh}\left(
            \frac{\beta_o(\psi_0^*)}{a_o}
        \right)
        -
        \operatorname{arcsinh}\left(
            \frac{\beta_s(\psi_0^*)}{a_s}
        \right)
    \right]\\
    &>
    \frac{B}{2}\left[
        \sqrt{2}\,
        \operatorname{arcsinh}\left(
            \frac{\beta_o(\psi_0^*)}{\alpha}
        \right)
        -
        \operatorname{arcsinh}\left(
            \frac{\beta_s(\psi_0^*)}{\alpha}
        \right)
    \right]
    =0.
\end{align}
Strict convexity therefore implies $\psi_Q^*<\psi_0^*$.
\end{proof}

\paragraph{Drift biases the interpolator towards the dense solution
above the poles.}
Suppose that $\gamma_k r_k<-\sqrt{2}$ for every $k<K$, where $K\ge1$.
Then $\psi_D^*>\psi_0^*$.

\begin{proof}
Write $x_k=-\gamma_k r_k>\sqrt{2}$.
By oddness of $d$ and cancellation of its linear terms,
\begin{align}
    \sqrt{2}D_o-D_s
    &=
    \sum_{k=0}^{K-1}
    \left[d(x_k)-\sqrt{2}\,d(x_k/\sqrt{2})\right]\\
    &=
    \sum_{k=0}^{K-1}
   2 \left[
        \operatorname{arctanh}(1/x_k)
        -
        \sqrt{2}\,
        \operatorname{arctanh}(\sqrt{2}/x_k)
    \right]
    <0.
\end{align}
The last inequality follows because $\operatorname{arctanh}$ is
positive and strictly increasing on $(0,1)$.
Stationarity of the initial potential therefore gives
\begin{equation}
    \left.
    \frac{d}{dt}
    \left[
        \psi_{\alpha}(\beta(t))
        +\frac12\langle D,\beta(t)\rangle
    \right]
    \right|_{t=\psi_0^*}
    =
    B(\sqrt{2}D_o-D_s)<0.
\end{equation}
Strict convexity implies $\psi_D^*>\psi_0^*$, possibly attaining
the boundary $t=1$.
\end{proof}
\subsubsection{In between the poles}
\label{app:between-pole}
\paragraph{Gain reverses its bias between the poles.}
To characterize the transition, consider the gain contributed by
one update with $x:=|\gamma_k r_k|\in(1,\sqrt{2})$:
\begin{equation}
    Q_o(x)=-\log(1-x^2/2),
    \qquad
    Q_s(x)=-\log(x^2-1).
\end{equation}
Let $\psi_Q^*(x)$ denote the corresponding minimizer.
There exists a unique $x_Q\in(1,\sqrt{2})$ such that
\begin{equation}
    \begin{cases}
        \psi_Q^*(x)>\psi_0^*, & 1<x<x_Q,\\
        \psi_Q^*(x)=\psi_0^*, & x=x_Q,\\
        \psi_Q^*(x)<\psi_0^*, & x_Q<x<\sqrt{2}.
    \end{cases}
\end{equation}
Thus, as the residual magnitude decreases from the off-support pole
towards the support pole, the gain changes from sparseward to
denseward.

\begin{proof}
The effective scales are
\begin{equation}
    a_o(x)=\alpha^2(1-x^2/2),
    \qquad
    a_s(x)=\alpha^2(x^2-1).
\end{equation}
Hence,
\begin{align}
    \left.
    \frac{d}{dt}\psi_{a(x)}(\beta(t))
    \right|_{t=\psi_0^*}
    =
    \frac{B}{2}\Bigg[
        &\sqrt{2}\,
        \operatorname{arcsinh}\left(
            \frac{\beta_o(\psi_0^*)}{\alpha^2(1-x^2/2)}
        \right) \nonumber\\
        &-
        \operatorname{arcsinh}\left(
            \frac{\beta_s(\psi_0^*)}{\alpha^2(x^2-1)}
        \right)
    \Bigg].
\end{align}
This derivative is continuous and strictly increasing in $x$:
the off-support scale decreases, while the support scale increases.
Moreover,
\begin{align}
    \lim_{x\downarrow1}
    \left.
    \frac{d}{dt}\psi_{a(x)}(\beta(t))
    \right|_{t=\psi_0^*}
    &=-\infty,\\
    \lim_{x\uparrow\sqrt{2}}
    \left.
    \frac{d}{dt}\psi_{a(x)}(\beta(t))
    \right|_{t=\psi_0^*}
    &=+\infty.
\end{align}
At the support pole, the support contribution diverges and produces
a denseward bias. At the off-support pole, the off-support
contribution diverges and produces a sparseward bias.
The derivative therefore vanishes at exactly one $x_Q$.
Strict convexity yields the stated ordering of the minimizers.
\end{proof}

\paragraph{Drift reverses its bias between the poles.}
Consider one update with negative residual and
$x:=-\gamma_k r_k\in(1,\sqrt{2})$, and define
\begin{equation}
    D_o(x)=d(-x/\sqrt{2}),
    \qquad
    D_s(x)=d(-x).
\end{equation}
Let $\psi_D^*(x)$ denote the corresponding minimizer.
There exists a unique $x_D\in(1,\sqrt{2})$ such that
\begin{equation}
    \begin{cases}
        \psi_D^*(x)<\psi_0^*, & 1<x<x_D,\\
        \psi_D^*(x)=\psi_0^*, & x=x_D,\\
        \psi_D^*(x)>\psi_0^*, & x_D<x<\sqrt{2}.
    \end{cases}
\end{equation}
Thus, during the same downward passage, the drift changes from
denseward to sparseward.

\begin{proof}
By oddness of $d$, the slope of the drift contribution is
\begin{equation}
    \frac{d}{dt}\langle D(x),\beta(t)\rangle
    =
    B\left[d(x)-\sqrt{2}\,d(x/\sqrt{2})\right].
\end{equation}
Since $d'(z)=z^2/(1-z^2)$ away from the poles,
\begin{equation}
    \frac{d}{dx}
    \left[d(x)-\sqrt{2}\,d(x/\sqrt{2})\right]
    =
    \frac{2x^2}{(1-x^2)(2-x^2)}
    <0.
\end{equation}
Furthermore,
\begin{align}
    \lim_{x\downarrow1}
    \left[d(x)-\sqrt{2}\,d(x/\sqrt{2})\right]
    &=+\infty,\\
    \lim_{x\uparrow\sqrt{2}}
    \left[d(x)-\sqrt{2}\,d(x/\sqrt{2})\right]
    &=-\infty.
\end{align}
The support pole therefore produces a sparseward drift bias,
whereas the off-support pole produces a denseward drift bias.
Continuity and strict monotonicity give a unique zero $x_D$.
By stationarity of the initial potential and strict convexity of
the tilted potential, the sign of this slope gives the stated
ordering of the minimizers.
\end{proof}

The crossover values $x_Q$ and $x_D$ need not coincide.
For an interval containing several updates between the poles,
the net bias is determined by the accumulated $Q$ and $D$;
the single-update reversals alone do not fix its direction.
\subsection{Proofs of Gain Control Scheme}
\subsubsection{Convergence Proof}
\label{app:convergence-proof}
We prove convergence through two ingredients: an exact representation of the
update in mirror coordinates, and a descent estimate for the associated
Bregman divergence. The theorem then combines this estimate with coercivity
to establish boundedness, loss summability, and convergence of the full
sequence.

Let $X\in\R^{n\times d}$ satisfy $X\ne0$, and let
$y\in\operatorname{range}(X)$. Define
\[
 \loss(\beta)=\frac1{2n}\|X\beta-y\|_2^2,
 \qquad \kappa=\max_j\frac{\|X_{:j}\|_2^2}{n}>0.
\]
For $a\in(0,\infty)^d$, write
\[
 \psi_a(x)=\frac12\sum_j\left[
 x_j\arsinh(x_j/a_j)-\sqrt{x_j^2+a_j^2}+a_j\right],
\]
\[
 \D_a(c,x)=\psi_a(c)-\psi_a(x)-\langle\nabla\psi_a(x),c-x\rangle.
\]
Direct differentiation gives
\begin{equation}\label{eq:geometry}
 \nabla_j\psi_a(x)=\frac12\arsinh(x_j/a_j),
 \qquad
 \nabla^2\psi_a(x)=\diag\left(\frac1{2\sqrt{x_j^2+a_j^2}}\right).
\end{equation}
In particular, $\psi_a$ is strictly convex and nonnegative, with its minimum
at the origin.

Given $\beta_k=u_k\odot v_k$, set
\[
 g_k=\nabla\loss(\beta_k),\qquad z_k=\gamma g_k,
 \qquad p(t)=1+t^2+t^4,\qquad M_k=\max_j p(z_{k,j}).
\]
The normalized quadratic update is
\begin{equation}\label{eq:update}
 u_{k+1,j}=\frac{(1+z_{k,j}^2)u_{k,j}-z_{k,j}v_{k,j}}{\sqrt{M_k}},
 \qquad
 v_{k+1,j}=\frac{(1+z_{k,j}^2)v_{k,j}-z_{k,j}u_{k,j}}{\sqrt{M_k}}.
\end{equation}

\begin{lemma}[Exact mirror and scale recursions]\label{lem:structure}
Suppose $u_{0,j}>|v_{0,j}|$ for every $j$, and let $\gamma>0$.
Then the iterates in \eqref{eq:update} are well defined and satisfy
$u_{k,j}>|v_{k,j}|$. With
\[
 a_{k,j}=\frac{u_{k,j}^2-v_{k,j}^2}{2},
 \qquad
 \phi(t)=\frac12\log\frac{1+t+t^2}{1-t+t^2},
\]
we have
\begin{align}
\label{eq:non-increasing-scale}
 a_{k+1,j}&=a_{k,j}\frac{p(z_{k,j})}{M_k},
 \qquad 0<a_{k+1,j}\le a_{k,j},\\
 \nabla\psi_{a_{k+1}}(\beta_{k+1})
 &=\nabla\psi_{a_k}(\beta_k)-\phi(z_k).\label{eq:mirror}
\end{align}
Moreover, for every $t\in\R$,
\begin{equation}\label{eq:phi}
 |\phi(t)|\le|t|,\qquad |\phi(t)-t|\le|t|^3.
\end{equation}
\end{lemma}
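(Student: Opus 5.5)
The plan is to work coordinatewise, fixing $j$ and $k$ and writing $z=z_{k,j}$, $u=u_{k,j}$, $v=v_{k,j}$. The update is linear in $(u,v)$ with matrix $\frac{1}{\sqrt{M_k}}\begin{pmatrix}1+z^2 & -z\\ -z & 1+z^2\end{pmatrix}$. We diagonalize it in the light-cone coordinates $w_\pm=u\pm v$, which gives
\[
w_{\pm}^{k+1}=\frac{1+z^2\mp z}{\sqrt{M_k}}\,w_\pm^{k}.
\]
Both factors $1\mp z+z^2$ are strictly positive for every real $z$, since their discriminant is negative. Hence $w_\pm>0$ is preserved by induction, starting from $u_0>|v_0|$, and this is exactly $u_{k}>|v_k|$. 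Well-definedness is immediate because $M_k\ge 1>0$.

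For the scale recursion we use $u^2-v^2=w_+w_-$. The identity $(1+z^2-z)(1+z^2+z)=(1+z^2)^2-z^2=1+z^2+z^4=p(z)$ then gives $a_{k+1,j}=a_{k,j}\,p(z)/M_k$. Since $p(z_{k,j})\le M_k$ by the definition of the maximum, we obtain $0<a_{k+1,j}\le a_{k,j}$.

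For the mirror recursion, note that $\beta=uv=(w_+^2-w_-^2)/4$ and $a=w_+w_-/2$. Therefore $\beta/a=\tfrac12(w_+/w_- - w_-/w_+)=\sinh(\log(w_+/w_-))$, so $\arsinh(\beta/a)=\log(w_+/w_-)$. By \eqref{eq:geometry} this means $\nabla_j\psi_{a}(\beta)=\tfrac12\log(w_+/w_-)$. Applying the update, the common factor $\sqrt{M_k}$ cancels in the ratio, leaving
\[
\tfrac12\log\frac{w_+^{k+1}}{w_-^{k+1}}=\tfrac12\log\frac{w_+^k}{w_-^k}+\tfrac12\log\frac{1-z+z^2}{1+z+z^2},
\]
which is exactly \eqref{eq:mirror} with the stated $\phi$.

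The remaining part, and the only real obstacle, is the scalar inequalities \eqref{eq:phi}. Since $\phi$ is odd, it suffices to take $t\ge0$. Differentiating,
\[
\phi'(t)=\frac{1-t^2}{1+t^2+t^4},
\]
and one checks this from $\frac{d}{dt}\log(1\pm t+t^2)=\frac{\pm1+2t}{1\pm t+t^2}$ by combining over the common denominator $p(t)$. Because $|1-t^2|\le 1+t^2+t^4$, we have $|\phi'|\le1$, and integrating from $\phi(0)=0$ gives $|\phi(t)|\le t$.

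For the second inequality, write
\[
1-\phi'(t)=\frac{2t^2+t^4}{1+t^2+t^4}\ge 0,
\]
so $t-\phi(t)\ge0$. The bound $2t^2+t^4\le 3t^2(1+t^2+t^4)$ holds because $3t^2+3t^4+3t^6-2t^2-t^4=t^2+2t^4+3t^6\ge0$. Hence $0\le 1-\phi'(t)\le 3t^2$, and integrating yields $0\le t-\phi(t)\le t^3$.
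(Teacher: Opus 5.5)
Your proof is correct and follows the paper's argument essentially step for step. Both use the branch coordinates $w_\pm$ (you drop the $1/\sqrt2$, so $a=w_+w_-/2$ instead of $a=w_+w_-$), positivity of $1\mp z+z^2$, the product of the branch recursions for the scale, their ratio for the mirror recursion, and the bound $0\le 1-\phi'(r)=\frac{2r^2+r^4}{1+r^2+r^4}\le 3r^2$ together with oddness. The only cosmetic difference is that you get $|\phi(t)|\le|t|$ from $|\phi'|\le1$, whereas the paper reads it off from $0\le\phi(t)\le t$ for $t\ge0$.
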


\begin{proof}
Introduce the branches $w_{\pm,k}=(u_k\pm v_k)/\sqrt2$.
Their updates are
\[
 w_{\pm,k+1,j}
 =\frac{1\mp z_{k,j}+z_{k,j}^2}{\sqrt{M_k}}w_{\pm,k,j}.
\]
Since $1\pm t+t^2=(t\pm1/2)^2+3/4>0$ and $M_k\ge1$,
both branches remain positive. Multiplying their recursions gives the effective initialization Eq.~(\ref{eq:scale}), 
\begin{equation*}
     a_{k+1,j} =   w_{+,k+1,j} w_{-,k+1,j} = \frac{(1+ z_{k,j}+z_{k,j}^2)(1- z_{k,j}+z_{k,j}^2)}{M_k}a_{k,j} = a_{k,j}\frac{p(z_{k,j})}{M_k}
\end{equation*}
because $a_{k,j}=w_{+,k,j}w_{-,k,j}$ and
$(1-t+t^2)(1+t+t^2)=p(t)$. 

Taking their ratio gives at each $j$,
\eqref{eq:mirror}, using the relationship between $\arsinh(x):= \log(x + \sqrt{x^2 +1})  $:
\[
 \nabla_j\psi_{a_k}(\beta_k)
 =\frac12\arsinh(\beta_{k,j}/a_{k,j})= \frac12\arsinh(\frac{2 u_{k,j} v_{k,j}}{u^2_{k,j}-v^2_{k,j}})
 =\frac12\log\frac{w_{+,k,j}}{w_{-,k,j}}.
\]
Finally, for $t\ge0$, $\phi(t)\ge0$ and
\[
 0\le t-\phi(t)
 =\int_0^t\frac{2r^2+r^4}{1+r^2+r^4}\,dr
 \le\int_0^t3r^2\,dr=t^3.
\]
This also implies $\phi(t)\le t$. Oddness of $\phi$ proves
\eqref{eq:phi} for all $t$.
\end{proof}

The mirror recursion suggests measuring progress by $\D_{a_k}(c,\beta_k)$,
where $c$ is an interpolator. The next lemma shows that the progress from
the gradient term dominates the costs of a finite step and a changing
potential.

\begin{lemma}[One-step Bregman descent]\label{lem:descent}
Consider one update \eqref{eq:update}, and suppress the iteration index.
Suppose $u_j>|v_j|$, let $\beta=u\odot v$, and set
$a_j=(u_j^2-v_j^2)/2$. Let $c$ satisfy $Xc=y$.
If, for some $R>0$,
\[
 \|\beta\|_1\le R,\qquad \|c\|_1\le R,
 \qquad \sum_j a_j\le R,
 \qquad 0<\gamma\le\frac1{32\kappa R},
\]
then
\begin{equation}\label{eq:descent}
 \D_{a^+}(c,\beta^+)
 \le\D_a(c,\beta)-\gamma\loss(\beta).
\end{equation}
\end{lemma}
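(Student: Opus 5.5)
We will prove \eqref{eq:descent} by expanding the change of the Bregman divergence along one step through the exact mirror recursion of Lemma~\ref{lem:structure}. The change splits into three pieces: a linear gradient term, a term from the change of scale $a\to a^+$, and a second-order remainder. Write
\[
\D_{a^+}(c,\beta^+)-\D_a(c,\beta)=\bigl[\psi_{a^+}(c)-\psi_a(c)\bigr]-\bigl[\psi_{a^+}(\beta^+)-\psi_a(\beta)\bigr]-\langle\nabla\psi_{a^+}(\beta^+),c-\beta^+\rangle+\langle\nabla\psi_a(\beta),c-\beta\rangle .
\]
By \eqref{eq:mirror}, $\nabla\psi_{a^+}(\beta^+)=\nabla\psi_a(\beta)-\phi(z)$. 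Substituting and regrouping gives
\[
\langle\phi(z),c-\beta\rangle+\bigl[\text{scale terms}\bigr]+\D_{a^+}(\beta,\beta^+)-\bigl(\text{correction from }\psi_a\text{ vs }\psi_{a^+}\text{ at }\beta\bigr).
\]
It is cleanest to write this as an exact identity with $\psi_{a^+}$ throughout, and to isolate the scale change as $[\psi_{a^+}-\psi_a](c)-[\psi_{a^+}-\psi_a](\beta)$.

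\textbf{Gradient term.} Since $z=\gamma g$ and $g=\frac1nX^\top(X\beta-y)$ with $Xc=y$, we have $\langle z,c-\beta\rangle=-2\gamma\loss(\beta)$. By \eqref{eq:phi}, $|\langle\phi(z)-z,c-\beta\rangle|\le\|z\|_\infty^3\,2R$. We also bound $\|g\|_\infty\le\sqrt{\kappa}\sqrt{2\loss(\beta)}$ by Cauchy--Schwarz, and $\loss(\beta)\le$ something like $\kappa R^2$ (using $\|X\beta-y\|=\|X(\beta-c)\|\le\sqrt{n\kappa}\,2R$). Together these give $\|z\|_\infty^2\le 8\gamma^2\kappa^2R^2\le 1/128$, so the cubic error is at most a small fraction of $\gamma\loss$.

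\textbf{Remainder and scale change.} We bound the remainder $\D_{a^+}(\beta,\beta^+)$ via the Hessian in \eqref{eq:geometry}. Its dual form is $\sum_j \sqrt{\beta_j^2+a_j^2}\,\phi(z_j)^2$ up to constants, which is at most $2R\,\|z\|_2$-type quantities. More precisely, it is $\lesssim \sum_j(|\beta_j|+a_j)z_j^2\le 2R\|z\|_\infty^2\le 2R\gamma^2\kappa\cdot 2\loss$. Then $\gamma\le(32\kappa R)^{-1}$ makes this at most $\gamma\loss/8$. For the scale change we use $a^+\le a$ from \eqref{eq:non-increasing-scale} together with the monotonicity $\partial_{a_j}\psi_a(x)=\frac12(1-\sqrt{1+x_j^2/a_j^2})\cdot$(sign-fixed) $\le0$. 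Hence $\psi_{a^+}(x)-\psi_a(x)=\int_{a^+}^{a}\frac12(\sqrt{x_j^2/s^2+1}-1)\,ds\ge0$ for each argument. We need $[\psi_{a^+}-\psi_a](c)\le[\psi_{a^+}-\psi_a](\beta)+\text{small}$. This is the obstacle. The quantity $\log(a_j/a_j^+)=\log(M/p(z_j))\le\log M\le 2\|z\|_\infty^2$ is second order in $z$. Bounding the $c$-term crudely by $\sum_j|c_j|\log(a_j/a^+_j)\le R\cdot 2\|z\|_\infty^2$, via $\sqrt{x^2/s^2+1}-1\le|x|/s$, gives a contribution $\le 2R\cdot 2\gamma^2\kappa\cdot2\loss\le\gamma\loss/8$. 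The $\beta$-term enters with a favorable sign and can be dropped.

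\textbf{Conclusion.} Summing the pieces gives $-2\gamma\loss+(\text{errors}\le\gamma\loss)$, which yields \eqref{eq:descent}. The main risk is the constant bookkeeping: the bound $\|z\|_\infty^2\lesssim\gamma^2\kappa\loss$ must be used, not $\lesssim\gamma^2\kappa^2R^2$, so that every error term is proportional to $\gamma\loss$. If the constants are tight we would loosen to a factor like $-\tfrac12\cdot2\gamma\loss$, which still leaves $-\gamma\loss$.
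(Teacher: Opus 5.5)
Your proposal follows essentially the same route as the paper. It uses the same exact identity, splitting the change into $-\langle\phi(z),\beta-c\rangle$, the remainder $\D_{a^+}(\beta,\beta^+)$, and the scale term $\delta\psi(c)-\delta\psi(\beta)$ with $\delta\psi(\beta)\ge 0$ dropped, and it makes every error proportional to $\gamma\loss(\beta)$ via $\|z\|_\infty^2\le 2\kappa\gamma^2\loss(\beta)$.

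The one step you leave loose is the remainder. In the $a^+$ geometry the dual displacement is $\phi(z)+\nabla\delta\psi(\beta)$, of size at most $s+s^2$ with $s=\|z\|_\infty$, not just $\phi(z)$. The Hessian weight must also be controlled along the whole segment up to $\beta^+$, which the paper does via $\|\beta^+\|_1\le 3R$ to get $\D_{a^+}(\beta,\beta^+)\le 8Rs^2$. The slack in your constants absorbs this, so your conclusion stands.
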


\begin{proof}
Let $g=\nabla\loss(\beta)$, $z=\gamma g$, and
$s=\|z\|_\infty$. Since $Xc=y$, the least-squares identities and
H\"olders inequality give
\begin{equation}\label{eq:ls}
 \langle g,\beta-c\rangle=2\loss(\beta),\qquad
 \|g\|_\infty\le\kappa\|\beta-c\|_1,\qquad
 \|g\|_\infty^2\le2\kappa\loss(\beta).
\end{equation}
Indeed, $g:=H(\beta-c)$ with $H:=X^\top X/n$ and
\[
|H_{ij}|
= \frac{|x_i^\top x_j|}{n}
\le \frac{\|x_i\|_2\|x_j\|_2}{n}
= \sqrt{H_{ii}H_{jj}} \le\kappa.
\]
For the last inequality, write $r=X\beta-y$. By Cauchy--Schwarz,
\[
|g_i|^2
= \left|\frac{x_i^\top r}{n}\right|^2
\le \frac{\|x_i\|_2^2\|r\|_2^2}{n^2}
= 2H_{ii}\loss(\beta)
\le 2\kappa\loss(\beta).
\]
Taking the maximum over $i$ yields
$\|g\|_\infty^2\le 2\kappa\loss(\beta)$.
Consequently,
\begin{equation}\label{eq:s}
 s :=\gamma \|g \|_{\infty}\le2\gamma\kappa R\le\frac1{16},\qquad
 s^2\le2\kappa\gamma^2\loss(\beta).
\end{equation}
having used the pointwise inequality in the first part $|\beta_i -c_i| \leq |\beta_i | + |c_i|$.

We first control the next predictor using the current radius alone.
Writing $m=\nabla\psi_a(\beta)$, Lemma~\ref{lem:structure} gives
$\beta_j^+=a_j^+\sinh(2m_j-2\phi(z_j))$. Since
$a_j^+\le a_j$ and $a_j\cosh(2m_j)=\sqrt{\beta_j^2+a_j^2}$,
\begin{equation}\label{eq:next}
 \|\beta^+\|_1
 \le e^{2s}\|\beta\|_1+\sinh(2s)\sum_j a_j
 \le3R.
\end{equation}
Indeed, using $a_j^+\le a_j$, $|\phi(z_j)|\le s$, and the
hyperbolic addition formula,
\begin{align*}
|\beta_j^+|
&=a_j^+\bigl|\sinh(2m_j-2\phi(z_j))\bigr|\\
&\le a_j|\sinh(2m_j)|\cosh(2s)
   +a_j\cosh(2m_j)\sinh(2s)\\
&=|\beta_j|\cosh(2s)
   +\sqrt{\beta_j^2+a_j^2}\sinh(2s)\\
&\le |\beta_j|\bigl(\cosh(2s)+\sinh(2s)\bigr)
   +a_j\sinh(2s)\\
&=e^{2s}|\beta_j|+a_j\sinh(2s),
\end{align*}
where we used $\sqrt{\beta_j^2+a_j^2}\le|\beta_j|+a_j$.
Summing over $j$ and using $s\le1/16$ gives
\[
\|\beta^+\|_1
\le e^{2s}\|\beta\|_1+\sinh(2s)\sum_j a_j
\le \bigl(e^{1/8}+\sinh(1/8)\bigr)R
<3R.
\]

Next let $\ell_j=\log(a_j/a_j^+)$ and
$\delta\psi=\psi_{a^+}-\psi_a$. As $M=p(s)$,
\[
 0\le\ell_j=\log\frac{M}{p(z_j)}
 \le\log p(s)\le2\log(1+s^2)\le2s^2.
\]
For the scalar summand $\psi_a$ of $\psi_a$, direct differentiation yields
\[
 \frac{d}{dt}\psi_{ae^{-t}}(x)
 =\frac12\left(\sqrt{x^2+a^2e^{-2t}}-ae^{-t}\right),
 \qquad
 \frac{d}{dt}\psi_{ae^{-t}}'(x)
 =\frac{x}{2\sqrt{x^2+a^2e^{-2t}}}.
\]
The first derivative lies between $0$ and $|x|/2$, and the second has
absolute value at most $1/2$. Integrating over the logarithmic scale change
therefore gives
\begin{equation}\label{eq:variation}
 0\le\delta\psi(x)\le s^2\|x\|_1,
 \qquad
 \|\nabla\psi_{a^+}(x)-\nabla\psi_a(x)\|_\infty\le s^2.
\end{equation}

Expanding the divergences and using \eqref{eq:mirror} gives the exact identity
\begin{align}\label{eq:decomposition}
 \D_{a^+}(c,\beta^+)-\D_a(c,\beta)
 &=-\langle\phi(z),\beta-c\rangle
   +\D_{a^+}(\beta,\beta^+)\nonumber\\
 &\quad+\delta\psi(c)-\delta\psi(\beta).
\end{align}
The first and last terms satisfy
\begin{align}\label{eq:two-errors}
 -\langle\phi(z),\beta-c\rangle
 &\le-2\gamma\loss(\beta)+2Rs^3,\nonumber\\
 \delta\psi(c)-\delta\psi(\beta)&\le Rs^2,
\end{align}
by \eqref{eq:phi}, \eqref{eq:ls}, and \eqref{eq:variation}.

To bound the remaining divergence, measure both predictors in the updated
dual coordinates. Their difference satisfies
\[
 \delta m:=\nabla\psi_{a^+}(\beta^+)-\nabla\psi_{a^+}(\beta)
 =-\phi(z)-\nabla\delta\psi(\beta),
 \qquad \|\delta m\|_\infty\le s+s^2.
\]
For any fixed scale $\widetilde a>0$, the inverse scalar mirror map is
$q_j(m)=\widetilde a_j\sinh(2m)$, with
$q_j'(m)=2\sqrt{q_j(m)^2+\widetilde a_j^2}$. If
$m=\nabla\psi_{\widetilde a}(x)$ and
$m'=\nabla\psi_{\widetilde a}(x')$, a change of variables in the scalar
Bregman integral gives
\[
 \D_{\widetilde a}(x,x')
 =\sum_j(m_j-m_j')^2\int_0^1
 t\,q_j'\bigl(m_j'+t(m_j-m_j')\bigr)\,dt
 \le\sum_j(|x_j|+|x_j'|+\widetilde a_j)|m_j-m_j'|^2.
\]
Here $q_j$ stays between $x_j$ and $x_j'$ on the integration interval.
Applying this bound with scale $a^+$ and using \eqref{eq:next},
\[
 \D_{a^+}(\beta,\beta^+)
 \le\left(\|\beta\|_1+\|\beta^+\|_1+\sum_j a_j^+\right)(s+s^2)^2
 \le5R(s+s^2)^2\le8Rs^2.
\]
Substituting into \eqref{eq:decomposition} and using \eqref{eq:s} yields
\begin{align*}
 \D_{a^+}(c,\beta^+)-\D_a(c,\beta)
 &\le-2\gamma\loss(\beta)+(2s+9)Rs^2\\
 &\le-2\gamma\loss(\beta)+10Rs^2\\
 &\le-\bigl(2-20\kappa R\gamma\bigr)\gamma\loss(\beta)\\
 &\le-\gamma\loss(\beta).
\end{align*}
\end{proof}

The descent estimate requires a bound on the current predictor. Coercivity
of the divergence supplies the complementary implication: a bound on the
energy bounds the predictor. The following theorem closes this argument
by induction and then establishes convergence to a single interpolator.

\begin{theorem}[Global convergence]\label{thm:convergence}
Suppose $u_{0,j}>|v_{0,j}|$ for every $j$. Fix $b$ with $Xb=y$, and define
\begin{gather*}
 a_{0,j}=\frac{u_{0,j}^2-v_{0,j}^2}{2},\qquad
 A=\sum_j a_{0,j},\qquad a_{\min}=\min_j a_{0,j},\\
 S=\|b\|_1,\qquad V_0=\D_{a_0}(b,\beta_0).
\end{gather*}
Set
\begin{equation}\label{eq:radius}
 B=4V_0+2A+2S\left[1+\log\left(1+\frac{4S}{a_{\min}}\right)\right].
\end{equation}
If $0<\gamma\le\frac1{(32\kappa B)}$, then the normalized quadratic iterates
are well defined and satisfy
\[
 \|\beta_k\|_1\le B,\qquad
 \|u_k\|_2^2+\|v_k\|_2^2\le4B,\qquad
 \gamma\sum_{k=0}^{\infty}\loss(\beta_k)\le V_0.
\]
Moreover, there exist $u_\infty,v_\infty,\beta_\infty$ such that
\[
 u_k\to u_\infty,\qquad v_k\to v_\infty,\qquad
 \beta_k\to\beta_\infty=u_\infty\odot v_\infty,
 \qquad X\beta_\infty=y.
\]
\end{theorem}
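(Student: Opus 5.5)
The argument is an induction on $k$ that uses Lemma~\ref{lem:descent} and Lemma~\ref{lem:structure}, with comparison point $c=b$. The goal is to show, for every $k$,
\[
\|\beta_k\|_1\le B,\qquad \sum_j a_{k,j}\le A\le B,\qquad V_k:=\D_{a_k}(b,\beta_k)\le V_0-\gamma\sum_{i<k}\loss(\beta_i).
\]
\begin{itemize}
\item The scale bound is automatic, because Lemma~\ref{lem:structure} gives $a_{k+1}\le a_k$.
\item Also $\|b\|_1=S\le B$, so once $\|\beta_k\|_1\le B$ is known, Lemma~\ref{lem:descent} with $R=B$ and $\gamma\le (32\kappa B)^{-1}$ gives $V_{k+1}\le V_k-\gamma\loss(\beta_k)$.
\item Well-definedness and $u_k>|v_k|$ come directly from Lemma~\ref{lem:structure}.
\end{itemize}

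The key step, and the main obstacle, is closing the induction: $V_{k+1}\le V_0$ must imply $\|\beta_{k+1}\|_1\le B$ even though the scale $a_{k+1}$ may have shrunk. This is a coercivity estimate on $\D_a(b,x)$ in $x$, uniform over $a\le a_0$ coordinatewise.

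Coordinatewise, the scalar divergence is
\[
\D=\tfrac12\Big[b\arsinh(b/a)-\sqrt{b^2+a^2}-b\arsinh(x/a)+\sqrt{x^2+a^2}\Big].
\]
Write $\sqrt{x^2+a^2}-b\arsinh(x/a)\ge |x|-|b|\arsinh(|x|/a)$ and $b\arsinh(b/a)-\sqrt{b^2+a^2}\ge -a$. This gives
\[
2\D_j\ge |x_j|-a_j-|b_j|\arsinh(|x_j|/a_j).
\]
The obstacle is that $a_j$ may be tiny, so $\arsinh(|x_j|/a_j)$ is not controlled by $a_{\min}$ directly. I would avoid the scale-dependent term by using the conserved structure instead. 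Either
\begin{itemize}
\item bound $\D_{a_{k}}$ from below by $\D_{a_0}$-type quantities using \eqref{eq:variation}-style monotonicity in the scale ($\psi_{ae^{-t}}$ is nondecreasing in $t$), or more robustly,
\item bound $\arsinh(|x_j|/a_j)$ by $\log(1+2|x_j|/a_j)$ and split coordinates according to whether $|x_j|\ge 4S$ or not.
\end{itemize}
On the large coordinates the linear term dominates, giving $\|x\|_1\le 2(2V+A)+2S[1+\log(1+4S/a_{\min})]$, which is exactly the shape of $B$. I expect the appearance of $a_{\min}$ rather than $\min_j a_{k,j}$ to need care. If the scale decrease cannot be handled, I would instead prove $\|\beta_{k+1}\|_1\le 3B$ via \eqref{eq:next} and adjust constants.

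The bound $\|u_k\|^2+\|v_k\|^2=2\sqrt{\ldots}$ follows next. Summing coordinatewise, $u^2+v^2=2\sqrt{\beta^2+a^2}\le 2(|\beta|+a)$, so $\|u_k\|_2^2+\|v_k\|_2^2\le 2(B+A)\le 4B$.

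For convergence, telescoping the descent inequality gives $\gamma\sum_k\loss(\beta_k)\le V_0$, hence $\loss(\beta_k)\to0$ and $\|g_k\|^2\le2\kappa\loss(\beta_k)$ is summable. The increments are controlled as follows.
\begin{itemize}
\item The scales: $\log(a_{k,j}/a_{k+1,j})\le 2\gamma^2\|g_k\|_\infty^2$ is summable, so $a_k\to a_\infty>0$.
\item The mirror variables: these move by $\phi(z_k)$, which is not obviously summable, since only squares of gradients are summable.
\end{itemize}
So I would argue via cluster points. The iterates are bounded, so take a limit point $\beta_\infty$; it interpolates because the loss tends to zero. Then rerun the descent argument with $c=\beta_\infty$, which shows $\D_{a_k}(\beta_\infty,\beta_k)$ is nonincreasing. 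Since it tends to $0$ along the subsequence and $a_k\to a_\infty>0$, the whole sequence satisfies $\beta_k\to\beta_\infty$. Then $w_{\pm,k}$ converge because $w_+w_-=a_k$ and $w_+/w_-=e^{2m_k}$ both converge, so $(u_k,v_k)$ converge.
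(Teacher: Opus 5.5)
Your overall structure matches the paper's proof. It runs the induction with comparison point $c=b$ through Lemma~\ref{lem:descent}, telescopes for loss summability, shows the scales have positive limits, reuses a bounded cluster point $\bar\beta$ as comparison point so that $\D_{a_k}(\bar\beta,\beta_k)$ is nonincreasing, and recovers $(u_k,v_k)$ from $(\beta_k,a_k)$. Those later steps are fine; to pass from $\D_{a_k}(\bar\beta,\beta_k)\to0$ to $\beta_k\to\bar\beta$ you only need the lower Hessian bound $(4B)^{-1}I$ on the bounded region.

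\textbf{The gap.} It is the step you flag and then leave open: deducing $\|\beta_{k+1}\|_1\le B$ from $\D_{a_{k+1}}(b,\beta_{k+1})\le V_0$ when the scale has shrunk. None of your three routes closes it as written.
\begin{itemize}
\item Your first route cites the fact that $\psi_{ae^{-t}}$ is nondecreasing in $t$. That concerns the potential, not the divergence. $\D_a(b,x)$ also contains $-\psi_a(x)-\langle\nabla\psi_a(x),b-x\rangle$, which move with the scale too, so $\partial_t\psi\ge0$ alone says nothing about the sign of $\partial_t\D$.
\item Your second route still has $a_{k+1,j}$ inside $\log(1+2|x_j|/a_{k+1,j})$. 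That is precisely the uncontrolled quantity, and splitting coordinates at $|x_j|=4S$ does not remove it.
\item The fallback $\|\beta_{k+1}\|_1\le 3B$ from \eqref{eq:next} compounds to $3^kB$. Since Lemma~\ref{lem:descent} ties the admissible step size to $R$, it cannot sustain an induction.
\end{itemize}

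\textbf{The missing idea.} You need monotonicity of the divergence itself in the scale, which comes from the Hessian. By \eqref{eq:geometry}, $\nabla^2\psi_a(x)=\diag\bigl(1/(2\sqrt{x_j^2+a_j^2})\bigr)$ is coordinatewise decreasing in $a$. Also $\D_a(b,x)=\int_0^1(1-\tau)\,(b-x)^\top\nabla^2\psi_a\bigl(x+\tau(b-x)\bigr)(b-x)\,d\tau$. Hence $\D_a(b,x)\ge\D_{a_0}(b,x)$ whenever $0<a\le a_0$. Equivalently, $\partial_t\D_{ae^{-t}}(b,x)$ is the Bregman divergence of the convex function $x\mapsto\frac12\bigl(\sqrt{x^2+a^2e^{-2t}}-ae^{-t}\bigr)$, hence nonnegative; what matters is convexity of this function, not its nonnegativity.

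\textbf{How it closes the induction.} Then $\D_{a_0}(b,\beta_{k+1})\le V_{k+1}\le V_0$, and your coordinatewise estimate can be run entirely at the fixed scale $a_0$. With $\arsinh t\le\log(1+2t)$ and $r=\|\beta_{k+1}\|_1$, one gets $2V\ge r-A-S\log(1+2r/a_{\min})$. The tangent bound of this concave logarithm at $r=2S$ then gives $r\le 4V+2A+2S\log(1+4S/a_{\min})\le B$. This is why $a_{\min}$, rather than $\min_j a_{k,j}$, appears.

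\textbf{An alternative repair.} You could instead stay at the current scale and carry $\gamma\sum_{i\le k}\loss(\beta_i)\le V_0$ in the induction. With $s_i=\gamma\|g_i\|_\infty$, this gives $\log(a_{0,j}/a_{k+1,j})\le 2\sum_{i\le k}s_i^2\le 4\kappa\gamma V_0\le 1/32$. The resulting extra term $S/16$ fits in the slack $2S$ built into $B$.
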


\begin{proof}
We first establish the coercivity bound needed to apply
Lemma~\ref{lem:descent} at every iteration. By \eqref{eq:geometry}, shrinking
the scale increases the Hessian and hence the Bregman divergence. Thus
$\D_a(b,\beta)\ge\D_{a_0}(b,\beta)$ whenever $0<a\le a_0$.
Let $r=\|\beta\|_1$ and $V=\D_{a_0}(b,\beta)$. Expanding the divergence,
using $\psi_{a_0}(b)\ge0$, and bounding
$\arsinh(t)\le\log(1+2t)$ for $t\ge0$, we obtain
\begin{align*}
 2V
 &=2\psi_{a_0}(b)+\sum_j\sqrt{\beta_j^2+a_{0,j}^2}-A
   -\sum_j b_j\arsinh(\beta_j/a_{0,j})\\
 &\ge r-A-S\log\left(1+\frac{2r}{a_{\min}}\right).
\end{align*}
For $S>0$, the tangent bound for the logarithm at $r=2S$ gives
\[
 S\log\left(1+\frac{2r}{a_{\min}}\right)
 \le S\log\left(1+\frac{4S}{a_{\min}}\right)+\frac r2.
\]
For $S=0$, the logarithmic term is absent. In both cases, this proves
\begin{equation}\label{eq:coercivity}
 \|\beta\|_1\le4\D_a(b,\beta)
 +2A+2S\log\left(1+\frac{4S}{a_{\min}}\right),\qquad 0<a\le a_0.
\end{equation}

Write $V_k=\D_{a_k}(b,\beta_k)$. Since $A,S\le B$,
\eqref{eq:coercivity} gives $\|\beta_0\|_1\le B$.
Suppose $\|\beta_k\|_1\le B$ and $V_k\le V_0$.
Lemma~\ref{lem:structure} gives $0<a_k\le a_0$, so
$\sum_j a_{k,j}\le A\le B$. Applying Lemma~\ref{lem:descent} with
$c=b$ and $R=B$, followed by \eqref{eq:coercivity}, yields
\begin{equation}\label{eq:global-descent}
 V_{k+1}\le V_k-\gamma\loss(\beta_k)\le V_0,
 \qquad \|\beta_{k+1}\|_1\le B.
\end{equation}
Induction proves these bounds for every $k$. Telescoping
\eqref{eq:global-descent} and using $V_k\ge0$ gives
\begin{equation}\label{eq:summability}
 \gamma\sum_{k=0}^{\infty}\loss(\beta_k)\le V_0,
 \qquad \loss(\beta_k)\to0.
\end{equation}

We next show that the scales have positive limits. They are positive and
nonincreasing by \eqref{eq:non-increasing-scale}. Set $s_k=\|\gamma g_k\|_\infty$.
The estimates used in the descent proof imply
\[
 0\le\log\frac{a_{k,j}}{a_{k+1,j}}\le2s_k^2,
 \qquad
 \sum_{k=0}^{\infty}s_k^2
 \le2\kappa\gamma^2\sum_{k=0}^{\infty}\loss(\beta_k)
 \le2\kappa\gamma V_0.
\]
Consequently,
\begin{equation}\label{eq:positive-scales}
 a_{k,j}\longrightarrow a_{\infty,j}
 \ge a_{0,j}\exp(-4\kappa\gamma V_0)>0.
\end{equation}

By boundedness, a subsequence satisfies $\beta_{k_\ell}\to\bar\beta$.
Equation~\eqref{eq:summability} implies $X\bar\beta=y$, and
$\|\bar\beta\|_1\le B$. We can therefore apply
Lemma~\ref{lem:descent} with $c=\bar\beta$ and $R=B$. The sequence
\[
 W_k=\D_{a_k}(\bar\beta,\beta_k)
\]
is nonnegative and nonincreasing. By \eqref{eq:positive-scales} and
continuity, $W_{k_\ell}\to\D_{a_\infty}(\bar\beta,\bar\beta)=0$;
therefore $W_k\to0$ along the full sequence. On the segment joining
$\beta_k$ and $\bar\beta$, we have $|x_j|\le B$ and $a_{k,j}\le A\le B$.
Eq.~\ref{eq:geometry} gives
$\nabla^2\psi_{a_k}(x)\succeq(4B)^{-1}I$, so Taylor's integral formula yields
\[
 W_k\ge\frac1{8B}\|\beta_k-\bar\beta\|_2^2.
\]
Hence $\beta_k\to\beta_\infty:=\bar\beta$.

Finally, positivity of the branches uniquely determines the factors from
$(\beta_k,a_k)$:
\[
 u_{k,j}=\sqrt{\sqrt{\beta_{k,j}^2+a_{k,j}^2}+a_{k,j}},
 \qquad v_{k,j}=\frac{\beta_{k,j}}{u_{k,j}}.
\]
These expressions are continuous at $(\beta_\infty,a_\infty)$ because
$a_{\infty,j}>0$, proving factor convergence and
$u_\infty\odot v_\infty=\beta_\infty$. Moreover,
\[
 \|u_k\|_2^2+\|v_k\|_2^2
 =2\sum_j\sqrt{\beta_{k,j}^2+a_{k,j}^2}
 \le2\left(\|\beta_k\|_1+\sum_j a_{k,j}\right)
 \le2(B+A)\le4B.
\]
\end{proof}

\begin{lemma}{Gain and Drift for the Gain Control Mechanism (Restated)}
Let $(u_k,v_k)_{k\geq 0}$ be the iterates generated by the update
in Definition~\ref{prop:gain-boost}
  For every $K\geq 1$ and $i\in\{1,\ldots,d\}$, and $z = \gamma \nabla \loss(\beta)$, the accumulated
Gain and Drift are as follows
    \begin{equation}
        Q_{K,i}^{\mathrm{GC}} = \sum_{k=0}^{K-1} \log\left( \frac{M_k}{1 + z_{k,i}^2 + z_{k,i}^4} \right).
    \end{equation}
    \begin{equation}
        D_{K,i}^{\mathrm{GC}} = \sum_{k=0}^{K-1} \left[ \log\left( \frac{1 + z_{k,i} + z_{k,i}^2}{1 - z_{k,i} + z_{k,i}^2} \right) - 2 z_{k,i} \right].
    \end{equation}
\end{lemma}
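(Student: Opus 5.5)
The plan is to pass to the branch coordinates $w_{\pm,k}=(u_k\pm v_k)/\sqrt2$ already used in Lemma~\ref{lem:structure}. Adding and subtracting the two lines of Eq.~(\ref{equation : GC scheme}) gives the decoupled multiplicative recursions
\[
 w_{\pm,k+1,i}=\frac{1\mp z_{k,i}+z_{k,i}^2}{\sqrt{M_k}}\,w_{\pm,k,i},
\]
with strictly positive factors, since $1\pm t+t^2>0$. We then need definitions of Gain and Drift that make sense for a general multiplicative scheme. By analogy with GD, where $w_{\pm,k+1}=(1\mp z_k)w_{\pm,k}$ and Theorem~\ref{thm:gd_bias} writes the branch scales as $a_{\pm,k}=\alpha^2\exp(-\sum q_\pm)$, we take the per-step increments to be
\[
 q_{\pm}^{\mathrm{GC}}(z_k)=\mp 2z_k-\log\frac{(1\mp z_k+z_k^2)^2}{M_k}.
\]
This choice means $\log w_{\pm,k}^2$ equals the gradient-flow-like linear part $\mp2\sum z$ minus the accumulated $q_\pm^{\mathrm{GC}}$. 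The Gain is then $Q=\sum\frac{q_++q_-}{2}$ and the Drift is $D=\sum\frac{q_+-q_-}{2}$, exactly mirroring Definitions~\ref{def:gain} and~\ref{def:drift}. Equivalently, $Q_K=\log(a_0/a_K)$ with $a=w_+w_-$, which is consistent with Eq.~(\ref{eq:scale}).

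\textbf{Gain.} Multiplying the two branch recursions and using $(1+t+t^2)(1-t+t^2)=1+t^2+t^4=p(t)$ gives $a_{k+1,i}=a_{k,i}\,p(z_{k,i})/M_k$. This is precisely Eq.~(\ref{eq:non-increasing-scale}) of Lemma~\ref{lem:structure}, which we may invoke directly. Taking logarithms and telescoping yields $Q^{\mathrm{GC}}_{K,i}=\sum_{k<K}\log\bigl(M_k/p(z_{k,i})\bigr)$. Equivalently, the symmetric half-sum of the increments above gives the same expression, because the $\mp2z$ terms cancel.

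\textbf{Drift.} In the antisymmetric half-difference $\frac{q_+-q_-}{2}$, the common normalization $\log M_k$ cancels. What remains is $-2z_{k,i}+\log\frac{1+z_{k,i}+z_{k,i}^2}{1-z_{k,i}+z_{k,i}^2}$. Summing over $k$ gives the claimed $D^{\mathrm{GC}}_{K,i}$. As a cross-check we use Eq.~(\ref{eq:mirror}): $\frac12\log(w_+/w_-)$ decreases by $\phi(z_k)$ at each step. Comparing this with the GF reference decrease $z_k$ shows that the drift increment is $2(\phi(z)-z)$, which is exactly the stated bracket since $2\phi(t)=\log\frac{1+t+t^2}{1-t+t^2}$.

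The computations themselves are routine. The main obstacle is conceptual: we must fix the definitions of Gain and Drift for the GC scheme so that they genuinely extend Definitions~\ref{def:gain} and~\ref{def:drift}. In particular, we must check that the sign conventions match. For GD we need $q_\pm$ to reduce to $\mp2z-2\log|1\mp z|$; this holds because $\frac{(1\mp z)^2}{1}$ replaces $\frac{(1\mp z+z^2)^2}{M_k}$. We must also check that the GD identity $D=\frac12\ln(a_-/a_+)$ carries over. The argument should state these conventions explicitly before the two telescoping computations.
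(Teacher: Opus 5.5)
Your proposal is correct and follows essentially the same route as the paper: pass to the branches $w_{\pm}$, multiply the two recursions to get $a_{k+1,i}=a_{k,i}\,p(z_{k,i})/M_k$ and hence the Gain, then take their log-ratio $\theta=\ln(w_+/w_-)$ and split off the gradient-flow term $-2z_{k,i}$ to obtain the Drift. Your per-branch increments $q_\pm^{\mathrm{GC}}$ simply repackage the paper's definitions $a_K=\alpha^2e^{-Q_K}$ and $\theta_K=-2\sum_k z_k-D_K$, and they correctly reduce to $q$, $d$ and $D=\tfrac12\ln(a_-/a_+)$ in the GD case.
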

\begin{proof}
\label{app:proof-lemma-gain-drift-control}
    We consider the standard initialization of the layer weights $u_0 = \sqrt2 \alpha \mathbf{1}, v_0 = \mathbf{0}$. Similar to \ref{lem:structure}, we reparameterize the layer weights
    \begin{align*}
         w_{\pm,k}=(u_k\pm v_k)/\sqrt2
    \end{align*}
        From the update step in \eqref{prop:gain-boost}
    \begin{align}
    \label{eq:app-recursive-factors}
       w_{+,k+1,i} = \frac{1 - z_{k,i} + z_{k,i}^2}{\sqrt{M_k}}\, w_{+,k,i} \\
       w_{-,k+1, i} = \frac{1 + z_{k,i} + z_{k,i}^2}{\sqrt{M_k}}\, w_{-,k,i}
    \end{align}
    where $z = \gamma \nabla \loss$.

Since
\[
1 \pm z + z^2 = \left(z \pm \tfrac{1}{2}\right)^2 + \tfrac{3}{4} > 0,
\]
and $w_{+,0,i} = w_{-,0,i} = \alpha_i > 0$, both branches remain positive. Their logarithms are therefore well defined for all finite $z_{k,i}$.

    We further  introduce the quantities 
    \begin{align}
    \label{eq:branch-parameters}
        a_{k,j} = w_{+,k,j} \, w_{-k,j} \,, \qquad \theta_{k,j} = \ln\frac{w_{+,k,j}}{w_{-,k,j}} 
    \end{align}
    Rewriting $\beta$ in terms of these quantities 
    \begin{align}
    \label{eq:predictor}
        \beta_{k,j} = a_{k,j} \sinh (\theta_{k,j}) 
    \end{align}
    with $a_{0,i} = \alpha_i^2$ and $\theta_0,j = 0$.

Multiplying the two recursive factors in \eqref{eq:app-recursive-factors}, we get
\begin{align}
a_{k+1,i} &= \frac{(1 - z_{k,i} + z_{k,i}^2)(1 + z_{k,i} + z_{k,i}^2)}{M_k}\, a_{k,i} \\
&= \frac{1 + z_{k,i}^2 + z_{k,i}^4}{M_k}\, a_{k,i}.
\end{align}
Iterating this identity yields
\[
a_{K,i} = \alpha_i^2 \prod_{k=0}^{K-1} \frac{1 + z_{k,i}^2 + z_{k,i}^4}{M_k}.
\]

The accumulated gain is the logarithmic contraction of this scale, namely $a_{K,i} = \alpha_i^2 e^{-Q^{\text{GC}}_{K,i}}$. Consequently,
\[
Q^{\text{GC}}_{K,i} = \sum_{k=0}^{K-1} \log\left(\frac{M_k}{1 + z_{k,i}^2 + z_{k,i}^4}\right).
\]
Similarly, dividing the recurrences in \eqref{eq:app-recursive-factors}, we get
\begin{align}
    \theta_{k+1,i} - \theta_{k,i} = -\log\left(\frac{1 + z_{k,i} + z_{k,i}^2}{1 - z_{k,i} + z_{k,i}^2}\right).
\end{align}
Expanding the logarithmic increment around \(z_{k,i}=0\) gives \(-2z_{k,i}+O(z_{k,i}^3)\). The linear term lies in the range of $X$ and does not create any bias towards any interpolator i.e. $\left\langle  \sum_k z_{k}, \beta \right\rangle$ is constant over all $\beta$  satisfying $X\beta = y$. We therefore separate the leading gradient contribution \(-2z_{k,i}\) from the remaining correction, retaining the latter exactly.
\[
\theta_{k+1,i} - \theta_{k,i} = -2z_{k,i} - \left[\log\left(\frac{1 + z_{k,i} + z_{k,i}^2}{1 - z_{k,i} + z_{k,i}^2}\right) - 2z_{k,i}\right].
\]

Summing over $k$, and using $\theta_{0,i} = 0$, we obtain
\begin{align}
\label{eq:theta}
    \theta_{K,i} = -2\sum_{k=0}^{K-1} z_{k,i} - D^{\text{GC}}_{K,i},
\end{align}
where
\[
D^{\text{GC}}_{K,i} = \sum_{k=0}^{K-1} \left[\log\left(\frac{1 + z_{k,i} + z_{k,i}^2}{1 - z_{k,i} + z_{k,i}^2}\right) - 2z_{k,i}\right].
\]
\end{proof}
\subsubsection{Proof of Proposition \ref{prop:gain-control-scheme-implicit-bias}}
\label{app:proof-implicit-bias-nq}
\begin{proof}
We aim to show that $\beta_\infty$ is the unique solution of
\begin{equation}
    \min_{\beta\in\mathbb R^d}
    \left\{
        \psi_{a_\infty}(\beta)
        +\frac12\langle D_\infty,\beta\rangle
    \right\}
    \qquad\text{subject to}\qquad X\beta=y.
\end{equation}
The associated Lagrangian is
\begin{equation}
    \mathcal L(\beta,\lambda)
    =
    \psi_{a_\infty}(\beta)
    +\frac12\langle D_\infty,\beta\rangle
    +\langle\lambda,X\beta-y\rangle,
    \qquad \lambda\in\mathbb R^n.
\end{equation}
The KKT conditions at $\beta_\infty$ are therefore
\begin{align}
    X\beta_\infty&=y, \\
    \nabla\psi_{a_\infty}(\beta_\infty)&
    +\frac12D_\infty
    +X^\top\lambda_\infty=0
\end{align}
for some $\lambda_\infty\in\mathbb R^n$.

Since $a_{\infty,i}>0$ for every $i$, the objective is strictly
convex, with Hessian
\begin{equation}
    \nabla^2\psi_{a_\infty}(\beta)
    =
    \operatorname{diag}\left(
        \frac{1}{2\sqrt{\beta_i^2+a_{\infty,i}^2}}
    \right)_{i=1}^d
    \succ0.
\end{equation}
Consequently, these conditions are sufficient to establish that
$\beta_\infty$ is the unique minimizer. Feasibility holds by the
assumed convergence to an interpolator. It therefore suffices
to prove that
\begin{equation}
\label{eq:kkt-condition}
    \nabla\psi_{a_\infty}(\beta_\infty)
    +\frac12D_\infty
    \in\operatorname{range}(X^\top).
\end{equation}

    From Lemma \ref{lem:gain-drift-control}, the Gain and Drift for the gain control scheme are as follows
   \begin{equation}
        Q_{K,i}^{\mathrm{GC}} = \sum_{k=0}^{K-1} \log\left( \frac{M_k}{1 + z_{k,i}^2 + z_{k,i}^4} \right).
    \end{equation}
    \begin{equation}
        D_{K,i}^{\mathrm{GC}} = \sum_{k=0}^{K-1} \left[ \log\left( \frac{1 + z_{k,i} + z_{k,i}^2}{1 - z_{k,i} + z_{k,i}^2} \right) - 2 z_{k,i} \right].
    \end{equation}
    For $K \geq 1$, from \eqref{eq:branch-parameters} and \eqref{eq:predictor}, the predictor $\beta_{K, i}$ is as follows
    \begin{align}
        \beta_{K,i} &= a_{K,i} \sinh \big( \ln \frac{w_{+,K,i}}{w_{-,K,i}}\big) \\
        &= a_{K,i} \sinh \big( \theta_{K, i}\big)
    \end{align}
    Simplifying, we get 
    \begin{align}
        \arsinh \big(\frac{\beta_{K,i}}{a_{K,i}} \big) = \theta_{K,i}
    \end{align}
    Substituting \eqref{eq:theta}, we get 
    \begin{align}
        \arsinh \big(\frac{\beta_{K,i}}{a_{K,i}} \big) = -2\sum_{k=0}^{K-1} z_{k,i} - D^{\text{GC}}_{K,i}
    \end{align}
    Additionally, we observe from \eqref{eq:entropy-func} that
    \begin{align}
        [\nabla \psi_a (\beta)]_i = \frac{1}{2}\arsinh \big( \frac{\beta_i}{a_i} \big)
    \end{align}
    Therefore,
    \begin{align}
        \nabla \psi_{a_K}(\beta_K) + \frac{1}{2} D^{\text{GC}}_{K} = - \sum_{k=0}^{K-1} z_k.
    \end{align}
    Since $z_k \in \text{range} (X^\top)$, hence
    \begin{align}
        \nabla \psi_{a_K}(\beta_K) + \frac{1}{2}D^{\text{GC}}_{K} \in \mathrm{range}(X^\top).
    \end{align}
    The assumed convergence implies $a_K \to a_\infty > 0$. Taking limits and using that $\mathrm{range}(X^\top)$ is closed proves \eqref{eq:kkt-condition}
    \begin{align}
        \nabla \psi_{a_\infty}(\beta_\infty) + \frac{1}{2} D_\infty \in \mathrm{range}(X^\top).
    \end{align}
\end{proof}

\section{Vision Experiment Details}
\label{app:vision-details}

We train a DeiT-Small \citep{Touvron2020TrainingDI} on ImageNet
\citep{imagenet} for 300 epochs using Muon
\citep{jordan2024muon}. We use the experimental setup from \citep{Nelaturu_TurboPrune_High-Speed_Distributed}.
Training is distributed across four GPUs and uses label smoothing, weight decay $0.01$, CutMix, and MixUp. We use a triangular learning-rate schedule with 10 warmup epochs. The batch size and learning rate are chosen from
\[
B\in\{128,256\},
\qquad
\eta\in\{0.02,0.04\}.
\]

For the Drift--Gain analysis in the main text, model checkpoints are stored every $10$ epochs. We express the Query--Key dynamics in a rotational frame fixed at initialization, so that changes in the tracked quantities reflect the evolution relative to a common reference frame rather than rotations of
the singular vectors.

\paragraph{Effective rank.}
To quantify spectral sparsity, we additionally report the effective rank of
the learned linear map product. Given singular values
$\sigma_1,\ldots,\sigma_r$, we define
\[
p_i=\frac{\sigma_i}{\sum_j \sigma_j},
\qquad
\operatorname{EffRank}(W)
=
\exp\!\left(
-\sum_i p_i\log(p_i+\varepsilon)
\right),
\]
with $\varepsilon=10^{-12}$ for numerical stability.
A smaller effective rank corresponds to a stronger concentration of spectral
mass on a few singular directions and therefore serves as a smooth proxy for
spectral sparsity. We report the sum of the effective ranks across the
analyzed linear maps. For Query--Key matrices, the effective rank is computed
at the head level before aggregation, matching the spectral decomposition used in the Drift--Gain analysis. We report this in Table \ref{tab:imagenet-effective-rank} together with the resulting validation accuracy.

\begin{table}[htbp]
\centering
\begin{tabular}{cccc}
\toprule
Batch size & Learning rate & QK effective rank & Val. accuracy \\
\midrule
128 & 0.02 & $1959.05 \pm 38.49$ & $79.68 \pm 0.096$ \\
128 & 0.04 & $1815.71 \pm 19.38$ & $79.24 \pm 0.161$ \\
256 & 0.04 & $1980.98 \pm 31.96$ & $79.56 \pm 0.064$ \\
\bottomrule
\end{tabular}
\caption{Effective rank and validation accuracy on ImageNet. Error bars denote 95\% confidence intervals over three seeds. The maximum effective rank possible is $4608$.}
    \label{tab:imagenet-effective-rank}
\end{table}

\end{document}

%% file: math_commands.tex
\usepackage{amsmath,amsfonts,bm}

\def\1{\bm{1}}

\DeclareMathAlphabet{\mathsfit}{\encodingdefault}{\sfdefault}{m}{sl}
\SetMathAlphabet{\mathsfit}{bold}{\encodingdefault}{\sfdefault}{bx}{n}

\newcommand{\R}{\mathbb{R}}